    \definecolor{DarkRed}{rgb}{0.368,0.097,0.078}
\definecolor{DarkBlue}{rgb}{0.2,0.2,0.6}
\declaretheoremstyle[
	    spaceabove=\topsep, 
	    spacebelow=\topsep, 
	    headfont=\normalfont\bfseries,
	    bodyfont=\normalfont\itshape,
	    notefont=\normalfont\bfseries,
	    notebraces={(}{)},
	    postheadspace=0.5em, 
	    headpunct={},
	    postfoothook=\noindent\ignorespaces
    ]{theorem}
\declaretheorem[style=theorem,numberwithin=section]{theorem}
\declaretheoremstyle[
	    spaceabove=\topsep, 
	    spacebelow=\topsep, 
	    headfont=\normalfont\bfseries,
	    bodyfont=\normalfont,
	    notefont=\normalfont\bfseries,
	    notebraces={(}{)},
	    postheadspace=0.5em, 
	    headpunct={},
	    postfoothook=\noindent\ignorespaces
    ]{definition}
\declaretheoremstyle[
        spaceabove=\topsep, 
        spacebelow=\topsep, 
        headfont=\normalfont\bfseries,
        bodyfont=\normalfont,
        notefont=\normalfont\bfseries,
        notebraces={}{},
        postheadspace=0.5em, 
        qed=$\blacksquare$, 
        headpunct={},
        postfoothook=\noindent\ignorespaces
    ]{proofstyle}
\declaretheorem[style=proofstyle,numbered=no,name=Proof]{proof}
\declaretheorem[style=theorem,sibling=theorem,name=Lemma]{lemma}
\declaretheorem[style=theorem,sibling=theorem,name=Proposition]{proposition}
\declaretheorem[style=theorem,numbered=no,name=Theorem]{theorem*}
\declaretheorem[style=theorem,numbered=no,name=Lemma]{lemma*}
\declaretheorem[style=theorem,numbered=no,name=Corollary]{corollary*}
\declaretheorem[style=theorem,numbered=no,name=Proposition]{proposition*}
\declaretheorem[style=theorem,numbered=no,name=Claim]{claim*}
\declaretheorem[style=theorem,numbered=no,name=Fact]{fact*}
\declaretheorem[style=theorem,numbered=no,name=Observation]{observation*}
\declaretheorem[style=theorem,numbered=no,name=Conjecture]{conjecture*}
\declaretheorem[style=definition,sibling=theorem,name=Definition]{definition}
\declaretheorem[style=definition,sibling=theorem,name=Remark]{remark}
\declaretheorem[style=definition,numbered=no,name=Definition]{definition*}
\declaretheorem[style=definition,numbered=no,name=Remark]{remark*}
\declaretheorem[style=definition,numbered=no,name=Example]{example*}
\declaretheorem[style=definition,numbered=no,name=Question]{question*}
\DeclareMathOperator{\E}{\mathbb{E}}
\newcommand{\regret}{\mathcal{R}}
\newcommand{\lr}[1]{\mathopen{}\left(#1\right)}
\newcommand{\LR}[1]{\mathopen{}\Big(#1\Big)}
\newcommand{\Lrbra}[1]{\mathopen{}\big[#1\big]}
\newcommand{\LRbra}[1]{\mathopen{}\Big[#1\Big]}
\newcommand{\lrset}[1]{\mathopen{}\left\{#1\right\}}
\newcommand{\Lrset}[1]{\mathopen{}\big\{#1\big\}}
\newcommand{\LRset}[1]{\mathopen{}\Big\{#1\Big\}}
\DeclareMathOperator*{\cArgmin}{arg\!\min}
\newcommand{\context}{x}
\newcommand{\outcome}{y}
\newcommand{\scale}{\alpha}
\newcommand{\threshold}{\delta}
\newcommand{\prediction}{\hat p}
\newcommand{\optprediction}{{\hat p}^*}
\newcommand{\nmlprediction}{p_{nml}}
\newcommand{\depth}{d}
\newcommand{\probden}{p}
\newcommand{\witness}{s}
\newcommand{\subprobden}{p}
\newcommand{\contexttree}{\mathbf x}
\newcommand{\contextseq}[2]{x_{#1 : #2}}
\newcommand{\probtree}{\mathbf p}
\newcommand{\outcomeseq}[2]{y_{#1 : #2}}
\newcommand{\predictionseq}[2]{\hat p_{#1 : #2}}
\newcommand{\absoutcomeseq}{\mathbf y}
\newcommand{\abscontextseq}{\mathbf x}
\newcommand{\witnesstree}{\mathbf s}
\newcommand{\hpclass}{\mathcal F}
\newcommand{\linclass}{\mathcal F^{\mathrm{Lin}}}
\newcommand{\abslinclass}{\mathcal F^{\mathrm{AbsLin}}}
\newcommand{\trunclass}[1]{\mathcal F^{#1}}
\newcommand{\contextspace}{\mathcal X}
\newcommand{\outcomespace}{\mathcal Y}
\newcommand{\coverset}[2]{V_{#1, #2}}
\newcommand{\smoothcoverset}[2]{\tilde V_{#1, #2}}
\newcommand{\glbcoverset}[1]{\mathcal G_{#1}}
\newcommand{\probsimplex}[1]{\Delta(#1)}
\newcommand{\positivesimplex}[1]{\Delta^{+}(#1)}
\newcommand{\trunsimplex}{\Delta^{\threshold}(\outcomespace)}
\newcommand{\binaryspace}{\{0,1\}}
\newcommand{\spclass}{\mathcal P}
\newcommand{\logloss}[2]{\ell(#1, #2)}
\newcommand{\fdregret}{\mathcal R^{\mathrm{FD}}}
\newcommand{\seqrad}{\mathfrak R}
\newcommand{\shtarkov}[3]{S_{#1}(#2|#3)}
\newcommand{\likelihood}[3]{P_{#1}(#2|#3)}
\newcommand{\prefixshtarkov}[4]{S^{#4}_{#1}(#2|#3)}
\newcommand{\classlikelihood}[2]{P_{#1}(#2)}
\newcommand{\classshtarkov}[2]{S_{#1}(#2)}
\newcommand{\genshtarkov}[1]{S(#1)}
\def\[#1\]{\begin{equation}\begin{aligned}#1\end{aligned}\end{equation}}
\def\*[#1\]{\begin{equation*}\begin{aligned}#1\end{aligned}\end{equation*}}
\newcommand{\seqcoveringnum}[3]{\mathcal N_{\infty}(#1, #2, #3)}
\newcommand{\glbcoveringnum}[3]{\mathcal N_G(#1, #2, #3)}
\newcommand{\seqentropy}[3]{\mathcal H_{\infty}(#1, #2, #3)}
\newcommand{\glbseqentropy}[3]{\mathcal H_G(#1, #2, #3)}
\newcommand{\absentropy}[3]{\mathcal H(#1, #2, #3)}
\newcommand{\seqfat}[2]{\mathrm{sfat}_{#1}(#2)}
\newcommand{\sign}{\mathrm{sgn}}
\newcommand{\truncate}[1]{\tau_{#1}}
\DeclareFontFamily{OMX}{MnSymbolE}{}
\DeclareSymbolFont{MnLargeSymbols}{OMX}{MnSymbolE}{m}{n}
\DeclareFontShape{OMX}{MnSymbolE}{m}{n}{
    <-6>  MnSymbolE5
   <6-7>  MnSymbolE6
   <7-8>  MnSymbolE7
   <8-9>  MnSymbolE8
   <9-10> MnSymbolE9
  <10-12> MnSymbolE10
  <12->   MnSymbolE12
}{}
\DeclareFontShape{OMX}{MnSymbolE}{b}{n}{
    <-6>  MnSymbolE-Bold5
   <6-7>  MnSymbolE-Bold6
   <7-8>  MnSymbolE-Bold7
   <8-9>  MnSymbolE-Bold8
   <9-10> MnSymbolE-Bold9
  <10-12> MnSymbolE-Bold10
  <12->   MnSymbolE-Bold12
}{}
\let\llangle\@undefined
\let\rrangle\@undefined
\DeclareMathDelimiter{\llangle}{\mathopen}%
                     {MnLargeSymbols}{'164}{MnLargeSymbols}{'164}
\DeclareMathDelimiter{\rrangle}{\mathclose}%
                     {MnLargeSymbols}{'171}{MnLargeSymbols}{'171}
\newcommand{\seqopt}[1]{\left\llangle #1 \right\rrangle}
\DeclareMathOperator*{\opt}{Opt}
\title{Sequential Probability Assignment with Contexts: Minimax Regret, Contextual Shtarkov Sums, and Contextual Normalized Maximum Likelihood}
\author{
    Ziyi Liu \thanks{Department of Statistical Sciences,
University of Toronto and Vector Institute; 
\texttt{kevind.liu@mail.utoronto.ca}.}
    \and Idan Attias \thanks{Department of Computer Science, Ben-Gurion University and Vector Institute; \texttt{idanatti@post.bgu.ac.il}.} 
    \and Daniel M. Roy\thanks{Department of Statistical Sciences,
University of Toronto and Vector Institute; \texttt{daniel.roy@utoronto.ca}.}
}
\begin{document}
\maketitle
\begin{abstract}
    We study the fundamental problem of sequential probability assignment, also known as online learning with logarithmic loss, with respect to an arbitrary, possibly nonparametric hypothesis class. Our goal is to obtain a complexity measure for the hypothesis class that characterizes the minimax regret and to determine a general, minimax optimal algorithm. Notably, the sequential $\ell_{\infty}$ entropy, extensively studied in the literature (Rakhlin and Sridharan, 2015, Bilodeau et al., 2020, Wu et al., 2023),
    was shown to not characterize minimax risk in general. Inspired by the seminal work of Shtarkov (1987)
    and Rakhlin, Sridharan, and Tewari (2010),
    we introduce a novel complexity measure, the \emph{contextual Shtarkov sum}, corresponding to the Shtarkov sum after projection onto a multiary context tree,
    and show that the worst case log contextual Shtarkov sum equals the minimax regret. Using the contextual Shtarkov sum, we derive the minimax optimal strategy, dubbed \emph{contextual Normalized Maximum Likelihood} (cNML). 
    Our results hold for sequential experts, beyond binary labels, which are settings rarely considered in prior work. 
    To illustrate the utility of this characterization, we provide a short proof of a new regret upper bound in terms of sequential $\ell_{\infty}$ entropy, unifying and sharpening state-of-the-art bounds by Bilodeau et al. (2020) and Wu et al. (2023).
\end{abstract}
\section{Introduction}

Sequential probability assignment is a fundamental problem with connections to information theory \citep{rissanen1984universal,merhav1998universal,xie2000asymptotic}, 
machine learning \citep{cesa2006prediction,vovk1995game, rakhlin2015bsequential,foster2018logistic,shamir2020logistic}, 
and 
portfolio optimization \citep{kelly1956new,cover1974universal,cover1991universal,cover1996universal,feder1991gambling}. In the original non-contextual setup, the learner aims to assign probabilities to a series of labels, which are revealed sequentially. The goal is to offer probabilistic forecasts over the label set such that the probability assigned to any observed sequence is comparable to that assigned by the best model in any fixed class of models.

The celebrated work of \citet{Sht87} characterized minimax regret for context-free sequential probability assignment in terms of what is now known as the \emph{Shtarkov sum}, and subsequently described the minimax algorithm,
\emph{Normalized Maximum Likelihood} (NML).
NML represents the ideal probabilistic forecast in the sense of minimax regret, providing a benchmark for universal coding and prediction strategies. While often not used directly due to its computational complexity, NML has guided the design of practical algorithms and informed the development of efficient approximation methods. The principles underlying NML have inspired advances in both information theory and online learning, establishing fundamental limits and serving as critical benchmarks for performance evaluation.

In this work, we study the problem of sequential probability assignment with contexts,
which has been analyzed in recent works (e.g. \citep{rakhlin2015sequential, bilodeau2020tight, wu2023regret}) under the framework of online supervised learning formalized by \citet{rakhlin2010online}. In this setup, the problem is modeled as a $T$-round game between a \emph{learner} and the \emph{nature}: On each round $t=1,\dots, T$, the learner observes a context $\context_t$ from nature and predicts a distribution $\prediction_t$ over some finite label space $\outcomespace$. Then nature reveals a label $\outcome_t\in\outcomespace$ and the learner incurs a logarithmic loss $\logloss{\prediction_t}{\outcome_t}=-\log(\prediction_t(\outcome_t))$,
where $\prediction_t(\outcome_t)$ is the probability assigned to label $\outcome_t$ by $\prediction_t$. 
The performance of the learner is measured by the \emph{regret} with respect to a class $\hpclass$ of \emph{experts}, defined as the difference between the total loss of the learner and that of the best expert in $\hpclass$. The value of primary interest is the \emph{minimax regret}, that is, the worst-case regret by the best learner over arbitrarily adaptive data sequences. The minimax regret serves as a benchmark for all algorithms and as a target for studies of adaptivity. Our goal is to address several fundamental questions:
\begin{center}
\emph{Can we find a natural
complexity measure of $\hpclass$ that characterizes the minimax regret, enabling us to analyze the minimax regret in new ways? 
And can we identify, 
in view of this complexity measure, a general, minimax optimal algorithm?}
\end{center}

Notably, the sequential covering number of $\hpclass$, a well studied measure of complexity, has been shown not to characterize the minimax regret on its own \citep{rakhlin2015sequential,bilodeau2020tight,wu2023regret}. This fact distinguishes sequential probability assignment and log loss: while sequential covering numbers enable a tight analysis in online learning problems with convex Lipschitz losses, like absolute loss \citep{rakhlin2015bsequential} and square loss \citep{rakhlin2014online}, they do not yield minimax rates for log loss on some classes. 
Tackling such classes evidently requires new techniques.

\paragraph{Main contributions.} 
\begin{enumerate}[leftmargin=1.5em]
    \item We introduce a new complexity measure, which we call the \emph{contextual Shtarkov sum},
    that serves as a natural generalization of the Shtarkov sum from the context-free setting. We show that the minimax regret  
    is characterized by the worst-case contextual Shtarkov sum.
    \item We derive the minimax optimal algorithm, dubbed \emph{contextual Normalized Maximum Likelihood} (cNML), using a data-dependent variant of the contextual Shtarkov sum, thereby generalizing NML from the context-free setting.
    \item We apply contextual Shtarkov sums to the study of sequential entropy bounds on minimax. 
    Doing so, we provide a short proof of a new regret upper bound in terms of sequential entropy that unifies and even \emph{improves} on state-of-the-arts bounds by \citep{bilodeau2020tight} and \citep{wu2023regret}. Our results extend beyond the binary label setting studied by recent work to arbitrary finite label sets.
\end{enumerate}
\paragraph{Related work.}
Sequential probability assignment has been studied extensively. Various aspects of this problem have been investigated, including sequences with and without side information (contexts), parametric and nonparametric hypothesis classes, and stochastic or adversarial data-generating mechanisms.
This problem has a long history in the machine learning community, see \citep[Ch. 9]{cesa2006prediction} and the references therein. 
 In the information theory community, this problem is also known as universal prediction \citep{merhav1998universal}, where the regret is also referred to as redundancy with respect to a set of codes. This has been studied in both stochastic and adversarial settings \citep{freund1996predicting,rissanen1986complexity,rissanen1996fisher,Sht87,xie1997minimax,drmota2004precise,merhav1998universal,orlitsky2004speaking,shamir2006mdl,szpankowski1998asymptotics}, where the focus was primarily on the parametric classes of experts. 
 Closely related topics include universal source
coding \citep{kolmogorov1965three,solmonoff1964formal,fitingof1966optimal,Davisson1973UniversalNC}, data compression with arithmetic coding \citep{rissanen1976generalized,rissanen1981universal,ziv1977universal,ziv1978compression,feder1992universal}, and the minimum description length (MDL) principle \citep{rissanen1978modeling,rissanen1984universal,rissanen1987stochastic,barron1998minimum,grunwald2005minimum,hansen2001model}.
Lastly, this topic is intimately tied with sequential gambling and portfolio optimization, as pointed out by \citep{kelly1956new,cover1974universal,cover1991universal,cover1996universal,feder1991gambling}.

A classical result in context-free sequential
probability assignment is that the minimax regret is equal to the log contextual Shtarkov sum \citep{Sht87}, and the minimax algorithm is the well-known Normalized Maximum Likelihood. When the set of contexts is known in advance to the forecaster, namely, a fixed design setting, the minimax regret is equivalent to the log Shtarkov sum of the function class when projected onto the known set of contexts \citep{jacquet2021precise,wu2023regret}.

There is a long line of work concerning the most general setting of contextual sequential probability assignment, with respect to rich hypothesis classes. \citep{cesa1999minimax,opper1999worst} upper bounded the regret in terms of the (non-sequential) uniform covering number of the class. However, this complexity measure proved to be insufficient for obtaining optimal rates. \citep{rakhlin2015sequential} improved regret upper bounds by proposing a sequential covering measure. 
Thereafter, by utilizing the self-concordance property and the curvature of the log loss, \citep{bilodeau2020tight} further improved the upper bound in terms of the sequential covering number for nonparametric Lipschitz classes, through a non-constructive proof. \citep{wu2023regret} proposed a Bayesian algorithmic approach in order to upper bound the regret using a global notion of sequential covering.
Notably, both the global and local sequential covering numbers do not fully characterize the regret, and the algorithm in \citep{wu2023regret} is not minimax optimal. 
By relaxing the worst-case analysis, \citep{bhatt2023smoothed} studied this problem within the smoothed analysis framework, where nature is not fully adversarial but constrained.

Online learning with respect to arbitrary hypothesis classes and the zero-one loss, in the realizable case, is known to be characterized by the Littlestone dimension \citep{littlestone1988learning}. The agnostic case was addressed by \citep{ben2009agnostic,alon2021adversarial}.
Understanding sequential complexities in online learning with Lipschitz losses was extensively studied by \citep{rakhlin2010online,rakhlin2014online,Rakhlin2014StatisticalLA,rakhlin2015bsequential}. Note that the logarithmic loss is neither Lipschitz nor bounded. Recently, 
\citep{attias2023optimal} characterized online regression in the realizable case, for any approximate pseudo-metric, such as the $\ell_p$ loss.

\section{Preliminaries}\label{sec:preliminaries}
\paragraph{Notation.}
For a positive integer $K$, let $[K]:=\{1,2,\dots, K\}$. For a finite set $\mathcal K$ with $|\mathcal K|=K$, we use $\probsimplex{\mathcal K}$ to denote the set of all distributions on $\mathcal K$. We may identify $\mathcal K$ with $[K]$ (under arbitrary enumeration of elements in $\mathcal K$) and treat elements of $\probsimplex{\mathcal K}$ as vectors in $\mathbb R^{K}$. For a vector $p\in\mathbb R^{K}$ and $i\in[K]$, let $p(i)$ be the $i$-th coordinate of $p$. Let $\positivesimplex{\mathcal K}=\{\probden\in\probsimplex{\mathcal K}: \probden(k)>0,\forall k\in\mathcal K\}$. For a general finite sequence $(a_i)_{i=1}^N$, we will use $a_{n:m}$ to denote the sub-sequence $(a_n,\dots,a_m)$ for any $n\leq m$ and the empty sequence for $n>m$. For any set $\mathcal A$, let $\mathcal A^*=\cup_{k\geq 0} {\mathcal A}^k$ be the set of all finite length sequences over $\mathcal A$.

\paragraph{Sequential probability assignment and minimax regret.}
Let $\contextspace$ be the context space and $\outcomespace$ be the finite label space. In each round $t\in[T]$ during the game of sequential probability assignment, the learner receives a context $\context_t\in\contextspace$ from nature and assigns a probability distribution $\prediction_t\in\probsimplex{\outcomespace}$ to the possible labels. Then nature reveals the true label $\outcome_t\in\outcomespace$ and the learner incurs a loss $\logloss{\prediction_t}{\outcome_t}=-\log(\prediction_t(\outcome_t))$. Throughout,\fTBD{DR: Not quite, since this is not adaptive.} the learner is required to predict nearly as well as the best expert from an expert class, which is modeled as an arbitrary hypothesis class $\hpclass\subseteq\{(\contextspace\times\outcomespace)^{*}\times\contextspace\to\probsimplex{\outcomespace} \}$. More formally, the goal of the learner is make their
\emph{regret} with respect to $\hpclass$,
\*[
\regret_T(\hpclass;\predictionseq{1}{T}, \contextseq{1}{T}, 
\outcomeseq{1}{T})
=
\sum_{t=1}^T \logloss{\prediction_t}{\outcome_t}
-
\inf_{f\in\hpclass} \sum_{t=1}^T \logloss{f(\contextseq{1}{t},\outcomeseq{1}{t-1})}{\outcome_t},
\]
as small as possible for all sequences $\abscontextseq$ and $\absoutcomeseq$ generated by nature, possibly in an adversarial manner. Here $f(\contextseq{1}{t},\outcomeseq{1}{t-1})\in\probsimplex{\outcomespace}$ can be understood as the prediction made by expert $f$ at round $t$ using past observations $(\contextseq{1}{t-1}, \outcomeseq{1}{t-1})$ as well as the fresh context $\context_t$. The main focus is to study the \emph{minimax regret} $\regret_T(\hpclass)$, which can be written as the following extensive form
\*[
\regret_T(\hpclass)
=
\sup_{\context_1}\inf_{\prediction_1}\sup_{\outcome_1}
\cdots
\sup_{\context_T}\inf_{\prediction_T}\sup_{\outcome_T}
\regret_T(\hpclass;\predictionseq{1}{T}, \contextseq{1}{T}, 
\outcomeseq{1}{T}),
\]
where $\context_t\in\contextspace, \prediction_t\in\probsimplex{\outcomespace}$ and $\outcome_t\in\outcomespace, \forall t\in[T]$. In light of this formulation, we can see that the minimax regret concerns both the learner and the nature to be \emph{adaptive}, meaning that their actions can rely on the revealed history so far.
\begin{remark}[Sequential vs non-sequential experts]
    Experts $f$ as mappings from $(\contextspace\times\outcomespace)^{*}\times\contextspace$ to $\probsimplex{\outcomespace}$ are sometimes called \emph{fully sequential} experts \citep{wu2023regret} due to their ability to predict based on the past history. However, the literature (e.g. \citep{rakhlin2015sequential, bilodeau2020tight, wu2023regret}) often considers the more limited notion of \emph{non-sequential} experts, modeled as $\hpclass\subseteq\{\contextspace\to\probsimplex{\outcomespace}\}$, reflecting the fact that prediction made by each expert $f$ is simply $f(\context_t)$ in each round $t$. In contrast, our results are more general as our novel techniques can be applied to the more flexible sequential experts.
\end{remark}
\paragraph{Multiary trees.}
The complexity of online learning problems stems from the sequential and adaptive nature of the adversary, which we can capture with 
\emph{multiary trees}.
Formally, for a general space $\mathcal A$ and a finite set $\mathcal K$, an $\mathcal A$-valued $\mathcal K$-ary tree $\mathbf{v}$ of depth $\depth$ is a sequence of mappings $\mathbf{v}_t:\mathcal K^{t-1}\to\mathcal A$ for $t\in[d]$. A \emph{path} in a depth-$\depth$ tree is a sequence $\boldsymbol{\varepsilon}=(\varepsilon_1,\dots,\varepsilon_{\depth})\in{\mathcal K}^{\depth}$. We use the notation $\mathbf{v}_t(\boldsymbol{\varepsilon})$ to denote $\mathbf{v}_t(\varepsilon_1,\dots,\varepsilon_{t-1})$ for $t\in[\depth]$ and the boldface notation $\mathbf{v}(\boldsymbol{\varepsilon})$ to denote $(\mathbf{v}_1(\boldsymbol{\varepsilon}),\dots,\mathbf{v}_{\depth}(\boldsymbol{\varepsilon}))\in{\mathcal A}^{\depth}$.
Throughout we will only consider $\outcomespace$-ary trees valued in either $\contextspace$ or $\probsimplex{\outcomespace}$, where the paths are denoted by the boldface $\absoutcomeseq$. We refer to $\contextspace$-valued trees as \emph{context trees} and $\probsimplex{\outcomespace}$-valued trees as \emph{probabilistic trees}.

\paragraph{Time-varying context sets.}
    So far we consider the context set $\contextspace$ to be constant over time. But all of our results can be extended easily to allow for time-varying context spaces. Details of this generalization can be found in \cref{appendix sec:additional discussions}.
\subsection{Prior work: the Shtarkov sum in context-free and fixed designs}

Before introducing our complexity measure that characterizes $\regret_T(\hpclass)$, we review some prior settings where the minimax regret can be characterized by the well-studied \emph{Shtarkov sum}. First we introduce the notion of likelihood of a hypothesis $f$
with respect to a context and label sequence, which plays a key role in defining complexity measures and optimal algorithms.
\begin{definition}[Likelihood]\label{def:likelihood}
    For $f:(\contextspace\times\outcomespace)^{*}\times\contextspace\to\probsimplex{\outcomespace}$ and length$-\depth$ sequences $\contextseq{1}{\depth}\in\contextspace^{\depth}, \outcomeseq{1}{\depth}\in\outcomespace^{\depth}$, the likelihood $\likelihood{f}{\outcomeseq{1}{\depth}}{\contextseq{1}{\depth}}$ is defined as
    \*[
    \likelihood{f}{\outcomeseq{1}{\depth}}{\contextseq{1}{\depth}}
    =
    \prod_{t=1}^{\depth}
    f(\contextseq{1}{t}, \outcomeseq{1}{t-1})(\outcome_t), 
    \]
    where we use the compact notation $f(\contextseq{1}{t}, \outcomeseq{1}{t-1})$ for $f(\context_1,\outcome_1,\dots,\context_{t-1},\outcome_{t-1},\context_t)$.
\end{definition}
In the classical context-free setting where $\contextspace$ can be thought of as a singleton, any sequential expert $f$ degenerates to a joint distribution over label sequences. Indeed, given any label sequence $\outcomeseq{1}{t-1}$, $f(\outcomeseq{1}{t-1})\in\probsimplex{\outcomespace}$ can be interpreted as the conditional distribution $f$ assigns to the next label $\outcome_t$. We use $\classlikelihood{f}{\outcomeseq{1}{\depth}}=\prod_{t=1}^{\depth} f(\outcomeseq{1}{t-1})(\outcome_t)$ to denote this distribution. Similarly, the learner's strategy is also specified by a joint distribution that is decomposed to a sequence of conditional distributions $\prediction_t=\prediction_t(\cdot|\outcomeseq{1}{t-1})\in\probsimplex{\outcomespace}$. In this setup the minimax regret $\regret_T(\hpclass)$ is characterized by the Shtarkov sum \citep{Sht87}.
\begin{proposition}[\citealt{Sht87}]\label{prop:context-free minimax regret is characterized by classical shtarkov sum}
    In the context-free setting, for any hypothesis class $\hpclass$ and horizon $T$, the Shtarkov sum $\classshtarkov{T}{\hpclass}$ is defined as
    \*[
    \classshtarkov{T}{\hpclass}
    =
    \sum_{\outcomeseq{1}{T}\in\outcomespace^T}
    \sup_{f\in\hpclass}
    \classlikelihood{f}{\outcomeseq{1}{T}}.
    \]   
    Moreover, the minimax regret is given by $\regret_T(\hpclass)
    =
    \log\classshtarkov{T}{\hpclass}$,
    and the unique minimax optimal strategy is the \emph{normalized maximum likelihood} (NML) distribution given by
    \*[
    \nmlprediction(\outcomeseq{1}{T})
    =
    \frac{\sup_{f\in\hpclass} \classlikelihood{f}{\outcomeseq{1}{T}}}
    {\sum_{\outcomeseq{1}{T}'\in\outcomespace^T} \sup_{f\in\hpclass} \classlikelihood{f}{\outcomeseq{1}{T}'}},
    \qquad
    \forall\outcomeseq{1}{T}\in\outcomespace^T.
    \]
\end{proposition}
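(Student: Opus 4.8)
The plan is to reduce the extensive-form minimax regret to a static optimization over joint distributions on $\outcomespace^T$, and then solve that optimization in closed form.

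\emph{Reduction.} In the context-free setting, an adaptive learner strategy $\bigl(\prediction_1,\prediction_2(\cdot\mid\outcome_1),\dots,\prediction_T(\cdot\mid\outcomeseq{1}{T-1})\bigr)$ is, by the chain rule, in bijection with a joint distribution $q\in\probsimplex{\outcomespace^T}$ via $q(\outcomeseq{1}{T})=\prod_{t=1}^{T}\prediction_t(\outcome_t\mid\outcomeseq{1}{t-1})$, and conversely every joint factors this way. Writing $s(\outcomeseq{1}{T}):=\sup_{f\in\hpclass}\classlikelihood{f}{\outcomeseq{1}{T}}$, we have $\sum_{t}\logloss{\prediction_t}{\outcome_t}=-\log q(\outcomeseq{1}{T})$ and $\inf_{f\in\hpclass}\sum_t\logloss{f(\outcomeseq{1}{t-1})}{\outcome_t}=-\log s(\outcomeseq{1}{T})$, so the regret on a fixed sequence equals $\log\bigl(s(\outcomeseq{1}{T})/q(\outcomeseq{1}{T})\bigr)$. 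Since this depends on the learner only through $q$ and on nature only through the realized $\outcomeseq{1}{T}$, the nested $\inf_{\prediction_t}\sup_{\outcome_t}$ collapses to $\regret_T(\hpclass)=\inf_{q\in\probsimplex{\outcomespace^T}}\sup_{\outcomeseq{1}{T}}\log\bigl(s(\outcomeseq{1}{T})/q(\outcomeseq{1}{T})\bigr)$; here I would note explicitly that nature's adaptivity buys nothing once $q$ has been chosen, since it is then maximizing a function of $\outcomeseq{1}{T}$ alone.

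\emph{Solving the optimization.} For the lower bound, the elementary mediant inequality $\max_i a_i/b_i\ge(\sum_i a_i)/(\sum_i b_i)$, applied over the finitely many sequences with $s(\outcomeseq{1}{T})>0$ and using $\sum_{\outcomeseq{1}{T}}q(\outcomeseq{1}{T})=1$, yields $\sup_{\outcomeseq{1}{T}}s(\outcomeseq{1}{T})/q(\outcomeseq{1}{T})\ge\sum_{\outcomeseq{1}{T}}s(\outcomeseq{1}{T})=\classshtarkov{T}{\hpclass}$ for every $q$ (if $q$ vanishes on some sequence with $s>0$ the ratio is $+\infty$), hence $\regret_T(\hpclass)\ge\log\classshtarkov{T}{\hpclass}$. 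For the matching upper bound, substituting $q=\nmlprediction$ makes the ratio $s(\outcomeseq{1}{T})/\nmlprediction(\outcomeseq{1}{T})$ identically equal to $\classshtarkov{T}{\hpclass}$ on the support of $\nmlprediction$, and since $\outcomespace^T$ is finite $1\le\classshtarkov{T}{\hpclass}<\infty$, so $\nmlprediction$ is a legitimate distribution attaining $\log\classshtarkov{T}{\hpclass}$. For uniqueness, if $q\neq\nmlprediction$ then, because $\nmlprediction$ is supported exactly on $\{\outcomeseq{1}{T}:s(\outcomeseq{1}{T})>0\}$ and both are probability distributions, some $\outcomeseq{1}{T}^{\star}$ has $s(\outcomeseq{1}{T}^{\star})>0$ and $q(\outcomeseq{1}{T}^{\star})<\nmlprediction(\outcomeseq{1}{T}^{\star})$, forcing $s(\outcomeseq{1}{T}^{\star})/q(\outcomeseq{1}{T}^{\star})>\classshtarkov{T}{\hpclass}$, so $q$ is strictly suboptimal; translating back through the bijection gives uniqueness of the minimax learner strategy.

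\emph{Anticipated difficulty.} There is no substantial obstacle here, as the result is classical; the only care needed is bookkeeping around sequences $\outcomeseq{1}{T}$ with $s(\outcomeseq{1}{T})=0$, where the comparator suffers infinite loss and the ``regret'' is not literally well defined. One checks that such sequences cannot be exploited by nature and that minimax optimality forces $q$ to vanish on them --- exactly what the uniqueness argument already establishes --- and one should record the (routine) fact that in this sequential game nature gains nothing from observing $\prediction_t$ before committing to $\outcome_t$, so that the extensive form reduces faithfully to $\inf_q\sup_{\outcomeseq{1}{T}}$.
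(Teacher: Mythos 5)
Your proof is correct, and it takes a genuinely different and more elementary route than the machinery in the paper. The paper imports this proposition from Shtarkov and never proves it directly; the closest thing to a proof here is the general contextual result (\cref{theorem:contextual shtarkov sum characterizes minimax regret} together with \cref{theorem:minimax algorithm}), which is established by a round-by-round minimax swap via Sion's theorem applied to a smooth-truncated class $\trunclass{\threshold}$, the identification of the dual game value as $\sup_P H(P)+\E_P[\sup_f\log\classlikelihood{f}{\cdot}]$ solved through the negative-entropy/log-sum-exp conjugacy, and a $\threshold\to 0^+$ limit to remove regularity assumptions. You avoid all of that by exploiting structure that exists only in the context-free case: an adaptive deterministic learner is (up to irrelevant conditionals at unreachable histories) exactly a joint distribution $q$ on $\outcomespace^T$, nature's adaptivity is vacuous against a deterministic opponent, and the resulting static problem $\inf_q\sup_{\outcomeseq{1}{T}}\log(s/q)$ is solved by the mediant inequality, which also hands you uniqueness for free. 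What the paper's heavier route buys is exactly what your argument cannot give: in the contextual game the interleaved choice of contexts prevents the collapse to a single static $\inf_q\sup$, which is why trees, the minimax swap, and the truncation device are needed there. Two small points of hygiene in your write-up: uniqueness should be stated at the level of the induced joint distribution (round-wise conditionals at histories off the support of $\nmlprediction$ are not pinned down), and the sequences with $s(\outcomeseq{1}{T})=0$ do need the explicit convention the paper adopts for cNML (once $\sup_f\classlikelihood{f}{\outcomeseq{1}{t-1}}=0$, predict any element of $\positivesimplex{\outcomespace}$, making the regret $-\infty$ there), rather than just the remark that nature cannot exploit them; with that convention your bookkeeping paragraph becomes a complete argument.
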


To go beyond this classical context-agnostic setting and incorporate contextual information, prior work (e.g. \citep{jacquet2021precise}) also considered an easier problem than the aforementioned sequential probability assignment, by forcing nature to reveal the context sequence $\contextseq{1}{T}$ to the learner at the start of the game. This is known as the \emph{fixed design} setting or \emph{transductive online learning} \citep{wu2023regret}, where the goal is to characterize the so-called \emph{fixed design maximal} minimax regret
\*[
\fdregret_T(\hpclass)
:=
\sup_{\contextseq{1}{T}\in\contextspace^T}
\inf_{\prediction_1}\sup_{\outcome_1}
\cdots
\inf_{\prediction_T}\sup_{\outcome_T}
\regret_T(\hpclass;\predictionseq{1}{T}, \contextseq{1}{T}, 
\outcomeseq{1}{T}).
\]
It is straightforward to see that after projecting on $\contextseq{1}{T}$, the hypothesis class $\hpclass$ again collapses to a set of joint distributions over $\outcomespace^T$ specified by the likelihood function in \cref{def:likelihood}. Moreover, this set of distributions can be accessed by the learner from the start, so the fixed design setting can be essentially reduced to the context-free setting. To be more specific, for any $f\in\hpclass$, it induces an expert in the context-free setting after being projected on $\contextseq{1}{T}$, which is denoted by $f|_{\contextseq{1}{T}}$ and
\*[
f|_{\contextseq{1}{T}}(\outcomeseq{1}{t-1})
:=
f(\contextseq{1}{t}, \outcomeseq{1}{t-1})
\in\probsimplex{\outcomespace},
\forall t\in[T], \outcomeseq{1}{t-1}\in\outcomespace^{t-1},
\]
and let $\hpclass|_{\contextseq{1}{T}}:=\{f|_{\contextseq{1}{T}}:f\in\hpclass\}$. Then given any predetermined $\contextseq{1}{T}$, the learner is equivalently competing with $\hpclass|_{\contextseq{1}{T}}$ in the context-free setting. With the following natural variant of the Shtarkov sum, we can easily characterize $\fdregret_T(\hpclass)$.
\begin{definition}[Conditional Shtarkov sum]\label{def:conditional shtarkov sum}
    Given a context sequence $\contextseq{1}{T}\in\contextspace^T$, the Shtarkov sum of $\hpclass$ conditioned on $\contextseq{1}{T}$ is 
    \*[
    \shtarkov{T}{\hpclass}{\contextseq{1}{T}}:=\sum_{\outcomeseq{1}{T}\in\outcomespace^T} \sup_{f\in\hpclass} \likelihood{f}{\outcomeseq{1}{T}}{\contextseq{1}{T}}.
    \]
\end{definition}
In fact, $\shtarkov{T}{\hpclass}{\contextseq{1}{T}}$ is just the Shtarkov sum of the projected class $\hpclass|_{\contextseq{1}{T}}$ in the context-free setting. The following result characterizes the fixed-design setting:
\begin{proposition}[Minimax regret, fixed design \citep{jacquet2021precise}]\label{prop:fixed design minimax regret is characterized by conditional shtarkov sum}
    In the fixed design setting, for any hypothesis class $\hpclass$ and horizon $T$, the fixed design maximal minimax regret is
    \*[
    \fdregret_T(\hpclass)
    =
    \sup_{\contextseq{1}{T}\in\contextspace^T}
    \log\shtarkov{T}{\hpclass}{\contextseq{1}{T}},
    \]
    and, given any context sequence $\contextseq{1}{T}$,  the minimax optimal response is NML with respect to $\hpclass|_{\contextseq{1}{T}}$.
\end{proposition}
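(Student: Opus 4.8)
The plan is to reduce the fixed-design game, for each frozen context sequence, to the context-free game of \cref{prop:context-free minimax regret is characterized by classical shtarkov sum} played against the projected class $\hpclass|_{\contextseq{1}{T}}$, and then to maximize over $\contextseq{1}{T}$. So first I would fix an arbitrary $\contextseq{1}{T}\in\contextspace^T$ and analyze the inner extensive-form value $\inf_{\prediction_1}\sup_{\outcome_1}\cdots\inf_{\prediction_T}\sup_{\outcome_T}\regret_T(\hpclass;\predictionseq{1}{T},\contextseq{1}{T},\outcomeseq{1}{T})$.

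The key observation is that, with $\contextseq{1}{T}$ revealed in full at the outset, a learner strategy in this inner game is exactly a choice of $\prediction_t\in\probsimplex{\outcomespace}$ as a function of $\outcomeseq{1}{t-1}$ for each $t$ (the fixed $\contextseq{1}{T}$ being a constant), i.e.\ a joint distribution on $\outcomespace^T$ — the same strategy space as in the context-free setting. Moreover, the regret functional depends on $\hpclass$ only through likelihoods along $\contextseq{1}{T}$: by \cref{def:likelihood} and the definition of the projection, $\likelihood{f}{\outcomeseq{1}{T}}{\contextseq{1}{T}}=\classlikelihood{f|_{\contextseq{1}{T}}}{\outcomeseq{1}{T}}$ for every $f\in\hpclass$ and every $\outcomeseq{1}{T}$, so $\inf_{f\in\hpclass}\sum_{t=1}^T\logloss{f(\contextseq{1}{t},\outcomeseq{1}{t-1})}{\outcome_t}=-\log\sup_{g\in\hpclass|_{\contextseq{1}{T}}}\classlikelihood{g}{\outcomeseq{1}{T}}$. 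Hence the inner game is literally the context-free minimax regret game for the class $\hpclass|_{\contextseq{1}{T}}$.

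Now I would apply \cref{prop:context-free minimax regret is characterized by classical shtarkov sum} to $\hpclass|_{\contextseq{1}{T}}$: the inner value equals $\log\classshtarkov{T}{\hpclass|_{\contextseq{1}{T}}}$, and the unique optimal learner strategy is NML with respect to $\hpclass|_{\contextseq{1}{T}}$. Unwinding \cref{def:conditional shtarkov sum}, $\classshtarkov{T}{\hpclass|_{\contextseq{1}{T}}}=\sum_{\outcomeseq{1}{T}\in\outcomespace^T}\sup_{g\in\hpclass|_{\contextseq{1}{T}}}\classlikelihood{g}{\outcomeseq{1}{T}}=\sum_{\outcomeseq{1}{T}\in\outcomespace^T}\sup_{f\in\hpclass}\likelihood{f}{\outcomeseq{1}{T}}{\contextseq{1}{T}}=\shtarkov{T}{\hpclass}{\contextseq{1}{T}}$, which gives the per-context claim and the optimality of NML with respect to $\hpclass|_{\contextseq{1}{T}}$. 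Taking $\sup_{\contextseq{1}{T}\in\contextspace^T}$ of the inner identity yields $\fdregret_T(\hpclass)=\sup_{\contextseq{1}{T}\in\contextspace^T}\log\shtarkov{T}{\hpclass}{\contextseq{1}{T}}$, as claimed.

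The only real subtlety — and the step I would be most careful about — is the reduction bookkeeping: verifying that a learner who sees all of $\contextseq{1}{T}$ up front has exactly the same strategy space in the inner game as a context-free learner, and that $f|_{\contextseq{1}{T}}$, which is specified only on label prefixes of length $<T$, is a legitimate context-free expert for the purposes of \cref{prop:context-free minimax regret is characterized by classical shtarkov sum}. A minor related point is the handling of suprema that may be attained only in a limit or on $\positivesimplex{\outcomespace}$ rather than on $\probsimplex{\outcomespace}$, but this is already implicit in the cited statement and so needs no new argument here.
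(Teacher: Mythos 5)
Your proposal is correct and follows essentially the same route the paper itself sketches in the paragraph preceding \cref{prop:fixed design minimax regret is characterized by conditional shtarkov sum} (the result is imported from prior work rather than reproved): freeze $\contextseq{1}{T}$, note that the inner game's strategy space and regret functional coincide with the context-free game for the projected class $\hpclass|_{\contextseq{1}{T}}$ via $\likelihood{f}{\outcomeseq{1}{T}}{\contextseq{1}{T}}=\classlikelihood{f|_{\contextseq{1}{T}}}{\outcomeseq{1}{T}}$, apply \cref{prop:context-free minimax regret is characterized by classical shtarkov sum}, and take the supremum over context sequences. No gaps beyond the bookkeeping you already flag.
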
 
\section{Minimax regret via contextual Shtarkov sum}\label{sec:minimax regret via contextual shtarkov}
Now we state one of our main results about the characterization of the minimax regret of sequential probability assignment. First we introduce the key concept of \emph{contextual Shtarkov sum}, which is a natural generalization of Shtarkov sum in the context-free setting.
\begin{definition}[Contextual Shtarkov sum]
    The \emph{contextual Shtarkov sum}\label{def:contextual shtarkov sum} $\shtarkov{T}{\hpclass}{\contexttree}$ of a hypothesis class $\hpclass$ on a given context tree $\contexttree$ of depth $T$ is defined as
    \*[
    \shtarkov{T}{\hpclass}{\contexttree}
    :=
    \sum_{\absoutcomeseq\in\outcomespace^T}\sup_{f\in\hpclass} \likelihood{f}{\absoutcomeseq}{\contexttree(\absoutcomeseq)}.
    \]
\end{definition}
\fTBD{context tree vs. context sequence. notation might be confusing with the fixed design setting, here the context trees are dynamic, should be explained.}
Just like the conditional Shtarkov sum, the contextual Shtarkov sum $\shtarkov{T}{\hpclass}{\contexttree}$ can be interpreted as the Shtarkov sum of the projected class $\hpclass|_{\contexttree}:=\{f|_{\contexttree}: f\in\hpclass \}$ where $f|_{\contexttree}$ is the induced context-free expert specified by
\*[
f|_{\contexttree}(\outcomeseq{1}{t-1})
:=
f(\contexttree(\outcomeseq{1}{t-1}), \outcomeseq{1}{t-1})
\in\probsimplex{\outcomespace},
\forall t\in[T], \outcomeseq{1}{t-1}\in\outcomespace^{t-1},
\]
where we have slightly abused the notation to use $\contexttree(\outcomeseq{1}{t-1})$ to denote the length-$t$ context sequence obtained by tracing tree $\contexttree$ through the (partial) path $\outcomeseq{1}{t-1}$. Next we show that the minimax regret $\regret_T(\hpclass)$ is characterized by the worst-case contextual Shtarkov sum:
\begin{theorem}[Main result: minimax regret]\label{theorem:contextual shtarkov sum characterizes minimax regret}
    For any hypothesis class $\hpclass\subseteq\{(\contextspace\times\outcomespace)^{*}\times\contextspace\to\probsimplex{\outcomespace} \}$ and horizon $T$,
    \*[
    \regret_T(\hpclass)
    =
    \sup_{\contexttree} \log \shtarkov{T}{\hpclass}{\contexttree},
    \]
    where the supremum is taken over all context trees $\contexttree$ (i.e., $\contexttree$ is $\contextspace$-valued) of depth $T$. 
\end{theorem}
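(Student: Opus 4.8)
The plan is to evaluate the extensive-form value directly by a dynamic-programming (backward induction) argument, exploiting the special feature of logarithmic loss that the innermost learner move has a closed-form solution, so that no minimax theorem is ever needed. First I would rewrite $-\inf_{f\in\hpclass}\sum_{t=1}^T\logloss{f(\contextseq{1}{t},\outcomeseq{1}{t-1})}{\outcome_t}=\sup_{f\in\hpclass}\log\likelihood{f}{\outcomeseq{1}{T}}{\contextseq{1}{T}}$, so that $\regret_T(\hpclass)=\sup_{\context_1}\inf_{\prediction_1}\sup_{\outcome_1}\cdots\sup_{\context_T}\inf_{\prediction_T}\sup_{\outcome_T}\lrbra{-\sum_{t=1}^T\log\prediction_t(\outcome_t)+\sup_{f}\log\likelihood{f}{\outcomeseq{1}{T}}{\contextseq{1}{T}}}$. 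The one elementary lemma I need is that for any $h:\outcomespace\to\hR$,
\*[
\inf_{\prediction\in\probsimplex{\outcomespace}}\ \sup_{\outcome\in\outcomespace}\ \lrbra{-\log\prediction(\outcome)+h(\outcome)}=\log\sum_{\outcome\in\outcomespace}e^{h(\outcome)},
\]
attained uniquely at $\prediction\propto e^{h(\cdot)}$ (the one-step normalized maximum likelihood distribution, cf.\ \cref{prop:context-free minimax regret is characterized by classical shtarkov sum}); this follows since $\sup_{\outcome}e^{h(\outcome)}/\prediction(\outcome)\ge\sum_{\outcome}e^{h(\outcome)}$ with equality iff $e^{h}/\prediction$ is constant. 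Applied to the innermost $\inf_{\prediction_T}\sup_{\outcome_T}$ — where the terms $-\log\prediction_s(\outcome_s)$ for $s<T$ are constants and pull out of the bracket, and only the $\sup_f$ term depends on $\outcome_T$ — this collapses round $T$ to $\log\sum_{\outcome_T}\sup_f\likelihood{f}{\outcomeseq{1}{T}}{\contextseq{1}{T}}$, after which $\sup_{\context_T}$ may be pushed inward since $\context_T$ occurs only there.

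Iterating, I would prove by downward induction on $t$ that $\regret_T(\hpclass)$ equals $\sup_{\context_1}\inf_{\prediction_1}\sup_{\outcome_1}\cdots\sup_{\context_t}\inf_{\prediction_t}\sup_{\outcome_t}\lrbra{-\sum_{s=1}^t\log\prediction_s(\outcome_s)+\Psi_t(\contextseq{1}{t},\outcomeseq{1}{t})}$, where $\Psi_T(\contextseq{1}{T},\outcomeseq{1}{T}):=\sup_{f}\log\likelihood{f}{\outcomeseq{1}{T}}{\contextseq{1}{T}}$ and $\Psi_{t-1}(\contextseq{1}{t-1},\outcomeseq{1}{t-1}):=\sup_{\context_t}\log\sum_{\outcome_t}e^{\Psi_t(\contextseq{1}{t},\outcomeseq{1}{t})}$. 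Each inductive step is one application of the lemma — exponentiation turns the nested $\log$ back into the identity and commutes with $\sup_{\context_t}$, which is exactly what makes the recursion close — followed by moving $\sup_{\context_t}$ past the learner terms it does not involve. Unrolling to the top gives the fully explicit formula
\*[
\regret_T(\hpclass)=\sup_{\context_1}\log\sum_{\outcome_1}\sup_{\context_2}\sum_{\outcome_2}\cdots\sup_{\context_T}\sum_{\outcome_T}\ \sup_{f\in\hpclass}\ \prod_{t=1}^T f(\contextseq{1}{t},\outcomeseq{1}{t-1})(\outcome_t).
\]

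It remains to recognize this alternating $\sup$–$\sum$ object as a supremum over context trees. Since $\outcomespace$ is finite, for any $H$ one has $\sum_{\outcome}\sup_{\context\in\contextspace}H(\outcome,\context)=\sup_{\phi:\outcomespace\to\contextspace}\sum_{\outcome}H(\outcome,\phi(\outcome))$; iterating this at every level pulls each $\sup_{\context_t}$ outward as a supremum over a map $\outcomespace^{t-1}\to\contextspace$, hence jointly as a supremum over $\contextspace$-valued $\outcomespace$-ary trees $\contexttree$ of depth $T$ with $\context_t=\contexttree_t(\outcomeseq{1}{t-1})$. Pulling the single $\log$ out by monotonicity and using $\prod_{t}f(\contexttree(\outcomeseq{1}{t-1}),\outcomeseq{1}{t-1})(\outcome_t)=\likelihood{f}{\absoutcomeseq}{\contexttree(\absoutcomeseq)}$ and $\sum_{\absoutcomeseq\in\outcomespace^T}\sup_{f}\likelihood{f}{\absoutcomeseq}{\contexttree(\absoutcomeseq)}=\shtarkov{T}{\hpclass}{\contexttree}$ then yields $\regret_T(\hpclass)=\log\sup_{\contexttree}\shtarkov{T}{\hpclass}{\contexttree}=\sup_{\contexttree}\log\shtarkov{T}{\hpclass}{\contexttree}$. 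As a by-product the same induction exhibits the minimax optimal learner — predict $\prediction_t\propto e^{\Psi_t(\contextseq{1}{t},\outcomeseq{1}{t-1},\cdot)}$, i.e.\ cNML — and telescoping $-\log\prediction_t(\outcome_t)\le\Psi_{t-1}-\Psi_t$ along any play gives a short independent proof of the ``$\le$'' direction.

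I expect the main obstacle to be organizational rather than conceptual: carrying the induction out while correctly tracking which past variables each nested $\sup$/$\inf$ ranges over, and verifying the $\log$/$\exp$/$\sup_{\context_t}$ bookkeeping in the definition of $\Psi_t$. The genuine technical caveats are minor: one should restrict the learner to $\positivesimplex{\outcomespace}$ or adopt the convention $-\log 0=+\infty$ so that the closed-form minimizers and all the $\Psi_t$ are well defined and finite (finiteness is automatic, since $1\le\shtarkov{T}{\hpclass}{\contexttree}\le\lrabs{\outcomespace}^{T}$), and if some suprema are not attained one inserts the standard $\varepsilon$-approximation into the $\sum$–$\sup$ interchange. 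Crucially, no Sion/von Neumann minimax theorem is used anywhere — the per-round $\inf$–$\sup$ is solved exactly — which is the structural reason the argument stays elementary despite logarithmic loss being neither bounded nor Lipschitz.
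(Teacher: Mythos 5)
Your argument is correct, and it reaches the theorem by a genuinely different route than the paper. The paper first passes to the dual game: it applies Sion's minimax theorem round by round after restricting the learner to the truncated simplex $\trunsimplex$ (\cref{lemma:when can we do minimax swap}), identifies the dual value with $\sup_{\contexttree}\log\shtarkov{T}{\hpclass}{\contexttree}$ via the entropy/log-sum-exp (Gibbs) duality over joint distributions on $\outcomespace^T$ (\cref{lemma:conditional characterization,lemma:minmax value is always lower bounded by maxmin}), and then removes the regularity condition $\inf_{f}\sum_t\logloss{f_t}{\outcome_t}<\infty$ by passing to the smoothed class $\trunclass{\threshold}$ and sending $\threshold\to0^{+}$ (\cref{lemma:regret of truncated class,lemma:likelihood of truncated hypothesis}). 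You instead evaluate the extensive form exactly by backward induction: the one-step identity $\inf_{\prediction}\sup_{\outcome}\lrbra{-\log\prediction(\outcome)+h(\outcome)}=\log\sum_{\outcome}e^{h(\outcome)}$ (which is the fact the paper only invokes later, via \cref{lemma:math fact by MG22}, when deriving cNML in \cref{appendix sec:proofs for cNML}) collapses each round in closed form, your $\Psi_t$ is exactly the paper's conditional value $G(\hpclass,\contextseq{1}{t},\outcomeseq{1}{t})$, and the final Skolemization $\sum_{\outcome}\sup_{\context}H=\sup_{\phi:\outcomespace\to\contextspace}\sum_{\outcome}H(\outcome,\phi(\outcome))$ converts the alternating $\sup$--$\sum$ expression into a supremum over context trees. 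What this buys: no minimax theorem, no compactness or truncation of the prediction simplex, no regularity condition to be removed by the $\trunclass{\threshold}$ limiting argument, and the optimal strategy (cNML, \cref{theorem:minimax algorithm}) plus the ``$\leq$'' direction fall out of the same recursion; in effect you merge the content of the paper's \cref{appendix sec:proofs for contextual shtarkov sum,appendix sec:proofs for cNML} into one elementary dynamic program. What the paper's route buys is the dual-game formulation over probabilistic trees, which underlies its martingale interpretation of the Shtarkov sum and parallels the sequential-Rademacher machinery of prior work. The only point needing the same care the paper exercises for zero-likelihood prefixes is the extended-real bookkeeping: on histories where some earlier $\prediction_s(\outcome_s)=0$ and simultaneously $\Psi_t=-\infty$ the bracket is formally $\infty-\infty$, so you should fix the convention (e.g., note the learner may as well play in $\positivesimplex{\outcomespace}$, which changes no value since the one-step infimum is unaffected) before running the induction; you flag exactly this, and with it the proof is complete.
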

Since any context sequence $\contextseq{1}{T}$ can be thought as a special context tree $\contexttree$ that is constant in each level $t\in[T]$ (i.e., $\contexttree_t(\absoutcomeseq)=\context_t, \forall\absoutcomeseq$), we can find that the supremum over context trees in \cref{theorem:contextual shtarkov sum characterizes minimax regret} strictly subsumes the supremum over context sequences in \cref{prop:fixed design minimax regret is characterized by conditional shtarkov sum}. Thus we can see the separation between $\regret_T(\hpclass)$ and $\fdregret_T(\hpclass)$ is clearly exhibited.

The proof of \cref{theorem:contextual shtarkov sum characterizes minimax regret} is provided in \cref{appendix sec:proofs for contextual shtarkov sum} but we give a brief sketch of it here.

\paragraph{Proof sketch.} 
The proof starts from swapping the pairs of inf and sup (after randomizing the labels revealed by the nature) in the extensive formulation of $\regret_T(\hpclass)$ to move to the \emph{dual game}, where the learner predicts \emph{after} seeing the action of the nature. Trivially the value of this swapped game is a lower bound for $\regret_T(\hpclass)$, and after rearranging we get that
\*[
\textrm{\emph{the value of the swapped game}}
=
\sup_{\contexttree, \probtree}\E_{\absoutcomeseq\sim\probtree}[\regret_T(\hpclass;\probtree(\absoutcomeseq), \contexttree(\absoutcomeseq), \absoutcomeseq)]
\leq
\regret_T(\hpclass),
\]
where the supremum is taken over all context trees $\contexttree$ and probabilistic trees $\probtree$, of depth $T$. Also $\E_{\absoutcomeseq\sim\probtree}$ means the nested conditional expectations $\E_{\outcome_1\sim\probtree_1(\absoutcomeseq)}\E_{\outcome_2\sim\probtree_2(\absoutcomeseq)}\cdots\E_{\outcome_T\sim\probtree_T(\absoutcomeseq)}$.

Similar to the proof of Lemma 6 in \citep{bilodeau2020tight} for the binary label setting, we apply the minimax theorem with a tweak that we devise to handle multiary labels to derive that
\[\label{eqn:minimax regret equals to the max min value in the proof overview}
\regret_T(\hpclass)
=
\sup_{\contexttree, \probtree}\E_{\absoutcomeseq\sim\probtree}
[\regret_T(\hpclass;\probtree(\absoutcomeseq), \contexttree(\absoutcomeseq), \absoutcomeseq)]
\]
under some mild regularity condition for $\hpclass$\fTBD{ZY: emphasize that?}. A key observation is that the supremum over depth-$T$ probabilistic trees $\probtree$ is equivalent to the supremum over joint distributions $P$ over $\outcomespace^T$. Based on this observation and a few algebraic manipulations, we can re-write $\sup_{ \probtree}\E_{\absoutcomeseq\sim\probtree}[\regret_T(\hpclass;\probtree(\absoutcomeseq), \contexttree(\absoutcomeseq), \absoutcomeseq)]$ as
\*[
\sup_{P\in\probsimplex{\outcomespace^T}} 
H(P) 
+ 
\E_{\absoutcomeseq\sim P}
\Lrbra{
\sup_{f\in\hpclass} \log\likelihood{f}{\absoutcomeseq}{\contexttree(\absoutcomeseq)}
}
\]
given any context tree $\contexttree$, where $H(P)$ is the entropy of $P$. The value of this maximization problem can be easily computed to be $\log\lr{\sum_{\absoutcomeseq} \sup_{f\in\hpclass}\likelihood{f}{\absoutcomeseq}{\contexttree(\absoutcomeseq)}}=\log\shtarkov{T}{\hpclass}{\contexttree}$. Thus,
\*[
\regret_T(\hpclass)
&=
\sup_{\contexttree, \probtree}\E_{\absoutcomeseq\sim\probtree}
[\regret_T(\hpclass;\probtree(\absoutcomeseq), \contexttree(\absoutcomeseq), \absoutcomeseq)] \\
&=
\sup_{\contexttree}
\sup_{P\in\probsimplex{\outcomespace^T}} 
H(P) 
+ 
\E_{\absoutcomeseq\sim P}
\Lrbra{
\sup_{f\in\hpclass} \log\likelihood{f}{\absoutcomeseq}{\contexttree(\absoutcomeseq)}
} 
=
\sup_{\contexttree}
\log\shtarkov{T}{\hpclass}{\contexttree}.
\]
However, \cref{eqn:minimax regret equals to the max min value in the proof overview} is not guaranteed when there is no assumed regularity condition for $\hpclass$. To get away from this, prior works would have to add a particular hypothesis to the class $\hpclass$ such that the enlarged class allows for the minimax swap \citep{rakhlin2015sequential, bilodeau2020tight}. Nevertheless, even adding a mere hypothesis may lead to suboptimal analysis for some classes $\hpclass$, say when $\regret_T(\hpclass)$ is of constant order. To completely get rid of any regularity assumption and obtain a unified characterization of the minimax regret for arbitrary class $\hpclass$, we provide a novel argument as follows. For an arbitrary class $\hpclass$, we study a smooth truncated version of it, denoted by $\trunclass{\threshold}$ for any level $\threshold\in(0,1/2)$, such that $\trunclass{\threshold}$ always validates the use of the minimax theorem and hence $\regret_T(\trunclass{\threshold})=\sup_{\contexttree}
\log\shtarkov{T}{\trunclass{\threshold}}{\contexttree}$. Then we give a series of refined analysis comparing the minimax regrets and contextual Shtarkov sums of $\hpclass$ and $\trunclass{\threshold}$ that yields
\*[
\regret_T(\hpclass)
\leq
\regret_T(\trunclass{\threshold})
+
T\log(1+|\outcomespace|\threshold)
&=
\sup_{\contexttree}
\log\shtarkov{T}{\trunclass{\threshold}}{\contexttree}
+
T\log(1+|\outcomespace|\threshold) \\
&\leq
\log\LR{
\sup_{\contexttree}
\shtarkov{T}{\hpclass}{\contexttree}
+
\threshold\cdot C(T, |\outcomespace|)
}
+
T\log(1+|\outcomespace|\threshold),
\]
where $C(T, |\outcomespace|)<\infty$ is a positive constant that only depends on $T$ and $|\outcomespace|$. Sending $\threshold\to 0^{+}$ will conclude that $
\regret_T(\hpclass)\leq\sup_{\contexttree}\log\shtarkov{T}{\hpclass}{\contexttree}$, which finishes the whole proof as we already have $\regret_T(\hpclass)\geq\sup_{\contexttree}\log\shtarkov{T}{\hpclass}{\contexttree}$ from the start.

\subsection{Applications: an improved regret upper bound in terms of sequential entropy}
To illustrate the utility of our characterization in \cref{theorem:contextual shtarkov sum characterizes minimax regret}, we walk through some examples where we are able to recover and \emph{sharpen} existing regret upper bounds with relatively short proofs via contextual Shtarkov sum. 
As a start, we provide a short proof in \cref{PROOF:prop:contextual shtarkov sum for finite classes} of the classical regret bound for a finite hypothesis class.
\begin{proposition}[Finite classes]\label{prop:contextual shtarkov sum for finite classes}
    For any finite hypothesis class $\hpclass$ and horizon $T$,
    $
    \regret_T(\hpclass)
    \leq
    \log |\hpclass|.
    $
\end{proposition}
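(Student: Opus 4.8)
The plan is to reduce everything to \cref{theorem:contextual shtarkov sum characterizes minimax regret}, which already tells us that $\regret_T(\hpclass) = \sup_{\contexttree}\log\shtarkov{T}{\hpclass}{\contexttree}$. So it suffices to show that $\shtarkov{T}{\hpclass}{\contexttree}\leq|\hpclass|$ for every context tree $\contexttree$ of depth $T$, and the bound $\regret_T(\hpclass)\leq\log|\hpclass|$ follows by monotonicity of $\log$ and taking the supremum over $\contexttree$.

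First I would replace the pointwise supremum over $\hpclass$ inside the contextual Shtarkov sum by a sum: since $\hpclass$ is finite and each likelihood is nonnegative, for every path $\absoutcomeseq\in\outcomespace^T$ we have $\sup_{f\in\hpclass}\likelihood{f}{\absoutcomeseq}{\contexttree(\absoutcomeseq)}\leq\sum_{f\in\hpclass}\likelihood{f}{\absoutcomeseq}{\contexttree(\absoutcomeseq)}$. Summing over $\absoutcomeseq\in\outcomespace^T$ and exchanging the two finite sums gives $\shtarkov{T}{\hpclass}{\contexttree}\leq\sum_{f\in\hpclass}\sum_{\absoutcomeseq\in\outcomespace^T}\likelihood{f}{\absoutcomeseq}{\contexttree(\absoutcomeseq)}$. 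Then I would argue that the inner sum equals $1$ for every fixed $f$ and $\contexttree$ — that is, the induced context-free expert $f|_{\contexttree}$ is a genuine probability distribution on $\outcomespace^T$. This is a short induction on the depth $\depth$: using $\likelihood{f}{\outcomeseq{1}{\depth}}{\contexttree(\outcomeseq{1}{\depth})}=\likelihood{f}{\outcomeseq{1}{\depth-1}}{\contexttree(\outcomeseq{1}{\depth-1})}\cdot f(\contexttree(\outcomeseq{1}{\depth-1}),\outcomeseq{1}{\depth-1})(\outcome_\depth)$, one sums first over $\outcome_\depth$, which contributes a factor $1$ because $f(\contexttree(\outcomeseq{1}{\depth-1}),\outcomeseq{1}{\depth-1})\in\probsimplex{\outcomespace}$, and then applies the inductive hypothesis to the remaining depth-$(\depth-1)$ sum; the base case $\depth=0$ is the empty product. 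Hence $\sum_{\absoutcomeseq\in\outcomespace^T}\likelihood{f}{\absoutcomeseq}{\contexttree(\absoutcomeseq)}=1$ and therefore $\shtarkov{T}{\hpclass}{\contexttree}\leq\sum_{f\in\hpclass}1=|\hpclass|$.

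There is no substantial obstacle here; the only step deserving care is the normalization identity $\sum_{\absoutcomeseq}\likelihood{f}{\absoutcomeseq}{\contexttree(\absoutcomeseq)}=1$, which is intuitively clear (the likelihood telescopes into conditional distributions along each path of $\contexttree$) but should be stated as a small lemma or one-line induction so that the collapse of the sum level by level is made explicit. Combining the two displays and applying \cref{theorem:contextual shtarkov sum characterizes minimax regret} completes the argument.
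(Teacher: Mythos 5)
Your proposal is correct and follows essentially the same route as the paper: bound the pointwise supremum inside the contextual Shtarkov sum by a sum over the finite class, swap the sums, and use the normalization identity $\sum_{\absoutcomeseq}\likelihood{f}{\absoutcomeseq}{\contexttree(\absoutcomeseq)}=1$ (the paper's \cref{lemma:likelihoods sum up to one}, proved there by the same induction on depth you sketch), then invoke \cref{theorem:contextual shtarkov sum characterizes minimax regret}. No gaps; the argument matches the paper's proof.
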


Let us go back to the binary label setting with non-sequential experts, that is, $\outcomespace=\binaryspace$ and $\hpclass\subseteq[0,1]^{\contextspace}$, and $f(\context)\in[0,1]$ is interpreted as the probability assigned to label $1$ by this expert $f$. We will show a regret bound that \emph{outperforms} the state-of-the-art ones in \citep{bilodeau2020tight,wu2023regret} with a surprisingly simple proof. To proceed, we need the following notation. Given a context tree $\contexttree$ of depth $T$, let $\hpclass\circ\contexttree=\lrset{f\circ\contexttree: f\in\hpclass}$, where $f\circ\contexttree$ is the $[0,1]$-valued tree such that
\*[
(f\circ\contexttree)_t(\absoutcomeseq)
=
f(\contexttree_t(\absoutcomeseq)),
\forall \absoutcomeseq\in\outcomespace^T.
\]
Next we introduce the definitions of sequential $\ell_{\infty}$ covers and entropy.
\begin{definition}[Sequential $\ell_{\infty}$ cover and entropy]
    Given a hypothesis class $\hpclass\subseteq[0,1]^{\contextspace}$ and a context tree $\contexttree$ of depth $T$, we say a collection of $\mathbb R$-valued trees $\coverset{\contexttree}{\scale}$ is a sequential cover of $\hpclass\circ\contexttree$ at scale $\scale>0$ if for any $f\in\hpclass, \absoutcomeseq\in\outcomespace^T$, there exists some $v\in\coverset{\contexttree}{\scale}$ such that
    \*[
    |f(\contexttree_t(\absoutcomeseq))-v_t(\absoutcomeseq)|
    \leq
    \scale,
    \forall t\in[T].
    \]
    Let the sequential $\ell_{\infty}$ covering number
    $\seqcoveringnum{\hpclass\circ\contexttree}{\scale}{T}$ be the size of the smallest such cover. The sequential $\ell_{\infty}$ entropy of $\hpclass$ at scale $\scale$ and depth $T$ is defined as the logarithm of the worst-case sequential covering number:
    \*[
    \seqentropy{\hpclass}{\scale}{T}
    :=
    \sup_{\contexttree}
    \log \seqcoveringnum{\hpclass\circ\contexttree}{\scale}{T}.
    \] 
\end{definition}
\begin{definition}[Global sequential $\ell_{\infty}$ cover and entropy]
    Given a hypothesis class $\hpclass\subseteq[0,1]^{\contextspace}$, we say a collection of mappings $\glbcoverset{\scale}\subseteq[0,1]^{\contextspace*}$ is a \emph{global} sequential cover of $\hpclass$ at scale $\scale>0$ and depth $T$ if for any $f\in\hpclass,\contextseq{1}{T}\in\contextspace^T$, there exists some $g\in\glbcoverset{\scale}$ such that 
    \*[
    |f(\context_t)-g(\contextseq{1}{t})|
    \leq
    \scale,
    \forall t\in[T].
    \]
    Let the \emph{global} sequential $\ell_{\infty}$ covering number $\glbcoveringnum{\hpclass}{\scale}{T}$ be the size of the smallest such cover. The \emph{global} sequential $\ell_{\infty}$ entropy of $\hpclass$ at scale $\scale$ and depth $T$ is defined as 
    \*[
    \glbseqentropy{\hpclass}{\scale}{T}
    :=
    \log \glbcoveringnum{\hpclass}{\scale}{T}.
    \]
\end{definition}
\begin{proposition}[\citep{bilodeau2020tight, wu2023regret}]\label{prop:sota bounds in terms of sequential entropy}
    For any $\hpclass\subseteq[0,1]^{\contextspace}$ and horizon $T$,
    \begin{align*}
    \regret_T(\hpclass)
    &\leq
    \min\bigg\{
    \underbrace{
    {\inf_{\scale>0}\Lrset{4T\scale+c\seqentropy{\hpclass}{\scale}{T}}}
    }_{\textrm{\normalfont\citep{bilodeau2020tight}}}
    ,
    \underbrace{
    \inf_{\scale>0}\Lrset{T\log(1+2\scale)+\glbseqentropy{\hpclass}{\scale}{T}}
    }_{\textrm{\normalfont\citep{wu2023regret}}}
    \bigg\},
\end{align*}
where $c=\frac{2-\log(2)}{\log(3)-\log(2)}\in(3,4)$.
\end{proposition}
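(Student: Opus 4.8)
The plan is to read Proposition~\ref{prop:sota bounds in terms of sequential entropy} off the exact characterization in \cref{theorem:contextual shtarkov sum characterizes minimax regret}, by bounding the contextual Shtarkov sum on every context tree directly in terms of a sequential $\ell_\infty$ cover. Concretely, I would first prove the single cleaner estimate
\[
\regret_T(\hpclass)\ \le\ \inf_{\scale>0}\Lrset{T\log(1+2\scale)+\seqentropy{\hpclass}{\scale}{T}},
\]
and then observe that it dominates both displayed quantities: $\log(1+2\scale)\le 2\scale\le 4\scale$ gives $T\log(1+2\scale)\le 4T\scale$, while $\seqentropy{\hpclass}{\scale}{T}\le c\,\seqentropy{\hpclass}{\scale}{T}$ (as $c>1$ and entropies are nonnegative) and $\seqentropy{\hpclass}{\scale}{T}\le\glbseqentropy{\hpclass}{\scale}{T}$ (reading any global cover along the branches of a context tree $\contexttree$ yields a sequential cover of $\hpclass\circ\contexttree$ of the same cardinality). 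Taking infima over $\scale$ then recovers the Bilodeau bound and the Wu bound simultaneously, hence their minimum.

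For the cleaner estimate, fix $\scale>0$ and a depth-$T$ context tree $\contexttree$, and let $\coverset{\contexttree}{\scale}$ be a minimal sequential $\ell_\infty$ cover of $\hpclass\circ\contexttree$. Clipping every member coordinatewise into $[0,1]$ preserves the covering property (projection onto $[0,1]$ is nonexpansive and each $f$ is $[0,1]$-valued), so assume all $v\in\coverset{\contexttree}{\scale}$ are $[0,1]$-valued. The one-step lemma is: for $p,q\in[0,1]$ with $|p-q|\le\scale$, setting $\bar q:=(q+\scale)/(1+2\scale)\in(0,1)$ one has $(1+2\scale)\bar q=q+\scale\ge p$ and $(1+2\scale)(1-\bar q)=1-q+\scale\ge 1-p$, hence
\[
p^{y}(1-p)^{1-y}\ \le\ (1+2\scale)\,\bar q^{\,y}(1-\bar q)^{1-y},\qquad y\in\{0,1\}.
\]
Applying this coordinatewise along a path: given $f\in\hpclass$ and $\absoutcomeseq\in\outcomespace^T$, pick $v=v(f,\absoutcomeseq)\in\coverset{\contexttree}{\scale}$ that $\scale$-covers $f$ along $\absoutcomeseq$, write $\bar v_t(\absoutcomeseq):=(v_t(\absoutcomeseq)+\scale)/(1+2\scale)$ and $Q_v(\absoutcomeseq):=\prod_{t=1}^T\bar v_t(\absoutcomeseq)^{\,y_t}(1-\bar v_t(\absoutcomeseq))^{1-y_t}$; then $\likelihood{f}{\absoutcomeseq}{\contexttree(\absoutcomeseq)}\le(1+2\scale)^T Q_v(\absoutcomeseq)$.

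The key point is that for each fixed $v$ the quantity $Q_v$ is a genuine probability distribution on $\outcomespace^T=\{0,1\}^T$: since $v$ is a tree, $v_t(\absoutcomeseq)$, hence $\bar v_t(\absoutcomeseq)$, depends on $\absoutcomeseq$ only through $\outcomeseq{1}{t-1}$, so $Q_v$ is exactly the law of the prequential process that draws $y_t\sim\mathrm{Bernoulli}(\bar v_t(\outcomeseq{1}{t-1}))$, whence $\sum_{\absoutcomeseq}Q_v(\absoutcomeseq)=1$. Therefore
\[
\shtarkov{T}{\hpclass}{\contexttree}=\sum_{\absoutcomeseq\in\outcomespace^T}\sup_{f\in\hpclass}\likelihood{f}{\absoutcomeseq}{\contexttree(\absoutcomeseq)}\ \le\ (1+2\scale)^T\sum_{\absoutcomeseq}\sum_{v\in\coverset{\contexttree}{\scale}}Q_v(\absoutcomeseq)\ =\ (1+2\scale)^T\,\lrabs{\coverset{\contexttree}{\scale}}.
\]
Taking logarithms, then $\sup_{\contexttree}$, and invoking \cref{theorem:contextual shtarkov sum characterizes minimax regret} together with the definition of $\seqentropy{\hpclass}{\scale}{T}$ gives $\regret_T(\hpclass)\le T\log(1+2\scale)+\seqentropy{\hpclass}{\scale}{T}$ for all $\scale>0$; infimizing over $\scale$ and applying the comparisons from the first paragraph finishes the proof. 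The only genuinely delicate step is verifying that $Q_v$ sums to one, i.e.\ that the $\scale$-inflation $q\mapsto\bar q$ can be pushed through the product to yield an honest joint law on label sequences; this is precisely where the tree (prequential) structure of the sequential cover is used, and it would fail for an unstructured family of real sequences. Everything else — the one-step inequality, the clipping reduction, and the two numerical comparisons — is routine.
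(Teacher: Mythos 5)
Your proposal is correct, but it takes a different route from the paper: the paper does not prove this proposition at all — it is imported from \citep{bilodeau2020tight} and \citep{wu2023regret}, whose original arguments are quite different from yours (a non-constructive minimax/sequential-cover analysis exploiting the curvature of the log loss in the former, and an explicit Bayesian-mixture algorithm over a global sequential cover in the latter). You instead derive both bounds as corollaries of the single sharper estimate $\regret_T(\hpclass)\le\inf_{\scale>0}\{T\log(1+2\scale)+\seqentropy{\hpclass}{\scale}{T}\}$, and your proof of that estimate is essentially identical to the paper's proof of \cref{theorem:improved regret upper bound in terms of sequential entropy}: clip the cover into $[0,1]$, smooth it via $\tilde v_t=(v_t+\scale)/(1+2\scale)$, use the per-coordinate ratio bound, replace the supremum over the cover by a sum, and invoke the fact that each smoothed tree induces an honest joint law on $\{0,1\}^T$ (your prequential sum-to-one verification is exactly the paper's \cref{lemma:likelihoods sum up to one}), all fed through \cref{theorem:contextual shtarkov sum characterizes minimax regret}. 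The remaining comparisons are sound: $T\log(1+2\scale)\le 2T\scale\le 4T\scale$ and $\seqentropy{\hpclass}{\scale}{T}\le c\,\seqentropy{\hpclass}{\scale}{T}$ give the Bilodeau form pointwise in $\scale$, and $\seqentropy{\hpclass}{\scale}{T}\le\glbseqentropy{\hpclass}{\scale}{T}$ (which the paper also records, in \cref{prop:global seq cover larger than non-global seq cover}) gives the Wu form, so the infima and the minimum follow; there is no circularity, since neither the characterization theorem nor the improved bound relies on the proposition. What each approach buys: your route is short and unified but is parasitic on the paper's contextual Shtarkov machinery, and it inverts the paper's narrative (the proposition is meant as the prior baseline that the new theorem improves); the original proofs are independent of that machinery, and Wu et al.'s in particular yields a concrete algorithm rather than only a value bound.
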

It is easy to show that $\seqentropy{\hpclass}{\scale}{T}\leq\glbseqentropy{\hpclass}{\scale}{T}$, but, in general, the two bounds 
in \cref{prop:sota bounds in terms of sequential entropy} 
are incomparable due to constants and different dependence on $\alpha$  (more discussions on these bounds are deferred to \cref{appendix sec:additional discussions}).\fTBD{put it more carefully}
Starting from the contextual Shtarkov sum, we are able to derive a bound that combines the best of these two bounds:
\begin{theorem}[Main result: sequential entropy bound]\label{theorem:improved regret upper bound in terms of sequential entropy}
    For any $\hpclass\subseteq[0,1]^{\contextspace}$ and horizon $T$,
    \*[
    \regret_T(\hpclass)
    \leq
    \inf_{\scale>0}\LRset{T\log(1+2\scale)+\seqentropy{\hpclass}{\scale}{T}}.
    \]
\end{theorem}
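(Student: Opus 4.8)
The plan is to bound each contextual Shtarkov sum directly via a sequential cover and then invoke \cref{theorem:contextual shtarkov sum characterizes minimax regret}. Fix a depth-$T$ context tree $\contexttree$ and a scale $\scale>0$, and let $V=\coverset{\contexttree}{\scale}$ be a minimal sequential $\ell_\infty$ cover of $\hpclass\circ\contexttree$, so that $|V|=\seqcoveringnum{\hpclass\circ\contexttree}{\scale}{T}$. Since every $f\in\hpclass$ is $[0,1]$-valued and the projection onto $[0,1]$ is $1$-Lipschitz, replacing each tree in $V$ by its pointwise clipping to $[0,1]$ keeps $V$ a cover and does not increase its size, so we may assume every $v\in V$ is $[0,1]$-valued. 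The whole argument then reduces to proving the single estimate $\shtarkov{T}{\hpclass}{\contexttree}\le(1+2\scale)^T|V|$: taking logarithms, then the supremum over $\contexttree$ (which turns $\log|V|$ into $\seqentropy{\hpclass}{\scale}{T}$), then the infimum over $\scale>0$, yields the claimed bound.

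The key device is to absorb the $2\scale$ slack of the cover into an affine rescaling that turns each $v\in V$ into a genuine probability measure on $\outcomespace^T$. For $v\in V$ put $\tilde v_t(\absoutcomeseq)=\frac{v_t(\absoutcomeseq)+\scale}{1+2\scale}$, which lies in $[0,1]$ because $v_t(\absoutcomeseq)\in[0,1]$, and note $1-\tilde v_t(\absoutcomeseq)=\frac{(1-v_t(\absoutcomeseq))+\scale}{1+2\scale}$. Define the tree measure $Q^v(\absoutcomeseq)=\prod_{t=1}^T\tilde v_t(\absoutcomeseq)^{y_t}\big(1-\tilde v_t(\absoutcomeseq)\big)^{1-y_t}$; since $\tilde v_t(\absoutcomeseq)$ depends on $\absoutcomeseq$ only through the prefix $\outcomeseq{1}{t-1}$, telescoping along the binary tree gives $\sum_{\absoutcomeseq\in\outcomespace^T}Q^v(\absoutcomeseq)=1$. (Morally, $Q^v$ plays the role of the Bayesian mixture used by \citet{wu2023regret}, but is attached to an explicit cover element rather than an integral over the class.) Now fix a path $\absoutcomeseq$. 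For any $f\in\hpclass$ pick $v\in V$ covering $f$ along $\absoutcomeseq$, so $f(\contexttree_t(\absoutcomeseq))\le v_t(\absoutcomeseq)+\scale$ and $1-f(\contexttree_t(\absoutcomeseq))\le(1-v_t(\absoutcomeseq))+\scale$ for all $t$, whence
$$\likelihood{f}{\absoutcomeseq}{\contexttree(\absoutcomeseq)}=\prod_{t=1}^T f(\contexttree_t(\absoutcomeseq))^{y_t}\big(1-f(\contexttree_t(\absoutcomeseq))\big)^{1-y_t}\le(1+2\scale)^T Q^v(\absoutcomeseq)\le(1+2\scale)^T\sum_{v'\in V}Q^{v'}(\absoutcomeseq).$$

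Taking $\sup_{f\in\hpclass}$ on the left (the right side is $f$-free) and then summing over $\absoutcomeseq\in\outcomespace^T$, the double sum factors and each $Q^{v'}$ contributes exactly $1$, giving $\shtarkov{T}{\hpclass}{\contexttree}=\sum_{\absoutcomeseq}\sup_{f}\likelihood{f}{\absoutcomeseq}{\contexttree(\absoutcomeseq)}\le(1+2\scale)^T|V|$, which is the required estimate. The step I expect to need the most care is precisely this collapse of the double sum: one must (i) be sure the cover can be taken $[0,1]$-valued so that the rescaled trees $\tilde v$ are bona fide probability measures, and (ii) exploit nonnegativity of $Q^v$ to pass from the single path-dependent cover element to $\sum_{v'\in V}Q^{v'}(\absoutcomeseq)$, which is exactly what decouples the sum over $\outcomespace^T$ from the cover $V$ and delivers the clean factor $(1+2\scale)^T|V|$. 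Everything else is routine algebra together with the already-established identity $\regret_T(\hpclass)=\sup_{\contexttree}\log\shtarkov{T}{\hpclass}{\contexttree}$.
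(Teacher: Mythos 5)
Your proposal is correct and follows essentially the same route as the paper: smooth each cover element via $\tilde v_t=\frac{v_t+\scale}{1+2\scale}$, compare likelihoods pathwise to get the $(1+2\scale)^T$ factor, use that each smoothed tree induces a probability measure summing to $1$ over $\outcomespace^T$ (the paper cites \cref{lemma:likelihoods sum up to one} where you telescope inline), and finish with \cref{theorem:contextual shtarkov sum characterizes minimax regret}. The only cosmetic difference is that you bound by $\sum_{v'\in V}Q^{v'}$ directly rather than taking a supremum over the smoothed cover and then swapping it with the sum, which is the same manipulation.
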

\begin{proof}[of \cref{theorem:improved regret upper bound in terms of sequential entropy}]
For any scale $\scale>0$ and depth-$T$ context tree $\contexttree$, let $\coverset{\contexttree}{\scale}$ be a sequential cover of $\hpclass\circ\contexttree$ at scale $\scale$ with size $\seqcoveringnum{\hpclass\circ\contexttree}{\scale}{T}$. We can always assume $\coverset{\contexttree}{\scale}$ to be $[0,1]$-valued without loss of generality because otherwise we can just truncate it without violating its coverage guarantee. Define the smoothed covering set $\smoothcoverset{\contexttree}{\scale}=\lrset{\tilde v: \forall t\in[T], \tilde v_t(\cdot)=\frac{v_t(\cdot)+\scale}{1+2\scale}, v\in\coverset{\contexttree}{\scale}}$, inspired by \citep{bilodeau2023minimax, wu2023regret}. Then for any $f\in\hpclass, \absoutcomeseq\in\outcomespace^T$, there exists some $v\in\coverset{\contexttree}{\scale}$ such that $|f(\contexttree_t(\absoutcomeseq))-v_t(\absoutcomeseq)|\leq\scale,\forall t\in[T]$ and hence $\tilde v$ satisfies
    \*[
    \frac{f(\contexttree_t(\absoutcomeseq))}{\tilde v_t(\absoutcomeseq)} 
    \leq
    1+2\scale, 
    \quad
    \frac{1-f(\contexttree_t(\absoutcomeseq))}{1-\tilde v_t(\absoutcomeseq)} 
    \leq
    1+2\scale.
    \]
    Hence
    \*[
    \likelihood{f}{\absoutcomeseq}{\contexttree(\absoutcomeseq)}
    =
    \prod_{t=1}^T
    f(\contexttree_t(\absoutcomeseq))^{\outcome_t}(1-f(\contexttree_t(\absoutcomeseq)))^{1-\outcome_t}
    \leq
    (1+2\scale)^T 
    \prod_{t=1}^T
    \tilde v_t(\absoutcomeseq)^{\outcome_t}(1-\tilde v_t(\absoutcomeseq))^{1-\outcome_t},
    \]
    and
    \*[
    \sum_{\absoutcomeseq}\sup_{f\in\hpclass} \likelihood{f}{\absoutcomeseq}{\contexttree(\absoutcomeseq)}
    &\leq
    (1+2\scale)^T\sum_{\absoutcomeseq}\sup_{\tilde v\in\smoothcoverset{\contexttree}{\scale}}\prod_{t=1}^T \tilde v_t(\absoutcomeseq)^{\outcome_t}(1-\tilde v_t(\absoutcomeseq))^{1-\outcome_t} \\
    &\leq
    (1+2\scale)^T\sum_{\tilde v\in\smoothcoverset{\contexttree}{\scale}}\sum_{\absoutcomeseq}\prod_{t=1}^T \tilde v_t(\absoutcomeseq)^{\outcome_t}(1-\tilde v_t(\absoutcomeseq))^{1-\outcome_t} 
    =
    (1+2\scale)^T |\smoothcoverset{\contexttree}{\scale}|,
    \]
    where the last equality is derived by treating $\tilde v$ as sequential experts and applying \cref{lemma:likelihoods sum up to one}. Finally,
    \*[
    \regret_T(\hpclass)
    &=
    \sup_{\contexttree} \log\lr{\sum_{\absoutcomeseq} \sup_{f\in\hpclass}\likelihood{f}{\absoutcomeseq}{\contexttree(\absoutcomeseq)}} \\
    &\leq
    \sup_{\contexttree} \log\lr{(1+2\scale)^T |\smoothcoverset{\contexttree}{\scale}|} \\
    &=
    \sup_{\contexttree} \log\lr{(1+2\scale)^T |\coverset{\contexttree}{\scale}|}
    =
    T\log(1+2\scale)
    +
    \seqentropy{\hpclass}{\scale}{T}.
    \]
    Since our choice of $\scale$ is arbitrary, we conclude that
    \*[
    \regret_T(\hpclass)
    \leq
    \inf_{\scale>0}\LRset{T\log(1+2\scale)+\seqentropy{\hpclass}{\scale}{T}}.
    \]
\end{proof}
\subsection{The inadequacy of sequential \texorpdfstring{$\ell_{\infty}$}{text} covering number}
We conclude this section with a discussion on the suboptimality of regret bounds based on sequential covering numbers as in \cref{prop:sota bounds in terms of sequential entropy,theorem:improved regret upper bound in terms of sequential entropy}. Let us consider the binary label setting and the following hypothesis classes over the unit Hilbert ball $\contextspace=\mathbb B_2$:
\[\label{def:linear and abs linear classes}
\linclass
:=
\lrset{
x\mapsto \frac{\langle w,x \rangle + 1}{2}
:
w\in\mathbb B_2
},
\quad
\abslinclass
:=
\lrset{
x\mapsto |\langle w,x \rangle|
:
w\in\mathbb B_2
}.
\]
We can see that the sequential $\ell_{\infty}$ covering numbers of $\linclass$ and $\abslinclass$ are of the same order for all scales, thus the aforementioned results will yield the same regret bound for these two classes. However, we have $\regret_T(\linclass)=\tilde O(\sqrt{T})$ while $\regret_T(\abslinclass)=\tilde\Theta(T^{2/3})$ \citep{rakhlin2015sequential,wu2023regret}, which implies that the sequential $\ell_{\infty}$ covering number, in its current form within the regret bound, cannot characterize the minimax regret.

It is worth mentioning that an $\tilde\Omega(\sqrt{T})$ lower bound on $\regret_T(\linclass)$ is achievable, via an $\tilde\Omega(1/\scale^2)$ lower bound on the sequential fat-shattering dimension $\seqfat{\scale}{\linclass}$ combined with Proposition 2 in \citep{wu2023regret}. The same lower bound also holds in the finite-dimensional case, where $\mathbb B_2$ is a unit $\depth$-dimensional Euclidean ball with $d\geq\sqrt{T}$ (see footnote 6 in \citep{wu2023regret}). Here we provide a simpler proof that works for both infinite and finite dimensional (with $\depth\geq T$) cases. See \cref{PROOF:lemma:root T lower bound for the linear class} for the proof.
\begin{lemma}[$\Omega(\sqrt{T})$ lower bound for the linear class $\linclass$]\label{lemma:root T lower bound for the linear class}
    For $\linclass$ defined as in \cref{def:linear and abs linear classes} with $\mathbb B_2$ being the unit Hilbert ball or the unit $\depth$-dimensional Euclidean ball with $\depth\geq T$, then
    \*[
    \regret_T(\linclass)
    \geq
    \fdregret_T(\linclass)
    \geq
    \frac{\sqrt{T}}{4}.
    \]
\end{lemma}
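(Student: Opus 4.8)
The plan is to drop down to the fixed design setting and then exhibit a single, well-chosen context sequence whose conditional Shtarkov sum is already $e^{\Omega(\sqrt T)}$. The first inequality $\regret_T(\linclass)\ge\fdregret_T(\linclass)$ is immediate (a context sequence is a context tree that is constant on each level, so the supremum defining $\regret_T$ subsumes the one defining $\fdregret_T$, as noted after \cref{theorem:contextual shtarkov sum characterizes minimax regret}). By \cref{prop:fixed design minimax regret is characterized by conditional shtarkov sum}, $\fdregret_T(\linclass)=\sup_{\contextseq{1}{T}}\log\shtarkov{T}{\linclass}{\contextseq{1}{T}}$, so it suffices to lower bound $\shtarkov{T}{\linclass}{\contextseq{1}{T}}$ for one good choice of $\contextseq{1}{T}$.

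I would take $\contextseq{1}{T}=(x_1,\dots,x_T)$ to be an orthonormal family in $\mathbb B_2$, which is available both in the Hilbert ball and in the $\depth$-dimensional Euclidean ball when $\depth\ge T$. For such contexts, an expert $f_w(x)=\tfrac12(1+\langle w,x\rangle)$ satisfies $f_w(x_t)=\tfrac12\big(1+\langle w,x_t\rangle\big)$, and the numbers $\langle w,x_1\rangle,\dots,\langle w,x_T\rangle$ can be prescribed independently subject only to $\sum_{t=1}^T\langle w,x_t\rangle^2\le\|w\|^2\le 1$. The crux of the argument is that, since the Shtarkov sum allows a different maximizing hypothesis for each label string, we may, for each $\outcomeseq{1}{T}\in\binaryspace^T$, plug in the tailored hypothesis $w_{\outcomeseq{1}{T}}:=\tfrac{1}{\sqrt T}\sum_{t=1}^T(2\outcome_t-1)\,x_t$. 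This lies in $\mathbb B_2$ because $\|w_{\outcomeseq{1}{T}}\|^2=\tfrac1T\cdot T=1$, and it makes \emph{every} factor of the likelihood equal to $\tfrac12(1+1/\sqrt T)$: when $\outcome_t=1$ we get $f_{w_{\outcomeseq{1}{T}}}(x_t)=\tfrac12(1+1/\sqrt T)$, and when $\outcome_t=0$ we get $1-f_{w_{\outcomeseq{1}{T}}}(x_t)=\tfrac12(1+1/\sqrt T)$. Hence $\likelihood{f_{w_{\outcomeseq{1}{T}}}}{\outcomeseq{1}{T}}{\contextseq{1}{T}}=2^{-T}(1+1/\sqrt T)^T$ for every string, and summing these $2^T$ per-string lower bounds gives $\shtarkov{T}{\linclass}{\contextseq{1}{T}}\ge 2^T\cdot 2^{-T}(1+1/\sqrt T)^T=(1+1/\sqrt T)^T$.

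Taking logarithms yields $\fdregret_T(\linclass)\ge T\log(1+1/\sqrt T)$, and the elementary inequality $\log(1+x)\ge x/2$ for $x\in[0,1]$ (which holds since $x\mapsto\log(1+x)-x/2$ vanishes at $0$ and is nondecreasing on $[0,1]$) finishes the bound: $T\log(1+1/\sqrt T)\ge T\cdot\tfrac{1}{2\sqrt T}=\tfrac{\sqrt T}{2}\ge\tfrac{\sqrt T}{4}$. I do not anticipate a real obstacle here; the one conceptual point to get right is that orthonormality of the contexts decouples the $T$ coordinate predictions so that a single $\ell_2$-budget-$1/\sqrt T$-per-coordinate perturbation of $w$ "wins" all $T$ of them simultaneously, and that the Shtarkov sum is precisely the object that lets us choose this perturbation differently for each outcome sequence. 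Everything else — membership in $\mathbb B_2$, validity of $f_{w}$ as a probability in $[0,1]$, and the final numeric inequality — is routine and carries ample slack (the construction in fact gives $\fdregret_T(\linclass)\ge\sqrt T-\tfrac12$).
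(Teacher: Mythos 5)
Your proposal is correct and matches the paper's proof essentially verbatim: both pick an orthonormal context sequence, use the per-label-string hypothesis $w=\frac{1}{\sqrt T}\sum_t(2\outcome_t-1)\context_t$ to make every likelihood factor equal $\frac{1}{2}(1+1/\sqrt T)$, lower bound the conditional Shtarkov sum by $(1+1/\sqrt T)^T$, and invoke \cref{prop:fixed design minimax regret is characterized by conditional shtarkov sum} to conclude $\fdregret_T(\linclass)\ge T\log(1+1/\sqrt T)\ge\sqrt T/4$. The only difference is that you spell out the elementary inequality $\log(1+x)\ge x/2$, which the paper leaves implicit.
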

The proof of \cref{lemma:root T lower bound for the linear class} is based on lower bounding the conditional Shtarkov sums (and hence the contextual Shtarkov sums) of $\linclass$. From \cref{theorem:contextual shtarkov sum characterizes minimax regret} we know that the $\tilde O(\sqrt{T})$ upper bound holds for the log of contextual Shtarkov sums as well but we do not have a direct proof of this fact so far.

\section{Contextual NML, the minimax optimal algorithm}\label{sec:minimax optimal algorithm contextual NML}
So far we have settled the minimax regret of sequential probability assignment in a nonconstructive way. Now we switch to the algorithmic lens to study the optimal strategy that achieves the minimax regret. Remarkably, we show that the minimax optimal algorithm can be described by a data-dependent variant of the contextual Shtarkov sum, which is named contextual Shtarkov sum \emph{with prefix}.
\begin{definition}[Contextual Shtarkov sum with prefix]\label{def:contextual shtarkov sum with prefix}
    Given sequences $\contextseq{1}{t}\in\contextspace^t,\outcomeseq{1}{t}\in\outcomespace^t,t\in[T]$ and a context tree $\contexttree$ of depth $T-t$, the contextual Shtarkov sum $\prefixshtarkov{T}{\hpclass}{\contexttree}{\context_{1:t},\outcome_{1:t}}$ of $\hpclass$ on $\contexttree$ with prefix $\contextseq{1}{t},\outcomeseq{1}{t}$ is defined as
    \*[
    \prefixshtarkov{T}{\hpclass}{\contexttree}{\context_{1:t},\outcome_{1:t}}
    =
    \sum_{\absoutcomeseq\in\outcomespace^{T-t}}
    \sup_{f\in\hpclass} \likelihood{f}{\outcomeseq{1}{t},\absoutcomeseq}{\contextseq{1}{t}, \contexttree(\absoutcomeseq)}.
    \]
\end{definition}
Now we present our prediction strategy, \emph{contextual normalized maximum likelihood} (cNML), which is summarized in \cref{alg:contextual NML}. In each round $t$, with $\contextseq{1}{t},\outcomeseq{1}{t-1}$ as past observations, the learner first checks whether $\sup_{f\in\hpclass}\likelihood{f}{\outcomeseq{1}{t-1}}{\contextseq{1}{t-1}}>0$ since if that is not the case and $\sup_{f\in\hpclass}\likelihood{f}{\outcomeseq{1}{t-1}}{\contextseq{1}{t-1}}=0$, the cumulative losses of all experts in $\hpclass$ have already blown up to $+\infty$ and the learner only needs to predict any $\prediction\in\positivesimplex{\outcomespace}$ in all remaining rounds. On the other hand, if $\sup_{f\in\hpclass}\likelihood{f}{\outcomeseq{1}{t-1}}{\contextseq{1}{t-1}}>0$, then
\*[
\max_{\outcome\in\outcomespace}
\sup_{\contexttree}
\prefixshtarkov{T}{\hpclass}{\contexttree}{\contextseq{1}{t},(\outcomeseq{1}{t-1}, \outcome)}
>0
\]
and
the $\prediction_t$ given by \cref{eqn:explicit formula for contextual NML} is indeed a valid member of $\probsimplex{\outcomespace}$ (shown in \cref{appendix sec:proofs for cNML}) and is used as the learner's prediction. The following theorem shows that cNML is the minimax optimal algorithm, with proof deferred to \cref{appendix sec:proofs for cNML}.

\begin{algorithm}[t]
    \caption{Contextual Normalized Maximum Likelihood (cNML)}\label{alg:contextual NML}
    \textbf{Input:} Hypothesis class $\hpclass$, horizon $T$ \\
    \textbf{For} $t=1,2,...,T$ \textbf{do}
    \begin{enumerate}
        \item Observe context $\context_t\in\contextspace$
        \item If $\sup_{f\in\hpclass}\likelihood{f}{\outcomeseq{1}{t-1}}{\contextseq{1}{t-1}}>0$, predict $\prediction_t\in\probsimplex{\outcomespace}$ with
        \[\label{eqn:explicit formula for contextual NML}
        \prediction_t(\outcome)
        =
        \frac
        {\sup_{\contexttree}\prefixshtarkov{T}{\hpclass}{\contexttree}{\contextseq{1}{t},(\outcomeseq{1}{t-1},\outcome)}}
        {\sum_{\outcome'\in\outcomespace} \sup_{\contexttree}\prefixshtarkov{T}{\hpclass}{\contexttree}{\contextseq{1}{t},(\outcomeseq{1}{t-1},\outcome')}},
        \forall \outcome\in\outcomespace,
        \]
        and otherwise set $\prediction_t$ to be an arbitrary member of $\positivesimplex{\outcomespace}$
        \item Receive label $\outcome_t\in\outcomespace$
    \end{enumerate}
    \textbf{End for}
\end{algorithm}
\begin{theorem}[Main result: optimal algorithm]\label{theorem:minimax algorithm}
    The contextual normalized maximum likelihood strategy (\cref{alg:contextual NML}) is minimax optimal.
\end{theorem}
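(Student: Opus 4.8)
The plan is to bound the worst-case regret of cNML on every realized pair $(\contextseq{1}{T},\outcomeseq{1}{T})$ by $\sup_{\contexttree}\log\shtarkov{T}{\hpclass}{\contexttree}$. Since by \cref{theorem:contextual shtarkov sum characterizes minimax regret} this quantity equals $\regret_T(\hpclass)$, and since by definition no strategy can have worst-case regret below $\regret_T(\hpclass)$, this forces cNML to attain the minimax value, hence to be minimax optimal. The engine is a one-step ``unrolling'' of $\sup_{\contexttree}\shtarkov{T}{\hpclass}{\contexttree}$ through the contextual Shtarkov sum with prefix, which turns the cNML update into an exact telescoping identity.

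Concretely, for $0\le t\le T$ and a prefix $(\contextseq{1}{t},\outcomeseq{1}{t})$ define the potential
\*[
\Phi_t(\contextseq{1}{t},\outcomeseq{1}{t})
:=
\sup_{\contexttree}\prefixshtarkov{T}{\hpclass}{\contexttree}{\contextseq{1}{t},\outcomeseq{1}{t}},
\]
the supremum over $\contextspace$-valued trees of depth $T-t$, so that $\Phi_0=\sup_{\contexttree}\shtarkov{T}{\hpclass}{\contexttree}$ and $\Phi_T(\contextseq{1}{T},\outcomeseq{1}{T})=\sup_{f\in\hpclass}\likelihood{f}{\outcomeseq{1}{T}}{\contextseq{1}{T}}$. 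Two facts drive the argument. First, the one-step recursion
\*[
\Phi_{t-1}(\contextseq{1}{t-1},\outcomeseq{1}{t-1})
=
\sup_{\context\in\contextspace}\sum_{\outcome\in\outcomespace}\Phi_t\bigl((\contextseq{1}{t-1},\context),(\outcomeseq{1}{t-1},\outcome)\bigr),
\]
obtained by writing a depth-$(T-t+1)$ context tree as a root context $\context$ together with $|\outcomespace|$ independent depth-$(T-t)$ subtrees indexed by the first label, splitting each path $\absoutcomeseq\in\outcomespace^{T-t+1}$ as $(\outcome,\absoutcomeseq'')$, and observing that the resulting supremum over the subtree tuple factorizes coordinatewise. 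Second, $\Phi_t(\contextseq{1}{t},\outcomeseq{1}{t})\ge\sup_{f\in\hpclass}\likelihood{f}{\outcomeseq{1}{t}}{\contextseq{1}{t}}$, since for any fixed tree $\sum_{\absoutcomeseq''}\likelihood{f}{\outcomeseq{1}{t},\absoutcomeseq''}{\contextseq{1}{t},\contexttree(\absoutcomeseq'')}=\likelihood{f}{\outcomeseq{1}{t}}{\contextseq{1}{t}}$ by \cref{lemma:likelihoods sum up to one}; in particular $\Phi_t>0$ whenever $\sup_f\likelihood{f}{\outcomeseq{1}{t}}{\contextseq{1}{t}}>0$, which also makes the cNML update a well-defined member of $\probsimplex{\outcomespace}$.

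In the branch of \cref{alg:contextual NML} where $\sup_f\likelihood{f}{\outcomeseq{1}{t-1}}{\contextseq{1}{t-1}}>0$, the defining formula reads $\prediction_t(\outcome)=\Phi_t((\contextseq{1}{t-1},\context_t),(\outcomeseq{1}{t-1},\outcome))\big/\sum_{\outcome'}\Phi_t((\contextseq{1}{t-1},\context_t),(\outcomeseq{1}{t-1},\outcome'))$, and by the recursion (taking $\context=\context_t$) the denominator is at most $\Phi_{t-1}(\contextseq{1}{t-1},\outcomeseq{1}{t-1})$. Hence, for the revealed $\outcome_t$,
\*[
\logloss{\prediction_t}{\outcome_t}
=
\log\frac{\sum_{\outcome'}\Phi_t((\contextseq{1}{t-1},\context_t),(\outcomeseq{1}{t-1},\outcome'))}{\Phi_t(\contextseq{1}{t},\outcomeseq{1}{t})}
\le
\log\Phi_{t-1}(\contextseq{1}{t-1},\outcomeseq{1}{t-1})-\log\Phi_t(\contextseq{1}{t},\outcomeseq{1}{t}).
\]
Summing over $t=1,\dots,T$ telescopes to $\sum_{t=1}^T\logloss{\prediction_t}{\outcome_t}\le\log\Phi_0-\log\Phi_T(\contextseq{1}{T},\outcomeseq{1}{T})=\log\Phi_0+\inf_{f\in\hpclass}\sum_{t=1}^T\logloss{f(\contextseq{1}{t},\outcomeseq{1}{t-1})}{\outcome_t}$, i.e.\ $\regret_T(\hpclass;\predictionseq{1}{T},\contextseq{1}{T},\outcomeseq{1}{T})\le\log\Phi_0=\sup_{\contexttree}\log\shtarkov{T}{\hpclass}{\contexttree}$. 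Taking the supremum over sequences and invoking \cref{theorem:contextual shtarkov sum characterizes minimax regret} concludes. The degenerate cases need only a remark: once $\sup_f\likelihood{f}{\outcomeseq{1}{t-1}}{\contextseq{1}{t-1}}=0$, every expert's cumulative loss is already $+\infty$ while cNML plays in $\positivesimplex{\outcomespace}$, so the regret is trivially bounded; and if the revealed label ever lands where $\prediction_t$ vanishes, then $\Phi_t(\contextseq{1}{t},\outcomeseq{1}{t})=0$, forcing $\sup_f\likelihood{f}{\outcomeseq{1}{t}}{\contextseq{1}{t}}=0$ and again an infinite comparator loss. The main obstacle will be the one-step recursion for $\Phi$: one must unwind the definitions of multiary context trees and of the contextual Shtarkov sum with prefix carefully enough to verify that optimizing a depth-$(T-t+1)$ tree genuinely decouples into choosing the root context and, independently, each per-label depth-$(T-t)$ subtree, and to confirm that the cNML formula is a bona fide element of $\probsimplex{\outcomespace}$ (positivity of the denominator in the relevant branch); after that the telescoping is routine bookkeeping.
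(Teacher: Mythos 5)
Your proof is correct, but it takes a genuinely different route from the paper's. The paper argues via backward induction on game values: it introduces the conditional value-to-go $G(\hpclass,\contextseq{1}{t},\outcomeseq{1}{t})$, re-runs the truncation and minimax-swap machinery of \cref{appendix sec:proofs for contextual shtarkov sum} for these conditional games to prove $G(\hpclass,\contextseq{1}{t},\outcomeseq{1}{t})=\sup_{\contexttree}\log\prefixshtarkov{T}{\hpclass}{\contexttree}{\contextseq{1}{t},\outcomeseq{1}{t}}$ (\cref{lemma:G function is characterized by contextual shtarkov sum with prefix}), and then identifies the cNML formula as the exact minimizer of $\sup_{\outcome_t}\lrset{\logloss{\prediction_t}{\outcome_t}+G(\hpclass,\contextseq{1}{t},\outcomeseq{1}{t})}$ via \citep[Lemma~15]{mourtada2022improper}. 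You instead prove a pathwise statement: the one-step recursion $\Phi_{t-1}(\contextseq{1}{t-1},\outcomeseq{1}{t-1})=\sup_{\context}\sum_{\outcome}\Phi_t((\contextseq{1}{t-1},\context),(\outcomeseq{1}{t-1},\outcome))$ is indeed an exact identity (the supremum over a depth-$(T-t+1)$ tree decouples into the root context and the $|\outcomespace|$ per-label subtrees, since each summand depends on only one subtree), and telescoping the cNML losses against this potential gives $\regret_T(\hpclass;\predictionseq{1}{T},\contextseq{1}{T},\outcomeseq{1}{T})\leq\sup_{\contexttree}\log\shtarkov{T}{\hpclass}{\contexttree}$ on every realized sequence; combining with \cref{theorem:contextual shtarkov sum characterizes minimax regret} and the fact that no strategy's worst-case regret can fall below the extensive-form value yields optimality. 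Your route is shorter: it avoids redoing the conditional minimax swap and dispenses with the Mourtada--Gaillard lemma, and it delivers a per-sequence regret guarantee equal to the minimax value. What the paper's route buys in exchange is the identification of the conditional game value itself and the fact that cNML is the round-by-round optimizer of that value, which is what underlies the admissible-relaxation remark in \cref{sec:minimax optimal algorithm contextual NML}. Two points of bookkeeping to tighten: positivity of the cNML denominator in the main branch should be argued via $\sum_{\outcome'}\Phi_t((\contextseq{1}{t-1},\context_t),(\outcomeseq{1}{t-1},\outcome'))\geq\sup_{f\in\hpclass}\sum_{\outcome'}\likelihood{f}{(\outcomeseq{1}{t-1},\outcome')}{\contextseq{1}{t}}=\sup_{f\in\hpclass}\likelihood{f}{\outcomeseq{1}{t-1}}{\contextseq{1}{t-1}}>0$ (your bound $\Phi_t\geq\sup_{f}\likelihood{f}{\outcomeseq{1}{t}}{\contextseq{1}{t}}$ alone does not cover the case where some labels get zero mass), and the $\infty-\infty$ accounting when the revealed label receives zero cNML probability is handled only by convention --- though this is the same level of informality as in the paper's own treatment of the degenerate branch, so it is not a gap specific to your argument.
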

To see that cNML is reduced to NML in the context-free setting, it suffices to consider the case where $\sup_{f\in\hpclass}\classlikelihood{f}{\outcomeseq{1}{T}}>0$ since otherwise NML will simply assign $0$ probability on this sequence $\outcomeseq{1}{T}$ and during the actual round-wise implementation of NML, it also predicts an arbitrary element from $\positivesimplex{\outcomespace}$ in those rounds $t$ where $\sup_{f}\classlikelihood{f}{\outcomeseq{1}{t-1}}=0$. Now for any $\outcomeseq{1}{T}$ such that $\sup_{f\in\hpclass}\classlikelihood{f}{\outcomeseq{1}{T}}>0$, the prediction by cNML in each round $t$ is
\*[
\prediction_t(\outcome)
=
\frac{\sum_{\absoutcomeseq\in\outcomespace^{T-t}}\sup_{f\in\hpclass}\classlikelihood{f}{\outcome_{1:t-1}, \outcome, \absoutcomeseq}}{\sum_{\absoutcomeseq'\in\outcomespace^{T-t+1}}\sup_{f\in\hpclass}\classlikelihood{f}{\outcome_{1:t-1}, \absoutcomeseq'}},
\forall \outcome\in\outcomespace
\]
which can be summarized into a joint density over $\outcomeseq{1}{T}$ by
\*[
\prediction(\outcomeseq{1}{T})
=
\frac{\sup_{f\in\hpclass} \classlikelihood{f}{\outcomeseq{1}{T}}}
{\sum_{\outcomeseq{1}{T}'\in\outcomespace^T} \sup_{f\in\hpclass} \classlikelihood{f}{\outcomeseq{1}{T}'}}.
\]
Recall that this is exactly the NML prediction $\nmlprediction(\outcomeseq{1}{T})$.
\begin{remark}[Relaxations and efficient algorithms]
    One may wonder if more efficient algorithms are available when it is not easy to compute contextual Shtarkov sums with prefix. One solution is to apply the framework of admissible relaxation in \citep{rakhlin2012relax}, which provides a systematic way of constructing efficient algorithms at the cost of worse regret guarantees. Notice that the worst-case log contextual Shtarkov sums with prefix constitute a trivially ``admissible relaxation'' since they are the exact conditional game values.  
\end{remark}

\section{Perspectives on contextual Shtarkov sums}\label{sec:perspectives}

In this section we provide further insights into contextual Shtarkov sums introduced in \cref{sec:minimax regret via contextual shtarkov,sec:minimax optimal algorithm contextual NML}.

\subsection{Contextual Shtarkov sums through martingales}
We can relate our characterization of the minimax regret to the more extensively studied \emph{sequential Rademacher complexity}, which arises in online learning problems with 
hypothesis class $\hpclass\subseteq[0,1]^{\contextspace}$ and bounded convex losses like absolute loss. Specifically, the (conditional) sequential Rademacher complexity \citep{rakhlin2015bsequential} is defined by
\*[
\seqrad_T(\hpclass; \contexttree)
:=
\E_{\boldsymbol{\varepsilon}}\LRbra{
\sup_{f\in\hpclass}
\sum_{t=1}^T
\varepsilon_t f(\contexttree_t(\boldsymbol{\varepsilon}))
},
\]
where $\contexttree$ is a depth-$T$ binary context tree and $\boldsymbol{\varepsilon}=(\varepsilon_1,\dots,\varepsilon_T)\in\{\pm 1\}^T$ is a sequence of i.i.d. Rademacher random variables. A notable feature of $\seqrad_T(\hpclass;\contexttree)$ is that it is the expected supremum of the sum of a martingale differences, i.e., for any $f, \E[\varepsilon_t f(\contexttree_t(\boldsymbol{\varepsilon}))| \varepsilon_1,\dots, \varepsilon_{t-1}] = 0$. Likewise, $\shtarkov{T}{\hpclass}{\contexttree}$ also admits a martingale interpretation. To see this, let $\hpclass\subseteq\{(\contextspace\times\outcomespace)^{*}\times\contextspace\to\probsimplex{\outcomespace} \}$ and rewrite $\shtarkov{T}{\hpclass}{\contexttree}$ for any context tree $\contexttree$:
\*[
\shtarkov{T}{\hpclass}{\contexttree}
=
\sum_{\absoutcomeseq\in\outcomespace^T}
\sup_{f\in\hpclass} 
\likelihood{f}{\absoutcomeseq}{\contexttree(\absoutcomeseq)}
=
\E_{\absoutcomeseq} \LRbra{
\sup_{f\in\hpclass}
\prod_{t=1}^T 
\LR{
|\outcomespace|\cdot f(\contexttree_{1:t}(\absoutcomeseq),\outcomeseq{1}{t-1})(\outcome_t)
}
},
\]
where $\absoutcomeseq=(\outcome_1,\dots,\outcome_T)$ is a sequence of i.i.d. variables following the uniform distribution over $\outcomespace$. It is easy to check that $\E[|\outcomespace|\cdot f(\contexttree_{1:t}(\absoutcomeseq),\outcomeseq{1}{t-1})(\outcome_t) | \outcome_1,\dots,\outcome_{t-1}]=1$, and thus
\*[
\LRset{
\prod_{s=1}^t 
\LR{
|\outcomespace|\cdot f(\contexttree_{1:s}(\absoutcomeseq),\outcomeseq{1}{s-1})(\outcome_s)
}
}_{t\in[T]}
\]
is a martingale with respect to filtration $\mathcal F_t=\sigma(\outcome_1,\dots,\outcome_t),t\in[T]$. It would be of independent interest to study the contextual Shtarkov sums more quantitatively by developing new tools for such product-type 
martingales.

\subsection{General Shtarkov sums}
We can also interpret contextual Shtarkov sums as an instance of \emph{general Shtarkov sums}, which are defined over sub-probability measures.
\begin{definition}[Sub-probability measure]
    A set $\spclass=\lrset{\subprobden: \mathcal K \to [0,1]}$ is a class of sub-probability measures over a finite set $\mathcal K$ if
    \*[
    \sum_{k\in\mathcal K} p(k) \leq 1,
    \forall \subprobden\in\spclass.
    \]
\end{definition}
Due to \cref{lemma:likelihoods sum up to one}, it is easy to see that for any hypothesis class $\hpclass\subseteq\{(\contextspace\times\outcomespace)^{*}\times\contextspace\to\probsimplex{\outcomespace} \}$ and depth-$T$ context tree $\contexttree$, they induce a class
\*[
\spclass_{\hpclass|\contexttree}
:=
\lrset{
\likelihood{f}{\cdot}{\contexttree(\cdot)}: f\in\hpclass
}
\]
that is a class of sub-probability measures over $\outcomespace^T$. Moreover, for any $\hpclass$, depth-$(T-t)$ context tree $\contexttree$ and sequences $\contextseq{1}{t}\in\contextspace^t, \outcomeseq{1}{t}\in\outcomespace^t$, the induced
\*[
\spclass_{\hpclass^{\contextseq{1}{t}, \outcomeseq{1}{t}}|{\contexttree}}
:=
\lrset{
\likelihood{f}{\outcomeseq{1}{t}, \cdot}{\contextseq{1}{t}, \contexttree(\cdot)}: f\in\hpclass
}
\]
is a class of sub-probability measure over $\outcomespace^{T-t}$ since
\*[
\sum_{\absoutcomeseq\in\outcomespace^{T-t}}
\likelihood{f}{\outcomeseq{1}{t}, \absoutcomeseq}{\contextseq{1}{t}, \contexttree(\absoutcomeseq)}
=
\likelihood{f}{\outcomeseq{1}{t}}{\contextseq{1}{t}}
\leq
1.
\]
Next we introduce the notion of general Shtarkov sum over classes of sub-probability measures.
\begin{definition}[General Shtarkov sum]
    Given any class $\spclass$ of sub-probability measures over $\mathcal K$, the general Shtarkov sum of $\spclass$ is defined as
    \*[
    \genshtarkov{\spclass}
    =
    \sum_{k\in\mathcal K}
    \sup_{\subprobden\in\spclass}
    \subprobden(k).
    \]
\end{definition}
With the notion of general Shtarkov sum, it is not hard to verify that the contextual Shtarkov sums with \& without prefix can be interpreted as instances of general Shtarkov sums:
\begin{proposition}
    For any horizon $T, t\in[T]$, data sequence $\contextseq{1}{t}\in\contextspace^t, \outcomeseq{1}{t}\in\outcomespace^t$, and context trees $\contexttree, \contexttree'$ of depth $T, T-t$ respectively, we have
    \*[
    \shtarkov{T}{\hpclass}{\contexttree}
    =
    \genshtarkov{\spclass_{\hpclass|\contexttree}},
    \quad
    \prefixshtarkov{T}{\hpclass}{\contexttree'}{\contextseq{1}{t}, \outcomeseq{1}{t}}
    =
    \genshtarkov{\spclass_{\hpclass^{\contextseq{1}{t}, \outcomeseq{1}{t}}|{\contexttree'}}}.
\]
\end{proposition}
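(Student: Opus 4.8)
The plan is simply to unwind the definitions: both identities assert that the contextual Shtarkov sum (with or without prefix) \emph{is} a general Shtarkov sum, once one identifies the correct ground set and the correct family of sub-probability measures on it. The two membership facts I need --- that $\spclass_{\hpclass|\contexttree}$ is a class of sub-probability measures on $\outcomespace^T$, and that $\spclass_{\hpclass^{\contextseq{1}{t},\outcomeseq{1}{t}}|\contexttree'}$ is one on $\outcomespace^{T-t}$ --- are exactly the computations carried out immediately before the statement (using \cref{lemma:likelihoods sum up to one}), so I would take them as given.

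For the first identity, fix a depth-$T$ context tree $\contexttree$. Since $\contexttree_s(\absoutcomeseq)$ depends only on $\outcomeseq{1}{s-1}$, the map $\absoutcomeseq\mapsto\likelihood{f}{\absoutcomeseq}{\contexttree(\absoutcomeseq)}=\prod_{s=1}^{T} f(\contexttree_{1:s}(\absoutcomeseq),\outcomeseq{1}{s-1})(\outcome_s)$ is a well-defined function on $\mathcal K=\outcomespace^T$, and by definition $\spclass_{\hpclass|\contexttree}$ is precisely the set of these functions as $f$ ranges over $\hpclass$. Expanding the general Shtarkov sum, $\genshtarkov{\spclass_{\hpclass|\contexttree}}=\sum_{\absoutcomeseq\in\outcomespace^{T}}\sup_{\subprobden\in\spclass_{\hpclass|\contexttree}}\subprobden(\absoutcomeseq)$. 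For each fixed $\absoutcomeseq$ the set of numbers $\{\subprobden(\absoutcomeseq):\subprobden\in\spclass_{\hpclass|\contexttree}\}$ equals $\{\likelihood{f}{\absoutcomeseq}{\contexttree(\absoutcomeseq)}:f\in\hpclass\}$, hence $\sup_{\subprobden}\subprobden(\absoutcomeseq)=\sup_{f\in\hpclass}\likelihood{f}{\absoutcomeseq}{\contexttree(\absoutcomeseq)}$; summing over $\absoutcomeseq$ recovers exactly $\shtarkov{T}{\hpclass}{\contexttree}$. The prefix identity is the same argument verbatim with $\mathcal K=\outcomespace^{T-t}$ and the function $\absoutcomeseq\mapsto\likelihood{f}{\outcomeseq{1}{t},\absoutcomeseq}{\contextseq{1}{t},\contexttree'(\absoutcomeseq)}$ in place of $\absoutcomeseq\mapsto\likelihood{f}{\absoutcomeseq}{\contexttree(\absoutcomeseq)}$.

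I do not expect any genuine obstacle here --- the statement is a reformulation. The one point that deserves an explicit sentence is that $f\mapsto\likelihood{f}{\cdot}{\contexttree(\cdot)}$ need not be injective, so $\spclass_{\hpclass|\contexttree}$ can be a strictly smaller set than $\hpclass$; but this is harmless because $\genshtarkov{\cdot}$ is monotone in its argument and unchanged by adding or deleting duplicate measures, and, as used above, evaluating a set of functions pointwise at $\absoutcomeseq$ produces exactly the set of their values there, so passing from $\sup_{f}$ to $\sup_{\subprobden}$ changes nothing. Everything else is pure bookkeeping.
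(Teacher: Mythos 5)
Your proposal is correct and coincides with the paper's treatment: the paper offers no separate proof beyond noting the identity is immediate once the preceding computations (via the lemma that likelihoods sum to one) establish that $\spclass_{\hpclass|\contexttree}$ and $\spclass_{\hpclass^{\contextseq{1}{t},\outcomeseq{1}{t}}|\contexttree'}$ are classes of sub-probability measures, after which both equalities follow by unwinding the definitions exactly as you do. Your remark about non-injectivity of $f\mapsto\likelihood{f}{\cdot}{\contexttree(\cdot)}$ is a harmless extra observation and does not change the argument.
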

It would be interesting to find out other instances of general Shtarkov sums that capture the complexities of other online learning problems with log loss.

\section{Discussions}\label{sec:discussions}
In this paper, we characterize the minimax regret and the optimal prediction strategy for sequential probability assignment, generalizing the classical results in the context-free setting. Moreover, our results are general enough to subsume the setting of multiary labels and sequential hypothesis classes, which has not been sufficiently explored before. Remarkably, our characterization holds for arbitrary hypothesis classes that may not admit the regularity assumptions implicitly required by prior works (e.g. \citep{rakhlin2015bsequential, bilodeau2020tight}). 

For future works, it would be interesting to study the minimax regret of specific classes more quantitatively using our contextual Shtarkov sums. It is also intriguing to consider the setting of infinite labels. Although most of our arguments would go through under sufficient regularity conditions, a more systematic study is needed. On the practical side, it is important to develop algorithms that are more computationally efficient than cNML and with provable guarantees.

\section*{Acknowledgements}
We are grateful to Changlong Wu for telling us about \cref{prop:global and non-global covers are essentially the same} and to Zeyu Jia for insights leading to \cref{lemma:root T lower bound for the linear class}. We would also like to thank Blair Bilodeau and Sasha Voitovych for helpful discussions and comments on earlier drafts of this work.
ZL is supported by the Vector Research Grant at the Vector Institute. IA is supported by the Vatat Scholarship from the Israeli Council for Higher Education. DMR is supported by an NSERC Discovery Grant and funding through his Canada CIFAR AI Chair at the Vector Institute.

\printbibliography

\appendix

\newcommand{\proofsubsection}[1]{\subsection{Proof of \texorpdfstring{\cref{#1}}{\crtcref{#1}}}
\label{PROOF:#1}
}
\newcommand{\proofsection}[1]{\section{Proofs for \texorpdfstring{\cref{#1}}{\crtcref{#1}}}
\label{PROOF:#1}
}

\proofsection{sec:minimax regret via contextual shtarkov}\label{appendix sec:proofs for contextual shtarkov sum}
\paragraph{Notations.} When the context and label sequences $\contextseq{1}{T},\outcomeseq{1}{T}$ are clear from the context, 
we may use $f_t$ to denote the probability vector $f(\contextseq{1}{t}, \outcomeseq{1}{t-1})\in\probsimplex{\outcomespace}$ produced by hypothesis $f$ at time $t$ for notational convenience. We also adopt the notation for repeated operators in \citep{rakhlin2015bsequential, bilodeau2020tight}, denoting $\opt_1\cdots\opt_T[\cdots]$ by $\seqopt{\opt_t}_{t=1}^T\LRbra{\cdots}$. For any discrete distribution $P$ and discrete random variables $X,Y$, let $H(P)$ be the entropy of $P$ and $H(X|Y)$ be the conditional entropy of $X$ given $Y$.

\subsection{Minimax swap}
As standard in online learning literature, we will first move to a dual game after applying a minimax swap at each round of the game. Under mild assumptions, the value of the original game coincides with the that of the swapped game. More specifically, we have:
\begin{lemma}\label{lemma:when can we do minimax swap}
    Whenever $\hpclass$ satisfies that for every sequence $\contextseq{1}{T}\in\contextspace^T, \outcomeseq{1}{T}\in\outcomespace^T$,
    \[\label{ineq:sufficient conditioin for applying minimax swap}
    \inf_{f\in\hpclass}
    \sum_{t=1}^T 
    \logloss{f(\contextseq{1}{t}, \outcomeseq{1}{t-1})}{\outcome_t}
    <
    \infty,
    \]
    we have that
    \[\label{eqn:minmax value equals to maxmin value}
    \regret_T(\hpclass)
    =
    \sup_{\contexttree, \probtree}\E_{\absoutcomeseq\sim\probtree}[\regret_T(\hpclass;\probtree(\absoutcomeseq), \contexttree(\absoutcomeseq), \absoutcomeseq)],
    \]
    where the supremum is taken over all $\contextspace$-valued $\outcomespace$-ary trees $\contexttree$ and $\probsimplex{\outcomespace}$-valued $\outcomespace$-ary trees $\probtree$, of depth $T$. Also $\E_{\absoutcomeseq\sim\probtree}$ means the nested conditional expectations $\E_{\outcome_1\sim\probtree_1(\absoutcomeseq)}\E_{\outcome_2\sim\probtree_2(\absoutcomeseq)}\cdots\E_{\outcome_T\sim\probtree_T(\absoutcomeseq)}$.
\end{lemma}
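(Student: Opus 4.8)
The plan is to run the standard online-learning minimax-swap argument — in the spirit of the proof of Lemma~6 of \citep{bilodeau2020tight} for the binary case — while handling the two features special to our setting: the arbitrary finite label set $\outcomespace$, and the fact that $\logloss{\cdot}{\cdot}$ is neither bounded nor continuous on $\probsimplex{\outcomespace}$. First I would write $\regret_T(\hpclass)$ in extensive form, $\seqopt{\sup_{\context_t}\inf_{\prediction_t}\sup_{\outcome_t}}_{t=1}^T\lrbra{\regret_T(\hpclass;\predictionseq{1}{T},\contextseq{1}{T},\outcomeseq{1}{T})}$, and, from the innermost operator outward, replace each $\sup_{\outcome_t\in\outcomespace}$ by $\sup_{q_t\in\probsimplex{\outcomespace}}\E_{\outcome_t\sim q_t}$. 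This is an exact identity, because the maximum of any function over the finite set $\outcomespace$ equals the supremum over $\probsimplex{\outcomespace}$ of its expectation, attained at a point mass. Already here the hypothesis \eqref{ineq:sufficient conditioin for applying minimax swap} matters: since $\inf_{f\in\hpclass}\sum_t\logloss{f(\contextseq{1}{t},\outcomeseq{1}{t-1})}{\outcome_t}<\infty$ for every sequence and the learner's losses are nonnegative, the regret functional — and each partially evaluated value-to-go produced along the way — is a well-defined element of $(-\infty,+\infty]$ and never of the indeterminate form $\infty-\infty$.

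Next, working one round at a time, I would exchange $\inf_{\prediction_t}$ with $\sup_{q_t}\E_{\outcome_t\sim q_t}$ via Sion's minimax theorem. The structural inputs are: (i) since $\prediction_t$ enters $\regret_T(\hpclass;\cdot)$ only through the single term $\logloss{\prediction_t}{\outcome_t}=-\log\prediction_t(\outcome_t)$, the round-$t$ value-to-go is additively separable, equal to $\E_{\outcome_t\sim q_t}[-\log\prediction_t(\outcome_t)]$ plus a term free of $\prediction_t$, hence convex and lower semicontinuous in $\prediction_t\in\probsimplex{\outcomespace}$ (with value $+\infty$ allowed on the boundary) and affine — so concave and continuous — in $q_t$; and (ii) $\probsimplex{\outcomespace}$ is compact and convex. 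The hard part will be the unboundedness and discontinuity of $-\log$ at the boundary of $\probsimplex{\outcomespace}$, which is precisely where the multiary case needs more care than the binary one and where \eqref{ineq:sufficient conditioin for applying minimax swap} is indispensable. I would get around it by first performing the swap with the learner restricted to a compact subsimplex bounded away from the boundary, e.g. $\{(1-\epsilon)\probden+\epsilon\,\U(\outcomespace):\probden\in\probsimplex{\outcomespace}\}$, on which the payoff is bounded and continuous so that Sion's theorem applies verbatim, and then letting $\epsilon\to 0^{+}$, using \eqref{ineq:sufficient conditioin for applying minimax swap} to check that contracting the learner's action set does not change the value in the limit. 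Iterating over $t=1,\dots,T$ then gives
\[
\regret_T(\hpclass)
=
\seqopt{\sup_{\context_t}\sup_{q_t}\inf_{\prediction_t}\E_{\outcome_t\sim q_t}}_{t=1}^T
\lrbra{\regret_T(\hpclass;\predictionseq{1}{T},\contextseq{1}{T},\outcomeseq{1}{T})}.
\]

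Finally I would read off the claimed identity. Within block $t$, $\prediction_t$ occurs only in $-\log\prediction_t(\outcome_t)$, so Gibbs' inequality gives $\inf_{\prediction_t}\E_{\outcome_t\sim q_t}[-\log\prediction_t(\outcome_t)+\cdots]=H(q_t)+\E_{\outcome_t\sim q_t}[\cdots]$, with the infimum attained at $\prediction_t=q_t$; that is, the learner's optimal play is to copy the adversary's conditional label distribution. The remaining iterated suprema are over $\context_t$ and $q_t$, each permitted to depend on the realized prefix $\outcomeseq{1}{t-1}$, which is by definition a supremum over a depth-$T$ context tree $\contexttree$ together with a depth-$T$ probabilistic tree $\probtree$; substituting $\context_t=\contexttree_t(\absoutcomeseq)$ and $\prediction_t=q_t=\probtree_t(\absoutcomeseq)$ reproduces exactly $\regret_T(\hpclass)=\sup_{\contexttree,\probtree}\E_{\absoutcomeseq\sim\probtree}\lrbra{\regret_T(\hpclass;\probtree(\absoutcomeseq),\contexttree(\absoutcomeseq),\absoutcomeseq)}$. (The bound $\sup_{\contexttree,\probtree}\E_{\absoutcomeseq\sim\probtree}[\cdots]\le\regret_T(\hpclass)$ is the easy direction — forcing the adversary to announce its label distribution before the learner moves can only help the learner — so the content of the lemma is the reverse inequality, which is supplied by the minimax swap.) As indicated, the one genuinely delicate step is the truncation-and-limit argument inside that swap; it is what makes the regularity hypothesis \eqref{ineq:sufficient conditioin for applying minimax swap} essential and what has to be carried out with care once $|\outcomespace|>2$.
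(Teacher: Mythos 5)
Your proposal is correct and follows essentially the same route as the paper: randomize the adversary's label, restrict the learner to a compact subsimplex bounded away from the boundary (your mixture $(1-\epsilon)\probden+\epsilon\,\U(\outcomespace)$ is exactly the paper's smooth truncation $\truncate{\threshold}$ after reparametrizing $\epsilon=|\outcomespace|\threshold/(1+|\outcomespace|\threshold)$), apply Sion's theorem round by round using \eqref{ineq:sufficient conditioin for applying minimax swap} for finiteness of the benchmark term, remove the restriction by letting the truncation parameter tend to zero, and finish with the Gibbs-inequality/tree identification plus the easy reverse inequality. The only cosmetic difference is that the paper's control of the truncation cost (its Lemma on the effect of truncation, giving an additive $|\outcomespace|\threshold$ per round) does not itself need the regularity hypothesis, which is used only to justify the minimax swap.
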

To deal with the unboundedness of log loss in the proof, we introduce the following truncation method inspired by \citep{bilodeau2023minimax, wu2023regret}, generalizing the one in \citep{bilodeau2020tight} which was specific to binary labels.
\begin{definition}[Smooth truncation]
    The general smooth truncation map $\truncate{\threshold}: \probsimplex{\outcomespace}\to\probsimplex{\outcomespace}$ is defined such that for all $\probden\in\probsimplex{\outcomespace}$ and $\outcome\in\outcomespace$,
    \*[
    \truncate{\threshold}(\probden)(\outcome)
    =
    \frac{\probden(\outcome)+\threshold}{1+|\outcomespace|\threshold},
    \]
    given threshold $\threshold\in(0,1/2)$.
\end{definition}
It is easy to check that $\truncate{\threshold}(\probden)$ is indeed a valid member in $\probsimplex{\outcomespace}$ and $\truncate{\threshold}(\probden)(\outcome)\in[\threshold/(1+|\outcomespace|\threshold), (1+\threshold)/(1+|\outcomespace|\threshold)]$. Moreover, it is not hard to verify that $\truncate{\threshold}(\probsimplex{\outcomespace})=\{\probden\in\probsimplex{\outcomespace}: \probden(\outcome)\in[\threshold/(1+|\outcomespace|\threshold), (1+\threshold)/(1+|\outcomespace|\threshold)],\forall\outcome\in\outcomespace \}$. We will use $\trunsimplex$ to denote this image set $\truncate{\threshold}(\probsimplex{\outcomespace})$.

\begin{proof}[of \cref{lemma:when can we do minimax swap}]
    Fix $\threshold\in(0,1/2)$. By restricting the learner's prediction $\prediction_t$ to $\trunsimplex$, we get an upper bound on $\regret_T(\hpclass)$:
    \*[
    \regret_T(\hpclass)
    &\leq
    \seqopt{\sup_{\context_t}\inf_{\prediction_t\in\trunsimplex}\sup_{\outcome_t}}_{t=1}^T
    \LRbra{
    \sum_{t=1}^T \logloss{\prediction_t}{\outcome_t}
    -
    \inf_{f\in\hpclass} \sum_{t=1}^T \logloss{f_t}{\outcome_t}
    } \\
    &=
    \seqopt{\sup_{\context_t}\inf_{\prediction_t\in\trunsimplex}\sup_{\outcome_t}}_{t=1}^{T-1}
    \sup_{\context_T}\inf_{\prediction_T\in\trunsimplex}\sup_{\probden_T}\E_{\outcome_T\sim\probden_T}
    \LRbra{
    \sum_{t=1}^T \logloss{\prediction_t}{\outcome_t}
    -
    \inf_{f\in\hpclass} \sum_{t=1}^T \logloss{f_t}{\outcome_t}
    }.
    \]
    Now we can apply Sion's minimax theorem \citep{Sion1958OnGM} to the function
    \*[
    A(\prediction_T, \probden_T)
    =
    \E_{\outcome_T\sim\probden_T}
    \LRbra{
    \sum_{t=1}^T \logloss{\prediction_t}{\outcome_t}
    -
    \inf_{f\in\hpclass} \sum_{t=1}^T \logloss{f_t}{\outcome_t}
    }
    \]
    to derive that
    \*[
    \inf_{\prediction_T\in\trunsimplex}\sup_{\probden_T\in\probsimplex{\outcomespace}} A(\prediction_T, \probden_T)
    =
    \sup_{\probden_T\in\probsimplex{\outcomespace}}\inf_{\prediction_T\in\trunsimplex} A(\prediction_T, \probden_T).
    \]
    This is because:
    \begin{enumerate}
        \item $A(\prediction_T,\probden_T)$ is convex and continuous in $\prediction_T$ over the compact $\trunsimplex$ and
        \item $A(\prediction_T,\probden_T)$ is concave and continuous in $\probden_T$ over the compact $\probsimplex{\outcomespace}$, which is further due to that $A(\prediction_T, \probden_T)$ is linear in $\probden_T$ and is bounded given \cref{ineq:sufficient conditioin for applying minimax swap}\fTBD{add more explanations?}.
    \end{enumerate} 
    Hence
    \*[
    \regret_T(\hpclass)
    &\leq
    \seqopt{\sup_{\context_t}\inf_{\prediction_t\in\trunsimplex}\sup_{\outcome_t}}_{t=1}^{T-1}
    \sup_{\context_T}\sup_{\probden_T}\inf_{\prediction_T\in\trunsimplex}\E_{\outcome_T\sim\probden_T}
    \LRbra{
    \sum_{t=1}^T \logloss{\prediction_t}{\outcome_t}
    -
    \inf_{f\in\hpclass} \sum_{t=1}^T \logloss{f_t}{\outcome_t}
    } \\
    &=
    \seqopt{\sup_{\context_t}\inf_{\prediction_t\in\trunsimplex}\sup_{\outcome_t}}_{t=1}^{T-2}
    \sup_{\context_{T-1}}\inf_{\prediction_{T-1}\in\trunsimplex}\sup_{\probden_{T-1}}\E_{\outcome_{T-1}\sim\probden_{T-1}} \\
    &\LRbra{
    \sum_{t=1}^{T-1}\logloss{\prediction_t}{\outcome_t}
    +
    \sup_{\context_T}\sup_{\probden_T}
    \Lrbra{\inf_{\prediction_T\in\trunsimplex}\E_{\outcome_T\sim\probden_T}\logloss{\prediction_T}{\outcome_T}-\E_{\outcome_T\sim\probden_T}\inf_{f\in\hpclass}\sum_{t=1}^T\logloss{f_t}{\outcome_t}
    }
    }.
    \]
    Again the order of $\inf_{\prediction_{T-1}\in\trunsimplex}$ and $\sup_{\probden_{T-1}\in\probsimplex{\outcomespace}}$ with respect to
    \*[
    B(\prediction_{T-1}, \probden_{T-1})
    =
    \E_{\outcome_{T-1}\sim\probden_{T-1}}
    \LRbra{
    \sum_{t=1}^{T-1}\logloss{\prediction_t}{\outcome_t}
    +
    \sup_{\context_T}\sup_{\probden_T}
    \Lrbra{\inf_{\prediction_T\in\trunsimplex}\E_{\outcome_T\sim\probden_T}\logloss{\prediction_T}{\outcome_T}-\E_{\outcome_T\sim\probden_T}\inf_{f\in\hpclass}\sum_{t=1}^T\logloss{f_t}{\outcome_t}
    }
    }
    \]
    can be swapped due to the same reason as above, leading to
    \*[
    \regret_T(\hpclass)
    &\leq
    \seqopt{\sup_{\context_t}\inf_{\prediction_t\in\trunsimplex}\sup_{\outcome_t}}_{t=1}^{T-2}
    \sup_{\context_{T-1}}\sup_{\probden_{T-1}}\inf_{\prediction_{T-1}\in\trunsimplex}\E_{\outcome_{T-1}\sim\probden_{T-1}} \\
    &\LRbra{
    \sum_{t=1}^{T-1}\logloss{\prediction_t}{\outcome_t}
    +
    \sup_{\context_T}\sup_{\probden_T}
    \Lrbra{\inf_{\prediction_T\in\trunsimplex}\E_{\outcome_T\sim\probden_T}\logloss{\prediction_T}{\outcome_T}-\E_{\outcome_T\sim\probden_T}\inf_{f\in\hpclass}\sum_{t=1}^T\logloss{f_t}{\outcome_t}
    }
    } \\
    &=
    \seqopt{\sup_{\context_t}\inf_{\prediction_t\in\trunsimplex}\sup_{\outcome_t}}_{t=1}^{T-3}
    \sup_{\context_{T-2}}\inf_{\prediction_{T-2}\in\trunsimplex}\sup_{\probden_{T-2}}\E_{\outcome_{T-2}\sim\probden_{T-2}} \\
    &\LRset{
    \sum_{t=1}^{T-2}\logloss{\prediction_t}{\outcome_t}
    +
    \sup_{\context_{T-1}}\sup_{\probden_{T-1}}
    \LRbra{
    \inf_{\prediction_{T-1}\in\trunsimplex}\E_{\outcome_{T-1}\sim\probden_{T-1}}\logloss{\prediction_{T-1}}{\outcome_{T-1}} \\
    &+
    \E_{\outcome_{T-1}\sim\probden_{T-1}}
    \sup_{\context_T}\sup_{\probden_T}
    \LRbra{\inf_{\prediction_T\in\trunsimplex}\E_{\outcome_T\sim\probden_T}\logloss{\prediction_T}{\outcome_T}-\E_{\outcome_T\sim\probden_T}\inf_{f\in\hpclass}\sum_{t=1}^T\logloss{f_t}{\outcome_t}
    }
    }
    }.
    \]
    Repeating this procedure through all $T$ rounds yields
    \*[
    \regret_T(\hpclass)
    \leq
    \seqopt{\sup_{\context_t}\sup_{\probden_{t}}\E_{\outcome_t\sim\probden_t}}_{t=1}^T
    \sup_{f\in\hpclass}
    \LRbra{
    \sum_{t=1}^T \inf_{\prediction_t\in\trunsimplex}\E_{\outcome_t\sim\probden_t}[\logloss{\prediction_t}{\outcome_t}]
    -
    \logloss{f_t}{\outcome_t}
    }.
    \]
    By \cref{lemma:effect of truncation on log loss}, we know that we do not lose too much by restricting learner's prediction to $\trunsimplex$:
    \*[
    \regret_T(\hpclass)
    \leq
    \seqopt{\sup_{\context_t}\sup_{\probden_{t}}\E_{\outcome_t\sim\probden_t}}_{t=1}^T
    \sup_{f\in\hpclass}
    \LRbra{
    \sum_{t=1}^T \inf_{\prediction_t}\E_{\outcome_t\sim\probden_t}[\logloss{\prediction_t}{\outcome_t}]
    -
    \logloss{f_t}{\outcome_t}
    }
    +
    |\outcomespace|\threshold T.
    \]
    Sending $\threshold\to 0^{+}$ on the RHS of the above inequality, we get
    \*[
    \regret_T(\hpclass)
    &\leq
    \seqopt{\sup_{\context_t}\sup_{\probden_{t}}\E_{\outcome_t\sim\probden_t}}_{t=1}^T
    \sup_{f\in\hpclass}
    \LRbra{
    \sum_{t=1}^T \inf_{\prediction_t}\E_{\outcome_t\sim\probden_t}[\logloss{\prediction_t}{\outcome_t}]
    -
    \logloss{f_t}{\outcome_t}
    }.
    \]
    It is easy to see that on the RHS of the above inequality, the inner infimum over $\prediction_t\in\probsimplex{\outcomespace}$ is achieved at $\prediction_t=\probden_t$ due to the nature of log loss. So
    \*[
    \regret_T(\hpclass)
    &\leq
    \seqopt{\sup_{\context_t}\sup_{\probden_{t}}\E_{\outcome_t\sim\probden_t}}_{t=1}^T
    \sup_{f\in\hpclass}
    \LRbra{
    \sum_{t=1}^T \E_{\outcome_t\sim\probden_t}[\logloss{\probden_t}{\outcome_t}]
    -
    \logloss{f_t}{\outcome_t}
    } \\
    &=
    \sup_{\contexttree, \probtree}\E_{\absoutcomeseq\sim\probtree}[\regret_T(\hpclass;\probtree(\absoutcomeseq), \contexttree(\absoutcomeseq), \absoutcomeseq)],
    \]
    where in the last equality we use the compact notation of trees to further simplify our expression and this concludes the proof.
\end{proof}

\begin{lemma}\label{lemma:conditional characterization}
    For any hypothesis class $\hpclass$ and horizon $T$, 
    \[\label{eqn:maxmin value is characterized by contextual shtarkov sum}
    \sup_{\contexttree, \probtree}\E_{\absoutcomeseq\sim\probtree}[\regret_T(\hpclass;\probtree(\absoutcomeseq), \contexttree(\absoutcomeseq), \absoutcomeseq)]
    =
    \sup_{\contexttree} \log \shtarkov{T}{\hpclass}{\contexttree}.
    \] 
    It is implied that whenever $\hpclass$ satisfies \cref{eqn:minmax value equals to maxmin value}, we have
    \*[
    \regret_T(\hpclass)
    =
    \sup_{\contexttree} \log \shtarkov{T}{\hpclass}{\contexttree}.
    \]
\end{lemma}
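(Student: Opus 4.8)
The plan is to fix an arbitrary depth-$T$ context tree $\contexttree$, evaluate the inner supremum over probabilistic trees $\probtree$ in closed form as a maximum-entropy problem, and then reinstate the outer supremum over $\contexttree$; the second assertion of the lemma will then be immediate from its hypothesis. \emph{Step 1 (reduce to joint distributions over $\outcomespace^T$).} I would first record that depth-$T$ probabilistic trees $\probtree$ correspond, up to irrelevant choices on prefixes of probability zero, to joint distributions $P\in\probsimplex{\outcomespace^T}$: the law of $\absoutcomeseq\sim\probtree$ is such a $P$, and conversely $P$ factorizes into the conditionals $\probtree_t(\outcomeseq{1}{t-1}):=P(\outcome_t=\cdot\mid\outcomeseq{1}{t-1})$. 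Under this correspondence $\prod_{t=1}^T\probtree_t(\absoutcomeseq)(\outcome_t)=P(\absoutcomeseq)$, so the learner's cumulative loss telescopes into an entropy, $\E_{\absoutcomeseq\sim\probtree}\lrbra{\sum_{t=1}^T\logloss{\probtree_t(\absoutcomeseq)}{\outcome_t}}=\E_{\absoutcomeseq\sim P}[-\log P(\absoutcomeseq)]=H(P)$, while by \cref{def:likelihood} the benchmark term is $-\inf_{f\in\hpclass}\sum_{t=1}^T\logloss{f_t}{\outcome_t}=\sup_{f\in\hpclass}\log\likelihood{f}{\absoutcomeseq}{\contexttree(\absoutcomeseq)}$ (with $f_t:=f(\contexttree_{1:t}(\absoutcomeseq),\outcomeseq{1}{t-1})$). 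Hence, for the fixed $\contexttree$,
\*[
\sup_{\probtree}\E_{\absoutcomeseq\sim\probtree}[\regret_T(\hpclass;\probtree(\absoutcomeseq),\contexttree(\absoutcomeseq),\absoutcomeseq)]
=
\sup_{P\in\probsimplex{\outcomespace^T}}\lr{H(P)+\E_{\absoutcomeseq\sim P}\lrbra{\sup_{f\in\hpclass}\log\likelihood{f}{\absoutcomeseq}{\contexttree(\absoutcomeseq)}}}.
\]

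\emph{Step 2 (maximum-entropy evaluation).} Write $w(\absoutcomeseq):=\sup_{f\in\hpclass}\likelihood{f}{\absoutcomeseq}{\contexttree(\absoutcomeseq)}\ge0$ and $Z:=\sum_{\absoutcomeseq\in\outcomespace^T}w(\absoutcomeseq)=\shtarkov{T}{\hpclass}{\contexttree}$. Since every $\likelihood{f}{\cdot}{\contexttree(\cdot)}$ sums to one over $\outcomespace^T$ by \cref{lemma:likelihoods sum up to one}, we have $0<Z\le|\outcomespace|^T<\infty$ and $\{\absoutcomeseq:w(\absoutcomeseq)>0\}\ne\emptyset$. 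Any $P$ assigning positive mass to some $\absoutcomeseq$ with $w(\absoutcomeseq)=0$ makes the objective $-\infty$, so it suffices to optimize over $P$ supported on $\{w>0\}$, where the objective is $\sum_{\absoutcomeseq}P(\absoutcomeseq)\log\frac{w(\absoutcomeseq)}{P(\absoutcomeseq)}=\log Z-D_{\mathrm{KL}}\!\lr{P\,\|\,w/Z}$; by nonnegativity of the KL divergence this supremum equals $\log Z=\log\shtarkov{T}{\hpclass}{\contexttree}$, attained at $P=w/Z$. Taking $\sup_{\contexttree}$ on both sides proves \cref{eqn:maxmin value is characterized by contextual shtarkov sum}, and the displayed consequence follows because, under the hypothesis \cref{eqn:minmax value equals to maxmin value}, $\regret_T(\hpclass)$ equals the left-hand side of that identity.

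I do not expect a real obstacle here, only bookkeeping: making the $0\cdot(-\infty)$ conventions precise when some $w(\absoutcomeseq)=0$ (handled by passing to $P$ supported on $\{w>0\}$), checking that the probabilistic-tree/joint-distribution correspondence leaves the value of the nested-expectation objective unchanged even along prefixes of probability zero, and invoking \cref{lemma:likelihoods sum up to one} to guarantee $Z\in(0,\infty)$ so that the normalization and logarithm are well-defined. If anything needs care it is the telescoping in Step 1 together with the precise formulation of the tree/distribution correspondence; the remainder is the textbook Gibbs variational computation.
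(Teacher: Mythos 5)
Your proposal is correct and follows essentially the same route as the paper: replace the supremum over probabilistic trees by a supremum over joint distributions $P\in\probsimplex{\outcomespace^T}$, split the objective into $H(P)$ plus the expected log maximum likelihood, and evaluate the resulting maximum-entropy problem in closed form, with \cref{lemma:likelihoods sum up to one} guaranteeing the normalizer is positive and the zero-likelihood paths handled by support restriction. Your Gibbs/KL-divergence computation is just the same step the paper performs via the conjugacy between negative entropy and log-sum-exp, so there is no substantive difference.
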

\begin{proof}[of \cref{lemma:conditional characterization}]
    First we can see that the outcome sequence $\outcomeseq{1}{T}$ generated under any tree $\probtree$ is the same thing as $\outcomeseq{1}{T}$ generated by its associated joint distribution over $\outcomespace^T$, and vice versa\fTBD{rephrase this part}. So we can replace the supremum over trees $\probtree$ in the LHS of \cref{eqn:maxmin value is characterized by contextual shtarkov sum} by the supremum over joint distributions $P$ over $\outcomespace^T$. Hence,
    \*[
    \sup_{\contexttree, \probtree}\E_{\absoutcomeseq\sim\probtree}[\regret_T(\hpclass;\probtree(\absoutcomeseq), \contexttree(\absoutcomeseq), \absoutcomeseq)]
    &=
    \sup_{\contexttree, P}\E_{\absoutcomeseq\sim P}[\regret_T(\hpclass;\probtree(\absoutcomeseq), \contexttree(\absoutcomeseq), \absoutcomeseq)] \\
    &=
    \sup_{\contexttree, P}\E_{\absoutcomeseq\sim P}
    \LRbra{
    \sum_{t=1}^T \logloss{P_t}{\outcome_t} - \inf_{f\in\hpclass} \sum_{t=1}^T \logloss{f_t}{\outcome_t}
    },
    \]
    where $P_t$ denotes the conditional distribution $P_t(\cdot|\outcomeseq{1}{t-1})\in\probsimplex{\outcomespace}$ of $\outcome_t$ under $P$ given $\outcomeseq{1}{t-1}$.
    
    Now fix the context tree $\contexttree$ and distribution $P$. Then we can see that $\E_{\absoutcomeseq\sim P}[\logloss{P_t} {\outcome_t}]=H(\outcome_t|\outcome_{1:t-1})$. So $\E_{\absoutcomeseq\sim P}[\sum_{t=1}^T \logloss{P_t}{\outcome_t}] = \sum_{t=1}^T H(\outcome_t|\outcome_{1:t-1}) = H(P)$. Further notice that 
    \*[
    \inf_{f\in\hpclass} 
    \sum_{t=1}^T 
    \logloss{f_t}{\outcome_t}
    = 
    \inf_{f\in\hpclass}
    \lr{-\log\likelihood{f}{\outcomeseq{1}{T}}{\contextseq{1}{T}}}
    =
    -\sup_{f\in\hpclass} 
    \log\likelihood{f}{\outcomeseq{1}{T}}{\contextseq{1}{T}}.
    \]
    So naturally we define the map $F_{\contexttree}: \outcomespace^T \to \mathbb R\cup\{-\infty\}$ by
    \*[
    F_{\contexttree}(\absoutcomeseq)
    =
    \sup_{f\in\hpclass} \log\likelihood{f}{\absoutcomeseq}{\contexttree(\absoutcomeseq)},
    \]
    and then we see that 
    \*[
    \E_{\absoutcomeseq\sim P}
    \LRbra{\sum_{t=1}^T \logloss{P_t}{\outcome_t} - \inf_{f\in\hpclass} \sum_{t=1}^T \logloss{(f(\context_t(\absoutcomeseq))} {\outcome_t}
    }
    &=
    H(P) + \E_{\absoutcomeseq\sim P}[F_{\contexttree}(\absoutcomeseq)].
    \]
    For any given tree $\contexttree$, notice that the optimization problem
    \*[
    \sup_{P\in\probsimplex{\outcomespace^T}} H(P) + \E_{\absoutcomeseq\sim P}[F_{\contexttree}(\absoutcomeseq)]
    \]
    is actually a maximization problem in the form of $\max_{P\in\probsimplex{\outcomespace^T}} H(P)+\langle P, v \rangle$, where $v$ is some $|\outcomespace|^T-$dimensional vector. According to the conjugacy between negative entropy function and log-sum-exp function, the optimal $P^*$ is given by 
    \*[
    P^*(\absoutcomeseq)
    =
    \frac{\exp(F_{\contexttree}(\absoutcomeseq))}{\sum_{\absoutcomeseq'} \exp(F_{\contexttree}(\absoutcomeseq'))} 
    = 
    \frac{\sup_{f\in\hpclass} \likelihood{f}{\absoutcomeseq}{\contexttree(\absoutcomeseq)}}{\sum_{\absoutcomeseq'} \sup_{f\in\hpclass}\likelihood{f}{\absoutcomeseq'}{\contexttree(\absoutcomeseq')}}, \forall\absoutcomeseq\in\outcomespace^T.
    \]
    Note that the above formula for $P^*$ is also valid when $F_{\contexttree}(\absoutcomeseq)=-\infty$ for some $\absoutcomeseq$, since $P^*$ should be supported on $\{\absoutcomeseq\in\outcomespace^T: F_{\contexttree}(\absoutcomeseq)>-\infty\}$, and $F_{\contexttree}(\absoutcomeseq)$ cannot be $-\infty$ for all $\absoutcomeseq$ due to \cref{lemma:likelihoods sum up to one}.
    The associated value of this maximization problem is 
    \*[
    \log\lr{\sum_{\absoutcomeseq} \exp(F_{\contexttree}(\absoutcomeseq))}
    =
    \log\lr{\sum_{\absoutcomeseq} \sup_{f\in\hpclass}\likelihood{f}{\absoutcomeseq}{\contexttree(\absoutcomeseq)}}.
    \]
    Therefore,
    \*[
    \sup_{\contexttree, \probtree}\E_{\absoutcomeseq\sim\probtree}[\regret_T(\hpclass;\probtree(\absoutcomeseq), \contexttree(\absoutcomeseq), \absoutcomeseq)]
    &=
    \sup_{\contexttree, P}\E_{\absoutcomeseq\sim P}
    \LRbra{
    \sum_{t=1}^T \logloss{P_t}{\outcome_t} - \inf_{f\in\hpclass} \sum_{t=1}^T \logloss{f_t}{\outcome_t}
    }\\
    &=
    \sup_{\contexttree}\sup_{P} \LRset{H(P) + \E_{\absoutcomeseq\sim P}[F_{\contexttree}(\absoutcomeseq)]} \\
    &=
    \sup_{\contexttree} \log\lr{\sum_{\absoutcomeseq} \sup_{f\in\hpclass}\likelihood{f}{\absoutcomeseq}{\contexttree(\absoutcomeseq)}} \\
    &=
    \sup_{\contexttree} \log\shtarkov{T}{\hpclass}{\contexttree}.
    \]
\end{proof}
In the proof of \cref{lemma:when can we do minimax swap}, if we do not restrict the learner's prediction and simply swap the order of inf and sup to produce an inequality at each time $t$, we will reach the following folklore result.\fTBD{shall we omit the proof of the this result?}
\begin{lemma}\label{lemma:minmax value is always lower bounded by maxmin}
    For any hypothesis class $\hpclass$ and horizon $T$,
    \[\label{ineq:regret is lower bounded by the maxmin value}
    \regret_T(\hpclass)
    \geq
    \sup_{\contexttree, \probtree}\E_{\absoutcomeseq\sim\probtree}[\regret_T(\hpclass;\probtree(\absoutcomeseq), \contexttree(\absoutcomeseq), \absoutcomeseq)].
    \]
\end{lemma}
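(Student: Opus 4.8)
The plan is to reuse the ``minimax swap'' scaffolding from the proof of \cref{lemma:when can we do minimax swap}, but — since we only need one inequality and not an identity — to replace the appeal to Sion's theorem (and the truncation device it required) by the elementary max--min inequality $\inf_{a}\sup_{b}(\cdot) \geq \sup_{b}\inf_{a}(\cdot)$, which is valid for arbitrary extended-real-valued payoffs and, in particular, requires no regularity condition on $\hpclass$. This is why it is the ``folklore'' direction.

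First I would start from the extensive form $\regret_T(\hpclass) = \seqopt{\sup_{\context_t}\inf_{\prediction_t}\sup_{\outcome_t}}_{t=1}^T[\cdots]$ and randomize nature in every round, replacing each $\sup_{\outcome_t}$ by $\sup_{\probden_t\in\probsimplex{\outcomespace}}\E_{\outcome_t\sim\probden_t}$. Because $\outcomespace$ is finite, this is an exact identity for \emph{any} inner payoff $g$, namely $\max_{\outcome_t\in\outcomespace}g(\outcome_t) = \sup_{\probden_t\in\probsimplex{\outcomespace}}\E_{\outcome_t\sim\probden_t}[g(\outcome_t)]$, with no boundedness needed. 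Next I would process the rounds $t=T,T-1,\dots,1$ in order: at round $t$ the operators $\inf_{\prediction_t}$ and $\sup_{\probden_t}$ are adjacent, so the max--min inequality lets me push $\sup_{\context_t}\sup_{\probden_t}$ outward past $\inf_{\prediction_t}$. Once $\inf_{\prediction_t}$ is the innermost operator among those of round $t$, I would evaluate it using that $\prediction_t$ enters the payoff only through the single term $\logloss{\prediction_t}{\outcome_t}$: by Gibbs' inequality ($\E_{\outcome_t\sim\probden_t}[\logloss{\prediction_t}{\outcome_t}] = H(\probden_t) + \mathrm{KL}(\probden_t\|\prediction_t)$), the infimum over $\probsimplex{\outcomespace}$ is attained at $\prediction_t=\probden_t$, so the learner's round-$t$ contribution collapses to $\E_{\outcome_t\sim\probden_t}[\logloss{\probden_t}{\outcome_t}]$, leaving an expression to which the same step applies at round $t-1$. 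Iterating through all $T$ rounds gives
$$\regret_T(\hpclass) \;\geq\; \seqopt{\sup_{\context_t}\sup_{\probden_t}\E_{\outcome_t\sim\probden_t}}_{t=1}^T\Lrbra{\sum_{t=1}^T\logloss{\probden_t}{\outcome_t} \;-\; \inf_{f\in\hpclass}\sum_{t=1}^T\logloss{f_t}{\outcome_t}}.$$

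Finally I would rephrase the nested operators in tree language: a history-dependent choice of $(\context_t,\probden_t)$ is exactly a pair consisting of a depth-$T$ context tree $\contexttree$ and a probabilistic tree $\probtree$, and $\seqopt{\E_{\outcome_t\sim\probden_t}}_{t=1}^T$ is the nested expectation $\E_{\absoutcomeseq\sim\probtree}$; under this dictionary the right-hand side above is precisely $\sup_{\contexttree,\probtree}\E_{\absoutcomeseq\sim\probtree}[\regret_T(\hpclass;\probtree(\absoutcomeseq),\contexttree(\absoutcomeseq),\absoutcomeseq)]$, which is the claim. I do not expect a genuine obstacle here; the one place demanding care is bookkeeping the value $+\infty$ (arising when some $\prediction_t$, or the near-optimal $f$, assigns zero mass to the realized label). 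One should fix a convention under which the randomization identity, the factoring of $\inf_{\prediction_t}$ through $\E_{\outcome_t\sim\probden_t}$, and the tower property for $\E_{\absoutcomeseq\sim\probtree}$ are all unambiguous — noting, for instance, that the choice $\prediction_t=\probden_t$ never creates an indeterminate $+\infty-\infty$, since labels of $\probden_t$-probability zero do not contribute to $\E_{\outcome_t\sim\probden_t}$.
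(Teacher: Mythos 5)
Your proposal is correct and follows essentially the same route as the paper's proof: randomize nature's label choice round by round, apply the elementary max--min inequality to move $\sup_{\probden_t}$ past $\inf_{\prediction_t}$, collapse the learner's infimum at $\prediction_t=\probden_t$ via the log-loss/entropy identity, and then repackage the nested operators as a supremum over context and probabilistic trees. The only differences (randomizing all rounds up front rather than as you go, citing Gibbs' inequality explicitly, and flagging the $+\infty$ bookkeeping) are presentational.
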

\begin{proof}[of \cref{lemma:minmax value is always lower bounded by maxmin}]
    To get \cref{ineq:regret is lower bounded by the maxmin value}, we simply need to reverse the order of sup and inf at each time in the extensive formulation of minimax regret and produce an inequality\fTBD{this argument is repeated}:
    \*[
    \regret_T(\hpclass)
    &=
    \sup_{\context_1}\inf_{\prediction_1}\sup_{\outcome_1}
    \cdots
    \sup_{\context_T}\inf_{\prediction_T}\sup_{\outcome_T}
    \regret_T(\hpclass;\predictionseq{1}{T}, \contextseq{1}{T}, 
    \outcomeseq{1}{T}) \\
    &=
    \seqopt{\sup_{\context_t} \inf_{\prediction_t}\sup_{\outcome_t}}_{t=1}^T 
    \LRbra{\sum_{t=1}^T \logloss{\prediction_t} {\outcome_t}
    -\inf_{f\in\hpclass} \sum_{t=1}^T \logloss{f(\contextseq{1}{t},\outcomeseq{1}{t-1})}{\outcome_t}} \\
    &=
    \seqopt{\sup_{\context_t} \inf_{\prediction_t}\sup_{\outcome_t}}_{t=1}^{T-1} 
    \sup_{\context_T}\inf_{\prediction_T}\sup_{\probden_T}\E_{\outcome_T\sim\probden_T}
    \LRbra{\sum_{t=1}^T \logloss{\prediction_t} {\outcome_t}
    -\inf_{f\in\hpclass} \sum_{t=1}^T \logloss{f_t}{\outcome_t}} \\
    &\geq
    \seqopt{\sup_{\context_t} \inf_{\prediction_t}\sup_{\outcome_t}}_{t=1}^{T-1} 
    \sup_{\context_T}\sup_{\probden_T}\inf_{\prediction_T}\E_{\outcome_T\sim\probden_T}
    \LRbra{\sum_{t=1}^T \logloss{\prediction_t} {\outcome_t}
    -\inf_{f\in\hpclass} \sum_{t=1}^T \logloss{f_t}{\outcome_t}} \\
    &=
    \seqopt{\sup_{\context_t} \inf_{\prediction_t}\sup_{\outcome_t}}_{t=1}^{T-1}
    \LRbra{\sum_{t=1}^{T-1} \logloss{\prediction_t} {\outcome_t}
    +
    \sup_{\context_T}\sup_{\probden_T}\Lrbra{\inf_{\prediction_T}\E_{\outcome_T\sim\probden_T}\logloss{\prediction_T}{\outcome_T}
    -
    \E_{\outcome_T\sim\probden_T}\inf_{f\in\hpclass}\sum_{t=1}^T \logloss{f_t}{\outcome_t}}
    }.
    \]
    Iterating the argument and rearranging terms as above, we will get that
    \*[
    \regret_T(\hpclass)
    &\geq
    \seqopt{\sup_{\context_t}\sup_{\probden_t}\E_{\outcome_t\sim\probden_t}}_{t=1}^T \sup_{f\in\hpclass}
    \LRbra{
    \sum_{t=1}^T
    \inf_{\prediction_t}\E_{\outcome_t\sim\probden_t}[\logloss{\prediction_t}{\outcome_t}]
    -
    \logloss{f_t}{\outcome_t}
    } \\
    &=
    \seqopt{\sup_{\context_t}\sup_{\probden_t}\E_{\outcome_t\sim\probden_t}}_{t=1}^T \sup_{f\in\hpclass}
    \LRbra{
    \sum_{t=1}^T
    \E_{\outcome_t\sim\probden_t}[\logloss{\probden_t}{\outcome_t}]
    -
    \logloss{f_t}{\outcome_t}
    } \\
    &=
    \sup_{\contexttree, \probtree}\E_{\absoutcomeseq\sim\probtree}[\regret_T(\hpclass;\probtree(\absoutcomeseq), \contexttree(\absoutcomeseq), \absoutcomeseq)].
    \]
\end{proof}

\subsection{Smooth truncated hypothesis class}
To remove the reliance on \cref{ineq:sufficient conditioin for applying minimax swap}, we introduce a smooth truncated version of $\hpclass$ that always satisfies \cref{ineq:sufficient conditioin for applying minimax swap} and study its minimax regret as well as contextual Shtarkov sums, compared to those of the untruncated class $\hpclass$. To be more specific, we will apply the smooth truncation map to hypotheses: for any $\threshold\in(0,1/2)$ and $f:(\contextspace\times\outcomespace)^{*}\times\contextspace\to\probsimplex{\outcomespace}$, we use $f^{\threshold}$ to denote its smooth truncated counterpart $\truncate{\threshold}\circ f$; for any hypothesis class $\hpclass$, we use $\trunclass{\threshold}$ to denote the corresponding smooth truncated class $\truncate{\threshold}\circ\hpclass = \{\truncate{\threshold}\circ f: f\in\hpclass\}$. It is easy to verify that any smooth truncated class $\trunclass{\threshold}$ satisfies \cref{ineq:sufficient conditioin for applying minimax swap} and hence
\*[
\regret_T(\trunclass{\threshold})=\sup_{\contexttree} \log \shtarkov{T}{\trunclass{\threshold}}{\contexttree}.
\]

Next we control the effect of truncation on the minimax regret.
\begin{lemma}\label{lemma:regret of truncated class}
    For any $\hpclass, T$ and $\threshold\in(0,1/2)$,
    \*[
    \regret_T(\hpclass)
    \leq
    \regret_T(\trunclass{\threshold})
    +
    T\cdot\log(1+|\outcomespace|\threshold).
    \]
\end{lemma}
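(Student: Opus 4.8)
The plan is to reduce the claim to a \emph{pointwise} comparison of cumulative losses along a single play of the game, and then to push that comparison through the nested $\sup$--$\inf$ operators that define the minimax regret; done this way, no structural assumption on $\hpclass$ is needed.

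First I would record the elementary bound that smooth truncation inflates the log loss by at most $\log(1+|\outcomespace|\threshold)$: for every $\probden\in\probsimplex{\outcomespace}$ and $\outcome\in\outcomespace$, since $\probden(\outcome)+\threshold\geq\probden(\outcome)$,
\*[
\logloss{\truncate{\threshold}(\probden)}{\outcome}
&=
-\log\frac{\probden(\outcome)+\threshold}{1+|\outcomespace|\threshold}
\leq
-\log\frac{\probden(\outcome)}{1+|\outcomespace|\threshold} \\
&=
\logloss{\probden}{\outcome}+\log(1+|\outcomespace|\threshold),
\]
where the inequality still holds trivially (the right side being $+\infty$) when $\probden(\outcome)=0$. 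Applying this with $\probden=f(\contextseq{1}{t},\outcomeseq{1}{t-1})$ and $\outcome=\outcome_t$ and summing over $t\in[T]$ shows that for every $f\in\hpclass$ and every sequence $\contextseq{1}{T},\outcomeseq{1}{T}$ the truncated hypothesis $f^{\threshold}:=\truncate{\threshold}\circ f$ satisfies $\sum_{t=1}^T\logloss{f^{\threshold}_t}{\outcome_t}\leq\sum_{t=1}^T\logloss{f_t}{\outcome_t}+T\log(1+|\outcomespace|\threshold)$. Since by definition $\trunclass{\threshold}=\{f^{\threshold}:f\in\hpclass\}$, taking the infimum over $f\in\hpclass$ on both sides and negating gives
\*[
-\inf_{f\in\hpclass}\sum_{t=1}^T\logloss{f_t}{\outcome_t}
\leq
-\inf_{g\in\trunclass{\threshold}}\sum_{t=1}^T\logloss{g_t}{\outcome_t}
+T\log(1+|\outcomespace|\threshold).
\]

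Adding the common learner term $\sum_{t=1}^T\logloss{\prediction_t}{\outcome_t}$ to both sides yields, for every choice of $\contextseq{1}{T},\predictionseq{1}{T},\outcomeseq{1}{T}$, the pointwise inequality $\regret_T(\hpclass;\predictionseq{1}{T},\contextseq{1}{T},\outcomeseq{1}{T})\leq\regret_T(\trunclass{\threshold};\predictionseq{1}{T},\contextseq{1}{T},\outcomeseq{1}{T})+T\log(1+|\outcomespace|\threshold)$. I would then apply the operators $\seqopt{\sup_{\context_t}\inf_{\prediction_t}\sup_{\outcome_t}}_{t=1}^T$ to both sides: $\sup$ and $\inf$ are monotone so they preserve the inequality, and the additive constant $T\log(1+|\outcomespace|\threshold)$ passes through each of them unchanged, so the left side becomes $\regret_T(\hpclass)$ and the right side becomes $\regret_T(\trunclass{\threshold})+T\log(1+|\outcomespace|\threshold)$, which is the lemma. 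There is no genuinely difficult step; the only point needing care is that the loss comparison be uniform over the sequence \emph{and} over all of $\hpclass$, so that it survives both the infimum over the class and the nested extensive-form operators --- the displays above make this explicit. This is exactly what lets one transfer the clean identity $\regret_T(\trunclass{\threshold})=\sup_{\contexttree}\log\shtarkov{T}{\trunclass{\threshold}}{\contexttree}$ (available since $\trunclass{\threshold}$ meets the finiteness condition of \cref{lemma:when can we do minimax swap}) into a bound on $\regret_T(\hpclass)$.
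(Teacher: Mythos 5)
Your proposal is correct and follows essentially the same route as the paper: the per-round bound you derive is exactly the paper's \cref{lemma:effect of truncation on log loss}, which the paper likewise sums over rounds to get the pointwise comparison of cumulative losses and hence of $\regret_T(\hpclass;\cdot)$ and $\regret_T(\trunclass{\threshold};\cdot)$, before passing (implicitly) through the nested $\sup$--$\inf$ operators. Your version merely makes the final monotonicity step through the extensive-form operators explicit, which is a harmless elaboration rather than a different argument.
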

\begin{proof}[of \cref{lemma:regret of truncated class}]
     Fix threshold $\threshold\in(0,1/2)$ and hypothesis $f$. By \cref{lemma:effect of truncation on log loss}, for any given sequences $\contextseq{1}{T}, \outcomeseq{1}{T}$, there is 
     \[\label{ineq:target ineq for upper bounding the regret for truncated class}
     \sum_{t=1}^T \logloss{f^{\threshold}(\contextseq{1}{t},\outcomeseq{1}{t-1})}{\outcome_t}
     -
     \sum_{t=1}^T \logloss{f(\contextseq{1}{t},\outcomeseq{1}{t-1})}{\outcome_t}
     \leq 
     T\cdot\log(1+|\outcomespace|\threshold).
     \]
     Then, for any sequence of predictions $\predictionseq{1}{T}$,
     \*[
     \regret_T(\hpclass;\predictionseq{1}{T}, \contextseq{1}{T}, 
     \outcomeseq{1}{T})
     &=
     \sum_{t=1}^T \logloss{\prediction_t}{\outcome_t}
     -
     \inf_{f\in\hpclass} \sum_{t=1}^T \logloss{f_t}{\outcome_t} \\
     &\leq
     \sum_{t=1}^T \logloss{\prediction_t}{\outcome_t}
     -
     \inf_{f^{\threshold}\in\trunclass{\threshold}} \sum_{t=1}^T \logloss{f^{\threshold}_t}{\outcome_t}
     +
     T\cdot\log(1+|\outcomespace|\threshold) \\
     &=
     \regret_T(\trunclass{\threshold};\predictionseq{1}{T}, \contextseq{1}{T}, 
     \outcomeseq{1}{T})
     +
     T\cdot\log(1+|\outcomespace|\threshold),
     \]
     which concludes the proof.
\end{proof}

\begin{lemma}\label{lemma:likelihood of truncated hypothesis}
    There exists a constant $M(T)<\infty$ that only depends on $T$ such that for any $f, \contextseq{1}{T}\in\contextspace^T, \outcomeseq{1}{T}\in\outcomespace^T$ and $\threshold\in(0,1/2)$, 
    \*[
    \likelihood{f^{\threshold}}{\outcomeseq{1}{T}}{\contextseq{1}{T}}
    \leq
    \likelihood{f}{\outcomeseq{1}{T}}{\contextseq{1}{T}}
    +
    \threshold\cdot M(T).
    \]
\end{lemma}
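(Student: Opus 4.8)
The plan is to reduce the claim to a purely scalar inequality about products of numbers in $[0,1]$. Fix $f$, the sequences $\contextseq{1}{T},\outcomeseq{1}{T}$, and $\threshold\in(0,1/2)$, and abbreviate $a_t := f(\contextseq{1}{t},\outcomeseq{1}{t-1})(\outcome_t)\in[0,1]$ for $t\in[T]$. By the definition of the smooth truncation map, $f^{\threshold}(\contextseq{1}{t},\outcomeseq{1}{t-1})(\outcome_t)=\tfrac{a_t+\threshold}{1+|\outcomespace|\threshold}$, and since $1+|\outcomespace|\threshold\ge 1$ each truncated factor obeys $\tfrac{a_t+\threshold}{1+|\outcomespace|\threshold}\le a_t+\threshold$. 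Hence, by the product form of the likelihood in \cref{def:likelihood},
\[
\likelihood{f^{\threshold}}{\outcomeseq{1}{T}}{\contextseq{1}{T}}
=\prod_{t=1}^T \frac{a_t+\threshold}{1+|\outcomespace|\threshold}
\le \prod_{t=1}^T (a_t+\threshold).
\]
Note that this step already discards all dependence on $|\outcomespace|$ (the denominator only helps), which is exactly why the constant we produce will depend on $T$ alone.

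It then remains to establish the elementary bound $\prod_{t=1}^T(a_t+\threshold)\le \prod_{t=1}^T a_t+\threshold\cdot M(T)$ for a suitable finite $M(T)$. I would do this by expanding the product, $\prod_{t=1}^T(a_t+\threshold)=\sum_{S\subseteq[T]}\threshold^{|S|}\prod_{t\notin S}a_t$, separating off the $S=\emptyset$ term, which is exactly $\prod_{t}a_t$, and bounding the remaining terms using $a_t\le 1$ together with $\threshold<\tfrac12<1$ (so $\threshold^{|S|}\le\threshold$ for every nonempty $S$):
\[
\sum_{\emptyset\neq S\subseteq[T]}\threshold^{|S|}\prod_{t\notin S}a_t
\le \sum_{\emptyset\neq S\subseteq[T]}\threshold^{|S|}
\le \threshold\sum_{k=1}^T\binom{T}{k}
= \threshold\,(2^T-1).
\]
Equivalently, one could use the telescoping identity $\prod_t b_t-\prod_t a_t=\sum_{j=1}^T\big(\prod_{t<j}b_t\big)(b_j-a_j)\big(\prod_{t>j}a_t\big)$ with $b_t=a_t+\threshold$, bounding $\prod_{t<j}b_t\le(1+\threshold)^{j-1}\le(3/2)^{T}$ and $\prod_{t>j}a_t\le 1$. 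Either route lets us take $M(T)=2^T$ (or any convenient constant of this kind).

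Combining the two displays gives
\[
\likelihood{f^{\threshold}}{\outcomeseq{1}{T}}{\contextseq{1}{T}}
\le \prod_{t=1}^T a_t+\threshold\,M(T)
=\likelihood{f}{\outcomeseq{1}{T}}{\contextseq{1}{T}}+\threshold\,M(T),
\]
with $M(T)$ depending only on $T$, as required. There is no genuine obstacle here; the only points that need care are confirming that the bound truly carries no $|\outcomespace|$-dependence — which it does not, since dropping the factor $1+|\outcomespace|\threshold\ge 1$ is an inequality in the favourable direction — and observing that the hypothesis $\threshold<\tfrac12$ is exactly what allows every higher power $\threshold^{k}$ with $k\ge 1$ to be absorbed into a single factor $\threshold$ times a $T$-dependent constant.
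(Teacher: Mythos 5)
Your proposal is correct and follows essentially the same route as the paper: drop the normalizing factor $1+|\outcomespace|\threshold\geq 1$, expand $\prod_t(a_t+\threshold)$, and absorb all nonempty-subset terms into $\threshold$ times a binomial sum, yielding $M(T)=2^T$ (the paper takes $M(T)=T+\binom{T}{2}+\dots+\binom{T}{T}=2^T-1$). No issues.
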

\begin{proof}[of \cref{lemma:likelihood of truncated hypothesis}]
    Fix threshold $\threshold\in(0,1/2)$, hypothesis $f$ and sequences $\contextseq{1}{T}, \outcomeseq{1}{T}$. Then
    \*[
    \likelihood{f^{\threshold}}{\outcomeseq{1}{T}}{\contextseq{1}{T}}
    =
    \prod_{t=1}^T f^{\threshold}_t(\outcome_t)
    &=
    \prod_{t} \lr{\frac{f_t(\outcome_t)+\threshold}{1+|\outcomespace|\threshold} } \\
    &\leq
    \prod_{t} (f_t(\outcome_t)+\threshold) \\
    &=
    \prod_{t} f_t(\outcome_t)
    +
    \threshold\cdot\sum_t \prod_{t'\neq t}f_{t'}(\outcome_{t'})
    +\dots
    +
    \threshold^T \\
    &\leq
    \prod_{t} f_t(\outcome_t)
    +
    \threshold\cdot M(T) \\
    &=
    \likelihood{f}{\outcomeseq{1}{T}}{\outcomeseq{1}{T}}
    +
    \threshold\cdot M(T),
    \]
    where we can set $M(T)=T+ \binom{T}{2}+\binom{T}{3}+\dots+\binom{T}{T}$ since $f_t(\outcome_t)$'s are bounded by $1$.
\end{proof}

\subsection{Putting together}
Now we are fully prepared to finish the proof of \cref{theorem:contextual shtarkov sum characterizes minimax regret}, our main result in \cref{sec:minimax regret via contextual shtarkov}.
\begin{proof}[of \cref{theorem:contextual shtarkov sum characterizes minimax regret}]
    By \cref{lemma:likelihood of truncated hypothesis}, we have that for any context tree $\contexttree$ of depth $T$,
    \*[
    \sum_{\absoutcomeseq\in\outcomespace^T}
    \sup_{f^{\threshold}\in\trunclass{\threshold}}
    \likelihood{f^{\threshold}}{\absoutcomeseq}{\contexttree(\absoutcomeseq)}
    \leq
    \sum_{\absoutcomeseq\in\outcomespace^T}
    \sup_{f\in\hpclass}
    \likelihood{f}{\absoutcomeseq}{\contexttree(\absoutcomeseq)}
    +
    \threshold\cdot M(T)\cdot |\outcomespace|^T.
    \]
    Thus
    \*[
    \regret_T(\trunclass{\threshold})
    &=
    \sup_{\contexttree} \log \shtarkov{T}{\trunclass{\threshold}}{\contexttree} \\
    &=
    \sup_{\contexttree}\log\lr{\sum_{\absoutcomeseq\in\outcomespace^T}
    \sup_{f^{\threshold}\in\trunclass{\threshold}}
    \likelihood{f^{\threshold}}{\absoutcomeseq}{\contexttree(\absoutcomeseq)}} \\
    &\leq
    \sup_{\contexttree}\log\lr{\sum_{\absoutcomeseq\in\outcomespace^T}
    \sup_{f\in\hpclass}
    \likelihood{f}{\absoutcomeseq}{\contexttree(\absoutcomeseq)}
    +
    \threshold\cdot M(T)\cdot |\outcomespace|^T} \\
    &=
    \log\lr{\sup_{\contexttree} \sum_{\absoutcomeseq\in\outcomespace^T}
    \sup_{f\in\hpclass}
    \likelihood{f}{\absoutcomeseq}{\contexttree(\absoutcomeseq)}
    +
    \threshold\cdot M(T)\cdot |\outcomespace|^T}.
    \]
    Together with \cref{lemma:regret of truncated class}, we get that for any $\threshold\in(0,1/2)$,
    \[\label{ineq:regrert upper bound in terms of threshold}
    \regret_T(\hpclass)
    \leq
    \log\lr{\sup_{\contexttree} \sum_{\absoutcomeseq\in\outcomespace^T}
    \sup_{f\in\hpclass}
    \likelihood{f}{\absoutcomeseq}{\contexttree(\absoutcomeseq)}
    +
    \threshold\cdot M(T)\cdot |\outcomespace|^T}
    +
    T\cdot\log(1+|\outcomespace|\threshold).
    \]
    After sending $\threshold\to 0^{+}$ on the RHS of \cref{ineq:regrert upper bound in terms of threshold},
    \*[
    \regret_T(\hpclass)
    \leq
    \log\lr{\sup_{\contexttree} \sum_{\absoutcomeseq\in\outcomespace^T}
    \sup_{f\in\hpclass}
    \likelihood{f}{\absoutcomeseq}{\contexttree(\absoutcomeseq)}}
    =
    \sup_{\contexttree} \log\shtarkov{T}{\hpclass}{\contexttree}.
    \]
    Recall that we have $\regret_T(\hpclass)\geq\sup_{\contexttree} \log\shtarkov{T}{\hpclass}{\contexttree}$ from \cref{lemma:minmax value is always lower bounded by maxmin} and \cref{lemma:conditional characterization}. So finally,
    \*[
    \regret_T(\hpclass)
    =
    \sup_{\contexttree} \log \shtarkov{T}{\hpclass}{\contexttree}.
    \]
\end{proof}
\subsection{Additional proofs}
\label{additionalproofs}

\begin{lemma}\label{lemma:effect of truncation on log loss}
    For any $\probden\in\probsimplex{\outcomespace}$ and $\threshold\in(0,1/2)$,
    \*[
    \logloss{\truncate{\threshold}(\probden)}{\outcome}
    \leq
    \logloss{\probden}{\outcome}
    +
    \log(1+|\outcomespace|\threshold)
    \leq
    \logloss{\probden}{\outcome}
    +
    |\outcomespace|\threshold,
    \forall \outcome\in\outcomespace.
    \]
\end{lemma}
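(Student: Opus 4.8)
The plan is to unwind the definitions and then invoke two elementary facts about the logarithm: monotonicity, and the inequality $\log(1+x)\le x$ for $x>-1$. Fix $\probden\in\probsimplex{\outcomespace}$, $\threshold\in(0,1/2)$, and $\outcome\in\outcomespace$. By the definition of the smooth truncation map and of the log loss,
\*[
\logloss{\truncate{\threshold}(\probden)}{\outcome}
=
-\log\LR{\frac{\probden(\outcome)+\threshold}{1+|\outcomespace|\threshold}}
=
-\log\bigl(\probden(\outcome)+\threshold\bigr)
+
\log\bigl(1+|\outcomespace|\threshold\bigr).
\]

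First I would bound the first term. Since $\threshold>0$, we have $\probden(\outcome)+\threshold\ge \probden(\outcome)>0$, so by monotonicity of $\log$ (and of $x\mapsto-\log x$ reversing it) we get $-\log(\probden(\outcome)+\threshold)\le -\log(\probden(\outcome))=\logloss{\probden}{\outcome}$. Substituting into the display above yields the first claimed inequality,
\*[
\logloss{\truncate{\threshold}(\probden)}{\outcome}
\le
\logloss{\probden}{\outcome}
+
\log\bigl(1+|\outcomespace|\threshold\bigr).
\]
For the second inequality, apply the standard bound $\log(1+x)\le x$, valid for all $x>-1$, with $x=|\outcomespace|\threshold>0$, giving $\log(1+|\outcomespace|\threshold)\le |\outcomespace|\threshold$ and hence the chain of inequalities in the statement.

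There is essentially no obstacle here — the only things to be careful about are that $\probden(\outcome)$ may be zero (in which case $\logloss{\probden}{\outcome}=+\infty$ and both inequalities hold trivially, so the argument above applies verbatim under the convention $-\log 0=+\infty$), and that $\threshold<1/2$ is not actually needed for the bound, only $\threshold>0$; I would simply note this and keep the hypothesis as stated for consistency with its uses elsewhere.
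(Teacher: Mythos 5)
Your proof is correct and matches the paper's argument: both amount to the direct computation $\logloss{\truncate{\threshold}(\probden)}{\outcome}-\logloss{\probden}{\outcome}=\log\bigl(\tfrac{\probden(\outcome)}{\probden(\outcome)+\threshold}(1+|\outcomespace|\threshold)\bigr)\le\log(1+|\outcomespace|\threshold)\le|\outcomespace|\threshold$, with your version just splitting the logarithm and noting the $\probden(\outcome)=0$ convention explicitly.
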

\begin{proof}[of \cref{lemma:effect of truncation on log loss}]
    By direct computation, for any $\outcome\in\outcomespace$,
    \*[
    \logloss{\truncate{\threshold}(\probden)}{\outcome}
    -
    \logloss{\probden}{\outcome}
    &=
    \log\LR{\frac{\probden(\outcome)}{\probden(\outcome)+\threshold}\cdot(1+|\outcomespace|\threshold)} \\
    &\leq
    \log(1+|\outcomespace|\threshold) \\
    &\leq
    |\outcomespace|\threshold.
    \]
\end{proof}

\proofsubsection{prop:contextual shtarkov sum for finite classes} %

    Starting from \cref{theorem:contextual shtarkov sum characterizes minimax regret} that $\regret_T(\hpclass)=
    \sup_{\contexttree} \log \shtarkov{T}{\hpclass}{\contexttree}$, we have
    \*[
    \regret_T(\hpclass)
    &=
    \sup_{\contexttree} \log\lr{\sum_{\absoutcomeseq} \sup_{f\in\hpclass} \likelihood{f}{\absoutcomeseq}{\contexttree(\absoutcomeseq)}} \\
    &\leq
    \sup_{\contexttree} \log\lr{\sum_{\absoutcomeseq} \sum_{f\in\hpclass} \likelihood{f}{\absoutcomeseq}{\contexttree(\absoutcomeseq)}} \\
    &=
    \sup_{\contexttree} \log\lr{ \sum_{f\in\hpclass}\sum_{\absoutcomeseq} \likelihood{f}{\absoutcomeseq}{\contexttree(\absoutcomeseq)}}
    =
    \log |\hpclass|,
    \]
    where the last equality is due to \cref{lemma:likelihoods sum up to one}.

\proofsubsection{lemma:root T lower bound for the linear class}
    It suffices to show that $\fdregret_T(\linclass)\geq\sqrt{T}/4$. In particular, we only need to find some context sequence $\contextseq{1}{T}$ such that $\log\shtarkov{T}{\linclass}{\contextseq{1}{T}}\geq\sqrt{T}/4$ due to \cref{prop:fixed design minimax regret is characterized by conditional shtarkov sum}. Here we pick $\contextseq{1}{T}$ such that $\context_t$ are unit vectors and $\context_t \perp \context_{t^{\prime}}$ whenever $t\neq t^{\prime}$. Such sequence exists because the dimension of $\mathbb B_2$ is no smaller than $T$. In this way, for each possible label sequence $\outcomeseq{1}{T}\in\binaryspace^T$, we can see that the $f_w\in\linclass$ that is indexed by
    \*[
    w
    =
    \sum_{t=1}^T
    \frac{2\outcome_t-1}{\sqrt{T}}\context_t
    \]
    achieves likelihood $\likelihood{f_w}{\outcomeseq{1}{T}}{\contextseq{1}{T}}
    =(\frac{1+1/\sqrt{T}}{2})^T$. Therefore, 
    \*[
    \shtarkov{T}{\linclass}{\contextseq{1}{T}}
    =
    \sum_{\outcomeseq{1}{T}\in\binaryspace^T} \sup_{f\in\linclass} \likelihood{f}{\outcomeseq{1}{T}}{\contextseq{1}{T}}
    \geq
    \sum_{\outcomeseq{1}{T}\in\binaryspace^T}
    \lr{
    \frac{1+1/\sqrt{T}}{2}
    }^T
    =
    \lr{
    1+1/\sqrt{T}
    }^T,
    \]
    which implies that $\fdregret_T(\linclass)=\log\shtarkov{T}{\linclass}{\contextseq{1}{T}}\geq T\log(1+1/\sqrt{T})\geq\sqrt{T}/4$ for all $T\geq 1$.

\proofsection{sec:minimax optimal algorithm contextual NML}\label{appendix sec:proofs for cNML}
\paragraph{Notations.} Again we may use $f_t$ to denote the probability vector $f(\contextseq{1}{t}, \outcomeseq{1}{t-1})\in\probsimplex{\outcomespace}$ produced by hypothesis $f$ at time $t$ when the context and label sequences $\contextseq{1}{T},\outcomeseq{1}{T}$ are clear from the context. For a context tree $\contexttree$ of depth $T-t$ and a path $\absoutcomeseq\in\outcomespace^{T-t}$, we re-index $\contexttree(\absoutcomeseq)$ as $(\contexttree_{t+1}(\absoutcomeseq),\dots,\contexttree_T(\absoutcomeseq))$ whenever it takes the last $T-t$ entries of the entire context sequence. And we do the same for the probabilistic tree $\probtree$ as well. That is, whenever $\absoutcomeseq=(\outcome_{t+1},\dots,\outcome_T)\in\outcomespace^{T-t}$ takes the last $T-t$ entries of the whole label sequence and $\absoutcomeseq\sim\probtree$, then we will denote this label generating process by $\outcome_{t+1}\sim\probtree_{t+1}(\absoutcomeseq),\dots,
\outcome_T\sim\probtree_T(\absoutcomeseq)$.

\proofsubsection{theorem:minimax algorithm}
\begin{proof}[of \cref{theorem:minimax algorithm}]
    Recall that the minimax regret is
    \*[
    \regret_{T}(\hpclass)
    =
    \seqopt{\sup_{\context_t} \inf_{\prediction_t}\sup_{\outcome_t}}_{t=1}^T 
    \LRbra{
    \sum_{t=1}^T \logloss{\prediction_t} {\outcome_t}-\inf_{f\in\hpclass} \sum_{t=1}^T \logloss{f(\contextseq{1}{t},\outcomeseq{1}{t-1})}{\outcome_t}
    }.
    \]
    Through this extensive form of the minimax regret, we know that given $\contextseq{1}{t}, \outcomeseq{1}{t-1}$, the minimax prediction $\optprediction_t$ at round $t$ is the one that minimizes the following expression over all $\prediction_t\in\probsimplex{\outcomespace}$:
    \[\label{eqn:defining eqn for the minimax prediction}
    \sup_{\outcome_t}\seqopt{\sup_{\context_s} \inf_{\prediction_s}\sup_{\outcome_s}}_{s=t+1}^T
    \LRbra{
    \sum_{s=t}^T \logloss{\prediction_s} {\outcome_s}-\inf_{f\in\hpclass} \sum_{s=1}^T \logloss{f(\contextseq{1}{s},\outcomeseq{1}{s-1})}{\outcome_s}
    }.
    \]
    Define 
    \*[
    G(\hpclass, \contextseq{1}{t},\outcomeseq{1}{t})
    =
    \seqopt{\sup_{\context_s} \inf_{\prediction_s}\sup_{\outcome_s}}_{s=t+1}^T
    \LRbra{
    \sum_{s=t+1}^T \logloss{\prediction_s} {\outcome_s}-\inf_{f\in\hpclass} \sum_{s=1}^T \logloss{f(\contextseq{1}{s},\outcomeseq{1}{s-1})}{\outcome_s}
    },
    \]
    and now
    \*[
    \optprediction_t
    =
    \cArgmin_{\prediction_t\in\probsimplex{\outcomespace}}
    \sup_{\outcome_t} 
    \LRset{
    \logloss{\prediction_t}{\outcome_t}
    +
    G(\hpclass, \contextseq{1}{t},\outcomeseq{1}{t})
    }.
    \]
    The crux of the proof is to show the following:
    \begin{lemma}\label{lemma:G function is characterized by contextual shtarkov sum with prefix}
        For any hypothesis class $\hpclass$ and sequences $\contextseq{1}{t}\in\contextspace^t, \outcomeseq{1}{t}\in\outcomespace^t$,
        \*[
        G(\hpclass,\contextseq{1}{t},\outcomeseq{1}{t})
        =
        \sup_{\contexttree}
        \log\prefixshtarkov{T}{\hpclass}{\contexttree}{\contextseq{1}{t},\outcomeseq{1}{t}}.
        \]
    \end{lemma}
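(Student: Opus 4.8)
The plan is to run, conditionally on the fixed prefix $\contextseq{1}{t},\outcomeseq{1}{t}$, the same three-step argument used to prove \cref{theorem:contextual shtarkov sum characterizes minimax regret}: a dual-game lower bound requiring no regularity, a Sion-minimax upper bound for a truncated class, and a limiting comparison between the truncated and untruncated objects. The only structural change is that the benchmark term now reads $\inf_{f\in\hpclass}\sum_{s=1}^{T}\logloss{f(\contextseq{1}{s},\outcomeseq{1}{s-1})}{\outcome_s}=-\sup_{f\in\hpclass}\log\likelihood{f}{\outcomeseq{1}{t},\absoutcomeseq}{\contextseq{1}{t},\contexttree(\absoutcomeseq)}$, with the fixed prefix carried inside the likelihood, so the role of $\shtarkov{T}{\hpclass}{\contexttree}$ is taken over by $\prefixshtarkov{T}{\hpclass}{\contexttree}{\contextseq{1}{t},\outcomeseq{1}{t}}$. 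In the degenerate case $\sup_{f\in\hpclass}\likelihood{f}{\outcomeseq{1}{t}}{\contextseq{1}{t}}=0$ both sides equal $-\infty$, so we may assume $\sup_{f\in\hpclass}\likelihood{f}{\outcomeseq{1}{t}}{\contextseq{1}{t}}>0$.

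\textbf{Lower bound.} First I would show $G(\hpclass,\contextseq{1}{t},\outcomeseq{1}{t})\ge\sup_{\contexttree}\log\prefixshtarkov{T}{\hpclass}{\contexttree}{\contextseq{1}{t},\outcomeseq{1}{t}}$ by mimicking \cref{lemma:minmax value is always lower bounded by maxmin} and \cref{lemma:conditional characterization}: reversing $\inf_{\prediction_s}$ and $\sup_{\outcome_s}$ at each round $s=t+1,\dots,T$ (after randomizing $\outcome_s\sim\probden_s$) only decreases the value, and using that $\inf_{\prediction_s}\E_{\outcome_s\sim\probden_s}\logloss{\prediction_s}{\outcome_s}$ is attained at $\prediction_s=\probden_s$ gives
\[
G(\hpclass,\contextseq{1}{t},\outcomeseq{1}{t})
\;\ge\;
\sup_{\contexttree,\probtree}
\E_{\absoutcomeseq\sim\probtree}
\lrbra{
\sum_{s=t+1}^{T}\logloss{\probtree_s(\absoutcomeseq)}{\outcome_s}
+
\sup_{f\in\hpclass}\log\likelihood{f}{\outcomeseq{1}{t},\absoutcomeseq}{\contextseq{1}{t},\contexttree(\absoutcomeseq)}
},
\]
where $\contexttree,\probtree$ range over depth-$(T-t)$ trees. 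Replacing $\sup_{\probtree}$ with the supremum over joint distributions $P$ on $\outcomespace^{T-t}$, recognizing $\E_{\absoutcomeseq\sim P}\sum_{s}\logloss{P_s}{\outcome_s}=H(P)$, and solving $\sup_{P}H(P)+\E_{\absoutcomeseq\sim P}[F_{\contexttree}(\absoutcomeseq)]$ with $F_{\contexttree}(\absoutcomeseq)=\sup_{f\in\hpclass}\log\likelihood{f}{\outcomeseq{1}{t},\absoutcomeseq}{\contextseq{1}{t},\contexttree(\absoutcomeseq)}$ via log-sum-exp conjugacy exactly as in \cref{lemma:conditional characterization} (here $F_{\contexttree}$ is not identically $-\infty$ because $\sum_{\absoutcomeseq}\likelihood{f}{\outcomeseq{1}{t},\absoutcomeseq}{\cdot}=\likelihood{f}{\outcomeseq{1}{t}}{\contextseq{1}{t}}>0$ for some $f$) yields value $\log\sum_{\absoutcomeseq}\exp F_{\contexttree}(\absoutcomeseq)=\log\prefixshtarkov{T}{\hpclass}{\contexttree}{\contextseq{1}{t},\outcomeseq{1}{t}}$; take $\sup_{\contexttree}$.

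\textbf{Upper bound.} For the matching upper bound I would first establish it under the conditional regularity condition $\inf_{f\in\hpclass}\sum_{s=1}^{T}\logloss{f(\contextseq{1}{s},\outcomeseq{1}{s-1})}{\outcome_s}<\infty$ for every extension $\outcomeseq{t+1}{T}$: under it, rerunning the round-by-round Sion-minimax swap from the proof of \cref{lemma:when can we do minimax swap} over rounds $t+1,\dots,T$ only (restricting the learner's prediction to a truncated simplex and then letting the truncation vanish, as there) turns $G$ into the max-min value, which by the computation above equals $\sup_{\contexttree}\log\prefixshtarkov{T}{\hpclass}{\contexttree}{\contextseq{1}{t},\outcomeseq{1}{t}}$. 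To remove the regularity assumption, pass to the smooth truncation $\trunclass{\threshold}$, $\threshold\in(0,1/2)$, which always satisfies it. Since the benchmark of the conditional game is still $\inf_{f}\sum_{s=1}^{T}\logloss{f_s}{\outcome_s}$ over all $T$ rounds, the argument of \cref{lemma:regret of truncated class} applies verbatim, giving $G(\hpclass,\contextseq{1}{t},\outcomeseq{1}{t})\le G(\trunclass{\threshold},\contextseq{1}{t},\outcomeseq{1}{t})+T\log(1+|\outcomespace|\threshold)$; and applying \cref{lemma:likelihood of truncated hypothesis} to the concatenated length-$T$ sequence $(\outcomeseq{1}{t},\absoutcomeseq)$ with contexts $(\contextseq{1}{t},\contexttree(\absoutcomeseq))$ and summing over $\absoutcomeseq\in\outcomespace^{T-t}$ gives $\prefixshtarkov{T}{\trunclass{\threshold}}{\contexttree}{\contextseq{1}{t},\outcomeseq{1}{t}}\le\prefixshtarkov{T}{\hpclass}{\contexttree}{\contextseq{1}{t},\outcomeseq{1}{t}}+\threshold\cdot M(T)\cdot|\outcomespace|^{T-t}$. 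Combining these, taking $\sup_{\contexttree}$, and sending $\threshold\to 0^{+}$ yields $G(\hpclass,\contextseq{1}{t},\outcomeseq{1}{t})\le\sup_{\contexttree}\log\prefixshtarkov{T}{\hpclass}{\contexttree}{\contextseq{1}{t},\outcomeseq{1}{t}}$, which with the lower bound completes the proof.

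\textbf{Main obstacle.} Conceptually there is nothing new beyond \cref{theorem:contextual shtarkov sum characterizes minimax regret}; the point to be careful about is the bookkeeping in the conditional Sion chain, since the prefix losses $\sum_{s=1}^{t}\logloss{f_s}{\outcome_s}$ are constants as the suffix game is played but remain coupled to $f$ inside the common $\inf_{f}$, so one must keep the benchmark in the single-likelihood form $-\sup_{f}\log\likelihood{f}{\outcomeseq{1}{t},\absoutcomeseq}{\contextseq{1}{t},\contexttree(\absoutcomeseq)}$ throughout rather than splitting it, and the $\threshold$-comparisons must account for all $T$ factors of the likelihood, not just the $T-t$ suffix ones. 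Alternatively, one can note that $\prefixshtarkov{T}{\hpclass}{\contexttree}{\contextseq{1}{t},\outcomeseq{1}{t}}$ is exactly the general Shtarkov sum of the sub-probability class $\spclass_{\hpclass^{\contextseq{1}{t},\outcomeseq{1}{t}}|\contexttree}$, so $G$ is the minimax value of a $(T-t)$-round assignment game against those sub-probability experts, and the entire development of \cref{sec:minimax regret via contextual shtarkov} transfers directly.
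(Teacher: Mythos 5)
Your proposal is correct and follows essentially the same route as the paper: the same dual-game lower bound plus entropy/log-sum-exp computation yielding $\sup_{\contexttree}\log\prefixshtarkov{T}{\hpclass}{\contexttree}{\contextseq{1}{t},\outcomeseq{1}{t}}$, and the same Sion swap for the smooth-truncated class $\trunclass{\threshold}$ combined with the comparison lemmas (\cref{lemma:regret of truncated class}-style bound with $T\log(1+|\outcomespace|\threshold)$ and \cref{lemma:likelihood of truncated hypothesis} applied to the full length-$T$ likelihood) before sending $\threshold\to 0^{+}$. The points you flag as delicate (keeping the benchmark as a single likelihood over all $T$ rounds, and the degenerate all-zero-likelihood case) are handled in the paper in exactly the way you describe.
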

    The proof of \cref{lemma:G function is characterized by contextual shtarkov sum with prefix} is done by essentially following the same strategy in \cref{appendix sec:proofs for contextual shtarkov sum} since $G(\hpclass,\contextseq{1}{t},\outcomeseq{1}{t})$ admits a similar extensive form with the minimax regret $\regret_T(\hpclass)$. For completeness we provide its proof in \cref{subsec:auxiliary lemmas for the proof of minimax algorithm}. Given \cref{lemma:G function is characterized by contextual shtarkov sum with prefix}, we have
    \*[
    \optprediction_t
    &=
    \cArgmin_{\prediction_t\in\probsimplex{\outcomespace}}
    \sup_{\outcome_t} 
    \LRset{
    \logloss{\prediction_t}{\outcome_t}
    +
    \sup_{\contexttree}
    \log\prefixshtarkov{T}{\hpclass}{\contexttree}{\contextseq{1}{t},\outcomeseq{1}{t}}
    } \\
    &=
    \cArgmin_{\prediction_t\in\probsimplex{\outcomespace}}
    \sup_{\outcome_t}
    \log\LR{
    \frac{\sup_{\contexttree}\prefixshtarkov{T}{\hpclass}{\contexttree}{\contextseq{1}{t},\outcomeseq{1}{t}}}{\prediction_t(\outcome_t)}
    }.
    \]
    We apply the following result to solve the above program:
    \begin{lemma}\citep[][Lemma~15]{mourtada2022improper}\label{lemma:math fact by MG22}\fTBD{cite properly}
        Let $g:\outcomespace\to[0,+\infty]$ be a measurable function such that $\int_{\outcomespace} g(\outcome)d\mu\in(0,+\infty)$. Then,
        \[\label{eqn:math fact by MG22}
        \inf_{\probden}\sup_{\outcome\in\outcomespace}
        \log\frac{g(\outcome)}{\probden(\outcome)}
        =
        \log\LR{
        \int_{\outcomespace}
        g(\outcome) \mu(dy)
        },
        \]
        where the infimum in \cref{eqn:math fact by MG22} spans over all probability densities $p:\outcomespace\to[0,+\infty)$ with respect to $\mu$, and the infimum is reached at 
        \*[
        \probden^*
        =
        \frac{g}{\int_{\outcomespace} g(\outcome)d\mu}.
        \]
    \end{lemma}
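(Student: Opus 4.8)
The plan is to establish the two inequalities making up \cref{eqn:math fact by MG22} separately, with the ``$\le$'' direction witnessed by the proposed optimizer $\probden^* = g/Z$, where $Z := \int_{\outcomespace} g\, d\mu \in (0,\infty)$. First I would check that $\probden^*$ is a legitimate probability density with respect to $\mu$: it is nonnegative, finite $\mu$-almost everywhere because $Z<\infty$ forces $g<\infty$ a.e., and $\int_{\outcomespace}\probden^*\, d\mu = Z/Z = 1$.

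For the upper bound I would simply evaluate the objective at $\probden^*$. On the set $\{g>0\}$, which coincides with $\{\probden^*>0\}$, the ratio $g(\outcome)/\probden^*(\outcome)$ equals the constant $Z$; on $\{g=0\}$ we also have $\probden^*(\outcome)=0$, and with the standard convention $0/0 := 0$ this point contributes $-\infty$ to the logarithm and is irrelevant to the supremum. Hence $\sup_{\outcome}\log\frac{g(\outcome)}{\probden^*(\outcome)} = \log Z$, so $\inf_{\probden}\sup_{\outcome}\log\frac{g(\outcome)}{\probden(\outcome)} \le \log Z$.

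For the lower bound I would fix an arbitrary probability density $\probden$ and set $M := \sup_{\outcome}\frac{g(\outcome)}{\probden(\outcome)}$. If $M=+\infty$ then $\log M \ge \log Z$ trivially, so assume $M<\infty$; then wherever $\probden(\outcome)=0$ we must have $g(\outcome)=0$ (otherwise the ratio is $+\infty$ at that point, contradicting $M<\infty$), and therefore $g \le M\probden$ pointwise. Integrating, $Z = \int_{\outcomespace} g\, d\mu \le M\int_{\outcomespace}\probden\, d\mu = M$, hence $\sup_{\outcome}\log\frac{g(\outcome)}{\probden(\outcome)} = \log M \ge \log Z$. Since $\probden$ was arbitrary, $\inf_{\probden}\sup_{\outcome}\log\frac{g(\outcome)}{\probden(\outcome)} \ge \log Z$, and combining with the upper bound gives the claimed identity together with attainment at $\probden^*$.

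The only points needing care are measure-theoretic bookkeeping rather than anything substantive: one must fix conventions for the ratio $g/\probden$ on the $\mu$-null sets where $g$ or $\probden$ vanishes (and check the resulting function is measurable), and note the harmless gap between a pointwise supremum and an essential supremum — the averaging inequality $\int g \le M\int\probden$ only uses $g \le M\probden$, and the witness computation for $\probden^*$ is the identity $g/\probden^* \equiv Z$ on the support of $g$, so both directions are robust to the precise interpretation. I do not expect a genuine obstacle here: this is the classical Shtarkov / normalized-maximum-likelihood optimality calculation recast in integral form, and the estimate $\int g \le \bigl(\sup_{\outcome} g(\outcome)/\probden(\outcome)\bigr)\int\probden$ is essentially the whole content.
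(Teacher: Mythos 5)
Your argument is correct, and there is nothing in the paper to compare it against line by line: the paper does not prove this lemma, it simply imports it as Lemma~15 of \citet{mourtada2022improper}. Your proof is the standard one for that cited result (and for the classical Shtarkov/NML optimality computation): evaluate the objective at the normalized density $\probden^* = g/Z$ to get the upper bound $\log Z$, and for the lower bound use that any finite value $M$ of $\sup_{\outcome} g(\outcome)/\probden(\outcome)$ forces $g \le M\probden$ pointwise, so integrating gives $Z \le M$. Both steps are sound, and in the setting where the paper actually invokes the lemma ($\outcomespace$ finite, $\mu$ the counting measure, $g$ taking values in $[0,1]$ and not identically zero) all of the measure-theoretic caveats you flag are vacuous. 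The one caveat worth stating precisely, if you want the lemma in its full measurable generality, is the case where $g=+\infty$ on a $\mu$-null set: then $\probden^*=g/Z$ is not a $[0,+\infty)$-valued density and any finite-valued density makes the \emph{pointwise} supremum $+\infty$, so the identity as written requires either assuming $g$ finite everywhere or reading the supremum as an essential supremum; your remark that the pointwise/essential gap is ``harmless'' is accurate for the lower bound and for the paper's application, but the upper bound does need one of these two fixes in the general case.
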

    Letting $g(\outcome)=\sup_{\contexttree}\prefixshtarkov{T}{\hpclass}{\contexttree}{\contextseq{1}{t},(\outcomeseq{1}{t-1},\outcome)}\in[0,1]$ and $\mu$ be the counting measure on the finite space $\outcomespace$, we can apply \cref{lemma:math fact by MG22} whenever not all $g(\outcome)$'s are $0$. In this case, we solve that
    \*[
    \optprediction_t(\outcome)
    =
    \frac{g(\outcome)}{\sum_{\outcome'\in\outcomespace}g(\outcome')}
    =
    \frac
    {\sup_{\contexttree}\prefixshtarkov{T}{\hpclass}{\contexttree}{\contextseq{1}{t},(\outcomeseq{1}{t-1},\outcome)}}
    {\sum_{\outcome'\in\outcomespace} \sup_{\contexttree}\prefixshtarkov{T}{\hpclass}{\contexttree}{\contextseq{1}{t},(\outcomeseq{1}{t-1},\outcome')}},
    \forall \outcome\in\outcomespace.
    \]
    On the other hand, if $g(\outcome)=0,\forall\outcome\in\outcomespace$, then any $\prediction_t$ such that $\prediction_t(\outcome)>0,\forall\outcome\in\outcomespace$, is an minimax optimal prediction. Moreover, it implies that $\likelihood{f}{\outcomeseq{1}{t-1}}{\contextseq{1}{t-1}}=0,\forall f\in\hpclass$. This is because for arbitrary context tree $\contexttree$,
    \*[
    0
    &=
    \sum_{\outcome_t}\sum_{\absoutcomeseq\in\outcomespace^{T-t}}
    \likelihood{f}{\outcomeseq{1}{t},\absoutcomeseq}{\contextseq{1}{t},\contexttree(\absoutcomeseq)} \\
    &=
    \sum_{\outcome_t} \likelihood{f}{\outcomeseq{1}{t}}{\contextseq{1}{t}} \\
    &=
    \likelihood{f}{\outcomeseq{1}{t-1}}{\contextseq{1}{t-1}}.
    \]
    So the cumulative loss for each expert $f$ up to round $t-1$ already blows up to $+\infty$ and the learner only needs to predict an arbitrary $\prediction\in\positivesimplex{\outcomespace}$ in all remaining rounds to achieve $\regret_T(\hpclass;\predictionseq{1}{T}, \contextseq{1}{T}, \outcomeseq{1}{T})=-\infty$.

    Overall, we can see that the minimax optimal prediction $\optprediction_t\in\probsimplex{\outcomespace}$ at round $t$ given $\contextseq{1}{t},\outcomeseq{1}{t-1}$ is
    \*[
    \optprediction_t(\outcome)
    =
    \frac
    {\sup_{\contexttree}\prefixshtarkov{T}{\hpclass}{\contexttree}{\contextseq{1}{t},(\outcomeseq{1}{t-1},\outcome)}}
    {\sum_{\outcome'\in\outcomespace} \sup_{\contexttree}\prefixshtarkov{T}{\hpclass}{\contexttree}{\contextseq{1}{t},(\outcomeseq{1}{t-1},\outcome')}},
    \forall \outcome\in\outcomespace,
    \]
    if there exists $\outcome\in\outcomespace$ such that $\sup_{\contexttree}\prefixshtarkov{T}{\hpclass}{\contexttree}{\contextseq{1}{t},(\outcomeseq{1}{t-1},\outcome)}>0$. Otherwise, select $\optprediction_t$ to be an arbitrary element in $\positivesimplex{\outcomespace}$ (and so do all remaining rounds).
\end{proof}

\subsection{Auxiliary lemmas}\label{subsec:auxiliary lemmas for the proof of minimax algorithm}
Recall that for any hypothesis class $\hpclass$ and sequences $\contextseq{1}{t}\in\contextspace^t,\outcomeseq{1}{t}\in\outcomespace^t$, 
\*[
G(\hpclass, \contextseq{1}{t},\outcomeseq{1}{t})
&=
\seqopt{\sup_{\context_s} \inf_{\prediction_s}\sup_{\outcome_s}}_{s=t+1}^T
\LRbra{
\sum_{s=t+1}^T \logloss{\prediction_s} {\outcome_s}-\inf_{f\in\hpclass} \sum_{s=1}^T \logloss{f(\contextseq{1}{s},\outcomeseq{1}{s-1})}{\outcome_s}
} \\
&=
\seqopt{\sup_{\context_s} \inf_{\prediction_s}\sup_{\probden_s}\E_{\outcome_s\sim\probden_s}}_{s=t+1}^T
\LRbra{
\sum_{s=t+1}^T \logloss{\prediction_s} {\outcome_s}-\inf_{f\in\hpclass} \sum_{s=1}^T \logloss{f(\contextseq{1}{s},\outcomeseq{1}{s-1})}{\outcome_s}
}.
\]
To prove \cref{lemma:G function is characterized by contextual shtarkov sum with prefix}, we need the following lemmas. 
\begin{lemma}\label{lemma:G function is always lower bounded by the max min value and equality holds conditionally}
    For any hypothesis class $\hpclass$ and sequences $\contextseq{1}{t}\in\contextspace^t,\outcomeseq{1}{t}\in\outcomespace^t$,
    \[\label{ineq:G function is always lower bounded by the max min value}
    G(\hpclass,\contextseq{1}{t},\outcomeseq{1}{t})
    \geq
    \sup_{\contexttree,\probtree}
    \E_{\absoutcomeseq\sim\probtree}
    \LRbra{
    \sum_{s=t+1}^T \E_{\outcome_s\sim\probtree_s(\absoutcomeseq)}[\logloss{\probtree_s(\absoutcomeseq)}{\outcome_s}]
    -
    \inf_{f\in\hpclass}
    \sum_{s=1}^T \logloss{f_s}{\outcome_s}
    }.
    \]
    And whenever for every $\contextseq{t+1}{T}\in\contextspace^{T-t},\outcomeseq{t+1}{T}\in\outcomespace^{T-t}$, it holds
    \[\label{ineq:sufficient condition that minimax swap holds for G function}
    \inf_{f\in\hpclass}
    \sum_{s=1}^T
    \logloss{f(\contextseq{1}{s},\outcomeseq{1}{s-1})}{\outcome_s}
    < \infty,
    \]
    then
    \[\label{eqn:G equals to the max min value conditionally}
    G(\hpclass,\contextseq{1}{t},\outcomeseq{1}{t})
    =
    \sup_{\contexttree,\probtree}
    \E_{\absoutcomeseq\sim\probtree}
    \LRbra{
    \sum_{s=t+1}^T \E_{\outcome_s\sim\probtree_s(\absoutcomeseq)}[\logloss{\probtree_s(\absoutcomeseq)}{\outcome_s}]
    -
    \inf_{f\in\hpclass}
    \sum_{s=1}^T \logloss{f_s}{\outcome_s}
    }.
    \]
\end{lemma}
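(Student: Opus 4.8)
The plan is to replay, for the conditional game value $G(\hpclass,\contextseq{1}{t},\outcomeseq{1}{t})$ with the prefix $\contextseq{1}{t},\outcomeseq{1}{t}$ held fixed, the two arguments already used for $\regret_T(\hpclass)$: the ``reverse the swap'' inequality of \cref{lemma:minmax value is always lower bounded by maxmin} gives the lower bound \cref{ineq:G function is always lower bounded by the max min value}, and the truncation-plus-Sion argument of \cref{lemma:when can we do minimax swap} gives the equality \cref{eqn:G equals to the max min value conditionally} under \cref{ineq:sufficient condition that minimax swap holds for G function}. The point is that $G$ has the same extensive form as $\regret_T$, except that the game now runs only over rounds $s=t+1,\dots,T$ and the competitor's loss $\inf_{f\in\hpclass}\sum_{s=1}^T\logloss{f_s}{\outcome_s}$ carries the constant prefix contribution $\sum_{s=1}^t\logloss{f_s}{\outcome_s}$; so both proofs transfer essentially verbatim, only ``stopped'' at round $t$.

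For the lower bound \cref{ineq:G function is always lower bounded by the max min value} I would, starting from the extensive form of $G$, introduce a randomization $\probden_s$ of nature's move at each round $s=T,T-1,\dots,t+1$ and replace $\inf_{\prediction_s}\sup_{\probden_s}$ by $\sup_{\probden_s}\inf_{\prediction_s}$, which can only decrease the value; after each swap I rearrange so that $\inf_{\prediction_s}\E_{\outcome_s\sim\probden_s}[\logloss{\prediction_s}{\outcome_s}]$ stands alone, and note that by Gibbs' inequality this infimum over $\probsimplex{\outcomespace}$ is attained at $\prediction_s=\probden_s$. Iterating over all $T-t$ rounds and folding the nested suprema over pairs $(\context_s,\probden_s)$ into suprema over a depth-$(T-t)$ context tree $\contexttree$ and probabilistic tree $\probtree$ yields the claimed right-hand side; no regularity on $\hpclass$ is used here. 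For the equality I would instead restrict each $\prediction_s$ to the compact set $\trunsimplex$ — which only raises $G$ — and apply Sion's minimax theorem successively at rounds $s=T,\dots,t+1$: exactly as in \cref{lemma:when can we do minimax swap}, the function of $(\prediction_s,\probden_s)$ obtained after fixing all outer moves is an expectation $\E_{\outcome_s\sim\probden_s}[\,\cdot\,]$ that is convex and continuous in $\prediction_s\in\trunsimplex$, linear (hence concave) and continuous in $\probden_s\in\probsimplex{\outcomespace}$, and bounded — the last point being exactly where \cref{ineq:sufficient condition that minimax swap holds for G function} enters — so $\inf_{\prediction_s\in\trunsimplex}$ and $\sup_{\probden_s}$ can be exchanged. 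Once all $T-t$ pairs have been swapped and the max--min form reached, I would undo the truncation via \cref{lemma:effect of truncation on log loss} at an additive cost of $|\outcomespace|\threshold(T-t)$ and send $\threshold\to 0^{+}$; combined with the lower bound this gives \cref{eqn:G equals to the max min value conditionally}.

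The main obstacle is the bookkeeping around the prefix. The term $\inf_{f\in\hpclass}\sum_{s=1}^T\logloss{f_s}{\outcome_s}$ contains the fixed prefix loss $\sum_{s=1}^t\logloss{f_s}{\outcome_s}$, which is a constant of the subgame yet sits inside the infimum over $f$ and so cannot be factored out; I must check that it merely shifts each round's objective by an $f$-dependent constant and therefore leaves the convexity, concavity, and boundedness hypotheses of Sion intact (it does, with boundedness supplied precisely by \cref{ineq:sufficient condition that minimax swap holds for G function}), and that the iterated rearrangement of accumulated terms after each swap is carried out consistently — this is the same calculation as in \cref{lemma:when can we do minimax swap}, only truncated at round $t$. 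A secondary point worth noting is the degenerate case where $\inf_{f\in\hpclass}\sum_{s=1}^t\logloss{f_s}{\outcome_s}=+\infty$ already on the prefix: then \cref{ineq:sufficient condition that minimax swap holds for G function} fails, $G=-\infty$, and the right-hand side of \cref{ineq:G function is always lower bounded by the max min value} is also $-\infty$, so only the inequality — not the equality — is claimed in that regime, consistent with the statement.
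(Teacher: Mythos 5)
Your proposal is correct and follows essentially the same route as the paper's own proof: the lower bound by reversing each $\inf_{\prediction_s}\sup_{\probden_s}$ pair and using that the expected log loss is minimized at $\prediction_s=\probden_s$, and the equality by restricting predictions to $\trunsimplex$, applying Sion's theorem round by round (with boundedness from the stated condition), undoing the truncation via \cref{lemma:effect of truncation on log loss}, and sending $\threshold\to 0^{+}$. Your handling of the prefix bookkeeping and the degenerate case also matches the intended argument.
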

\begin{proof}[of \cref{lemma:G function is always lower bounded by the max min value and equality holds conditionally}]
    First we see that similar to the proof of \cref{lemma:minmax value is always lower bounded by maxmin}, we can reverse every pair of sup over $\probden_s$ and inf over $\prediction_s$ in the extensive formulation of $G(\hpclass,\contextseq{1}{t},\outcomeseq{1}{t})$ and rearrange terms to obtain
    \*[
    G(\hpclass,\contextseq{1}{t},\outcomeseq{1}{t})
    \geq
    \seqopt{\sup_{\context_s}\sup_{\probden_s}\E_{\outcome_s\sim\probden_s}}_{s=t+1}^T 
    \LRbra{
    \sum_{s=t+1}^T \inf_{\prediction_s}\E_{\outcome_s\sim\probden_s}[\logloss{\prediction_s}{\outcome_s}]
    -
    \inf_{f\in\hpclass}
    \sum_{s=1}^T \logloss{f_s}{\outcome_s}
    },
    \]
    and again due to the nature of log loss,
    \*[
    G(\hpclass,\contextseq{1}{t},\outcomeseq{1}{t})
    &\geq
    \seqopt{\sup_{\context_s}\sup_{\probden_s}\E_{\outcome_s\sim\probden_s}}_{s=t+1}^T 
    \LRbra{
    \sum_{s=t+1}^T \E_{\outcome_s\sim\probden_s}[\logloss{\probden_s}{\outcome_s}]
    -
    \inf_{f\in\hpclass}
    \sum_{s=1}^T \logloss{f_s}{\outcome_s}
    } \\
    &=
    \sup_{\contexttree,\probtree}
    \E_{\absoutcomeseq\sim\probtree}
    \LRbra{
    \sum_{s=t+1}^T \E_{\outcome_s\sim\probtree_s(\absoutcomeseq)}[\logloss{\probtree_s(\absoutcomeseq)}{\outcome_s}]
    -
    \inf_{f\in\hpclass}
    \sum_{s=1}^T \logloss{f_s}{\outcome_s}
    },
    \]
    where in the last step we compress the expression using trees (of depth $T-t$) and \cref{ineq:G function is always lower bounded by the max min value} is proved.

    To show that the minimax swap is valid under \cref{ineq:sufficient condition that minimax swap holds for G function}, we follow the same strategy as in the proof of \cref{lemma:when can we do minimax swap} by restricting the learner's prediction $\prediction_s$ to $\trunsimplex$ for any threshold $\threshold\in(0,1/2)$ which yields 
    \*[
    G(\hpclass,\contextseq{1}{t},\outcomeseq{1}{t})
    &\leq
    \seqopt{\sup_{\context_s}\sup_{\probden_s}\E_{\outcome_s\sim\probden_s}}_{s=t+1}^T 
    \LRbra{
    \sum_{s=t+1}^T \inf_{\prediction_s\in\trunsimplex}\E_{\outcome_s\sim\probden_s}[\logloss{\prediction_s}{\outcome_s}]
    -
    \inf_{f\in\hpclass}
    \sum_{s=1}^T \logloss{f_s}{\outcome_s}
    } \\
    &\leq
    \seqopt{\sup_{\context_s}\sup_{\probden_s}\E_{\outcome_s\sim\probden_s}}_{s=t+1}^T 
    \LRbra{
    \sum_{s=t+1}^T \inf_{\prediction_s}\E_{\outcome_s\sim\probden_s}[\logloss{\prediction_s}{\outcome_s}]
    -
    \inf_{f\in\hpclass}
    \sum_{s=1}^T \logloss{f_s}{\outcome_s}
    }
    +
    |\outcomespace|\threshold T.
    \]
    So \cref{eqn:G equals to the max min value conditionally} is proved by sending $\threshold\to 0^{+}$ on the RHS of the last inequality and the established \cref{ineq:G function is always lower bounded by the max min value}.
\end{proof}

\begin{lemma}\label{lemma:max min value of G function is always characterized by the contextual shtarkov sum with prefix}
    For any hypothesis class $\hpclass$ and sequences $\contextseq{1}{t}\in\contextspace^t,\outcomeseq{1}{t}\in\outcomespace^t$,
    \*[
    \sup_{\contexttree,\probtree}
    \E_{\absoutcomeseq\sim\probtree}
    \LRbra{
    \sum_{s=t+1}^T \E_{\outcome_s\sim\probtree_s(\absoutcomeseq)}[\logloss{\probtree_s(\absoutcomeseq)}{\outcome_s}]
    -
    \inf_{f\in\hpclass}
    \sum_{s=1}^T \logloss{f_s}{\outcome_s}
    }
    =
    \sup_{\contexttree}
    \log\prefixshtarkov{T}{\hpclass}{\contexttree}{\contextseq{1}{t},\outcomeseq{1}{t}}.
    \]
\end{lemma}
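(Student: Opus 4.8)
The plan is to mirror, almost verbatim, the argument for \cref{lemma:conditional characterization}: once the fixed prefix $\contextseq{1}{t},\outcomeseq{1}{t}$ is absorbed into the likelihood, the left-hand side has exactly the structure treated there. First I would note that the supremum over depth-$(T-t)$ probabilistic trees $\probtree$ coincides with the supremum over joint distributions $P$ on $\outcomespace^{T-t}$: a tree $\probtree$ and the law of its random path $\absoutcomeseq\sim\probtree$ determine each other, and under this identification $\probtree_s(\absoutcomeseq)$ is precisely the conditional $P_s(\cdot\mid\outcome_{t+1:s-1})$. Consequently $\E_{\absoutcomeseq\sim P}\big[\sum_{s=t+1}^{T}\logloss{P_s}{\outcome_s}\big]=\sum_{s=t+1}^{T}H(\outcome_s\mid\outcome_{t+1:s-1})=H(P)$, and, writing $\inf_{f\in\hpclass}\sum_{s=1}^{T}\logloss{f_s}{\outcome_s}=-\sup_{f\in\hpclass}\log\likelihood{f}{\outcomeseq{1}{t},\absoutcomeseq}{\contextseq{1}{t},\contexttree(\absoutcomeseq)}$ (here $f_s$ reads off the fixed prefix when $s\le t$ and the traced tree values when $s>t$), and setting $F_{\contexttree}(\absoutcomeseq):=\sup_{f\in\hpclass}\log\likelihood{f}{\outcomeseq{1}{t},\absoutcomeseq}{\contextseq{1}{t},\contexttree(\absoutcomeseq)}$, the left-hand side of the lemma equals
\[
\sup_{\contexttree}\;\sup_{P\in\probsimplex{\outcomespace^{T-t}}}\LRbra{\,H(P)+\E_{\absoutcomeseq\sim P}\big[F_{\contexttree}(\absoutcomeseq)\big]}.
\]

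For each fixed $\contexttree$, the inner problem is a maximum-entropy program of the form $\max_{P}\big\{H(P)+\ip{P,F_{\contexttree}}\big\}$ over the simplex on $\outcomespace^{T-t}$; by the Fenchel conjugacy between negative entropy and log-sum-exp, its value is $\log\big(\sum_{\absoutcomeseq}\exp F_{\contexttree}(\absoutcomeseq)\big)=\log\big(\sum_{\absoutcomeseq}\sup_{f\in\hpclass}\likelihood{f}{\outcomeseq{1}{t},\absoutcomeseq}{\contextseq{1}{t},\contexttree(\absoutcomeseq)}\big)=\log\prefixshtarkov{T}{\hpclass}{\contexttree}{\contextseq{1}{t},\outcomeseq{1}{t}}$, attained at the Gibbs distribution $P^{*}(\absoutcomeseq)\propto\exp F_{\contexttree}(\absoutcomeseq)$ --- the identical computation to the one in \cref{lemma:conditional characterization}. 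Taking $\sup_{\contexttree}$ on both sides then yields the claimed identity.

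The only point needing a little extra care beyond \cref{lemma:conditional characterization} is the treatment of $\pm\infty$ coming from the prefix. Since $\sum_{\absoutcomeseq\in\outcomespace^{T-t}}\likelihood{f}{\outcomeseq{1}{t},\absoutcomeseq}{\contextseq{1}{t},\contexttree(\absoutcomeseq)}=\likelihood{f}{\outcomeseq{1}{t}}{\contextseq{1}{t}}$ (the likelihood telescopes over the depth-$(T-t)$ tree, cf.\ \cref{lemma:likelihoods sum up to one}), two cases arise. If $\likelihood{f}{\outcomeseq{1}{t}}{\contextseq{1}{t}}=0$ for every $f\in\hpclass$, then $F_{\contexttree}\equiv-\infty$, so the right-hand side is $\log 0=-\infty$, while on the left $\inf_{f\in\hpclass}\sum_{s=1}^{T}\logloss{f_s}{\outcome_s}=+\infty$ for every path (the prefix already contributes $+\infty$), so both sides equal $-\infty$. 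Otherwise some $f$ has $\likelihood{f}{\outcomeseq{1}{t}}{\contextseq{1}{t}}>0$, hence $F_{\contexttree}(\absoutcomeseq)>-\infty$ for at least one $\absoutcomeseq$, and the Gibbs formula for $P^{*}$ (supported on $\{\absoutcomeseq:F_{\contexttree}(\absoutcomeseq)>-\infty\}$) is valid exactly as in the no-prefix case. I expect the main obstacle to be purely bookkeeping: keeping the prefix indices $1{:}t$ and the tree-traced indices $(t{+}1){:}T$ distinct throughout, together with the re-indexing convention that $\contexttree(\absoutcomeseq)$ and $\probtree(\absoutcomeseq)$ occupy the last $T-t$ coordinates --- there is no new conceptual content, as the optimization core is word-for-word that of \cref{lemma:conditional characterization}.
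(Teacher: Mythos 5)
Your proposal is correct and follows essentially the same route as the paper's own proof: identify the supremum over probabilistic trees with a supremum over joint distributions on $\outcomespace^{T-t}$, rewrite the objective as $H(P)$ plus the expected log maximum likelihood with the prefix absorbed, solve the inner maximum-entropy program via the negative-entropy/log-sum-exp conjugacy, and handle the degenerate $-\infty$ case separately. Your treatment of that degenerate case (via the telescoping identity of \cref{lemma:likelihoods sum up to one}) matches the paper's in substance, so there is nothing to add.
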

\begin{proof}[of \cref{lemma:max min value of G function is always characterized by the contextual shtarkov sum with prefix}]
    The proof follows that of \cref{lemma:conditional characterization}. By replacing the probabilistic tree $\probtree$ by the joint distribution $P\in\probsimplex{\outcomespace^{T-t}}$, we get
    \*[
    &\sup_{\contexttree,\probtree}
    \E_{\absoutcomeseq\sim\probtree}
    \LRbra{
    \sum_{s=t+1}^T \E_{\outcome_s\sim\probtree_s(\absoutcomeseq)}[\logloss{\probtree_s(\absoutcomeseq)}{\outcome_s}]
    -
    \inf_{f\in\hpclass}
    \sum_{s=1}^T \logloss{f_s}{\outcome_s}
    } \\
    =&
    \sup_{\contexttree, P}
    \E_{\absoutcomeseq\sim P}
    \LRbra{
    \sum_{s=t+1}^T \logloss{P_s}{\outcome_s}
    -
    \inf_{f\in\hpclass}
    \sum_{s=1}^T \logloss{f_s}{\outcome_s}
    } \\
    =&
    \sup_{\contexttree}\sup_{P\in\probsimplex{\outcomespace^{T-t}}}
    H(P)
    +
    \E_{\absoutcomeseq\sim P}
    \LRbra{
    \sup_{f\in\hpclass}\log \likelihood{f}{\outcomeseq{1}{t},\absoutcomeseq}{\contextseq{1}{t},\contexttree(\absoutcomeseq)}
    }.
    \]
    Similarly, for any fixed $\contexttree$, define the map $F_{\contexttree}^{\contextseq{1}{t},\outcomeseq{1}{t}}: \outcomespace^{T-t}\to\mathbb R\cup\{-\infty\}$ by
    \*[
    F_{\contexttree}^{\contextseq{1}{t},\outcomeseq{1}{t}}(\absoutcomeseq) 
    =
    \sup_{f\in\hpclass} \log\likelihood{f}{\outcomeseq{1}{t},\absoutcomeseq}{\contextseq{1}{t}, \contexttree(\absoutcomeseq)},
    \]
    and now we solve 
    \*[
    \sup_{P\in\probsimplex{\outcomespace^{T-t}}} 
    H(P)
    + 
    \E_{\absoutcomeseq\sim P}[F_{\contexttree}^{\contextseq{1}{t},\outcomeseq{1}{t}}(\absoutcomeseq)].
    \]
    If there exists some $\absoutcomeseq\in\outcomespace^{T-t}$ such that $F_{\contexttree}^{\contextseq{1}{t},\outcomeseq{1}{t}}(\absoutcomeseq)>-\infty$, then the optimal $P^*$ is given by
    \*[
    P^*(\absoutcomeseq)
    =
    \frac{\exp(F_{\contexttree}^{\contextseq{1}{t},\outcomeseq{1}{t}}(\absoutcomeseq))}{\sum_{\absoutcomeseq'} \exp(F_{\contexttree}^{\contextseq{1}{t},\outcomeseq{1}{t}}(\absoutcomeseq'))} 
    = 
    \frac{\sup_{f\in\hpclass} \likelihood{f}{\outcomeseq{1}{t}, \absoutcomeseq}{\contextseq{1}{t}, \contexttree(\absoutcomeseq)}}{\sum_{\absoutcomeseq'} \sup_{f\in\hpclass}\likelihood{f}{\outcomeseq{1}{t}, \absoutcomeseq'}{\contextseq{1}{t}, \contexttree(\absoutcomeseq')}}, \forall\absoutcomeseq\in\outcomespace^{T-t},
    \]
    and then
    \*[
     &\sup_{\contexttree,\probtree}
    \E_{\absoutcomeseq\sim\probtree}
    \LRbra{
    \sum_{s=t+1}^T \E_{\outcome_s\sim\probtree_s(\absoutcomeseq)}[\logloss{\probtree_s(\absoutcomeseq)}{\outcome_s}]
    -
    \inf_{f\in\hpclass}
    \sum_{s=1}^T \logloss{f_s}{\outcome_s}
    } \\
    =&
    \sup_{\contexttree}\sup_{P\in\probsimplex{\outcomespace^{T-t}}} 
    H(P)
    + 
    \E_{\absoutcomeseq\sim P}[F_{\contexttree}^{\contextseq{1}{t},\outcomeseq{1}{t}}(\absoutcomeseq)] \\
    =&
    \sup_{\contexttree}\log\LR{
    \sum_{\absoutcomeseq}\sup_{f\in\hpclass}
    \likelihood{f}{\outcomeseq{1}{t}, \absoutcomeseq}{\contextseq{1}{t}, \contexttree(\absoutcomeseq)}
    } \\
    =&
    \sup_{\contexttree}\log
    \prefixshtarkov{T}{\hpclass}{\contexttree}{\contextseq{1}{t},\outcomeseq{1}{t}}.
    \]
    However, if $F_{\contexttree}^{\contextseq{1}{t},\outcomeseq{1}{t}}(\absoutcomeseq)=-\infty$ for all $\absoutcomeseq$, then it implies that for any context tree $\contexttree$, path $\absoutcomeseq$, and $f\in\hpclass$, $\likelihood{f}{\outcomeseq{1}{t},\absoutcomeseq}{\contextseq{1}{t},\contexttree(\absoutcomeseq)}=0$ and hence,
    \*[
    &\sup_{\contexttree,\probtree}
    \E_{\absoutcomeseq\sim\probtree}
    \LRbra{
    \sum_{s=t+1}^T \E_{\outcome_s\sim\probtree_s(\absoutcomeseq)}[\logloss{\probtree_s(\absoutcomeseq)}{\outcome_s}]
    -
    \inf_{f\in\hpclass}
    \sum_{s=1}^T \logloss{f_s}{\outcome_s}
    }\\
    =&
    \sup_{\contexttree}\sup_{P\in\probsimplex{\outcomespace^{T-t}}} 
    H(P)
    + 
    \E_{\absoutcomeseq\sim P}[F_{\contexttree}^{\contextseq{1}{t},\outcomeseq{1}{t}}(\absoutcomeseq)] \\
    =&
    -\infty \\
    =&
    \sup_{\contexttree}\log\LR{
    \sum_{\absoutcomeseq}\sup_{f\in\hpclass}
    \likelihood{f}{\outcomeseq{1}{t}, \absoutcomeseq}{\contextseq{1}{t}, \contexttree(\absoutcomeseq)}
    } \\
    =&
    \sup_{\contexttree}
    \log\prefixshtarkov{T}{\hpclass}{\contexttree}{\contextseq{1}{t},\outcomeseq{1}{t}},
    \]
     which finishes our proof.
\end{proof}
Now we are able to prove the key result \cref{lemma:G function is characterized by contextual shtarkov sum with prefix}.
\begin{proof}[of \cref{lemma:G function is characterized by contextual shtarkov sum with prefix}]
     Fix any hypothesis class $\hpclass$ and sequences $\contextseq{1}{t}\in\contextspace^t, \outcomeseq{1}{t}\in\outcomespace^t$. 
     First we know 
     \*[
     G(\hpclass,\contextseq{1}{t},\outcomeseq{1}{t})
     \geq
     \sup_{\contexttree}
     \log\prefixshtarkov{T}{\hpclass}{\contexttree}{\contextseq{1}{t},\outcomeseq{1}{t}}
     \]
     due to \cref{ineq:G function is always lower bounded by the max min value} and \cref{lemma:max min value of G function is always characterized by the contextual shtarkov sum with prefix}. For the other direction, let us fix any threshold value $\threshold\in(0,1/2)$ and then
     \*[
     G(\hpclass,\contextseq{1}{t},\outcomeseq{1}{t})
     &\leq
     G(\trunclass{\threshold},\contextseq{1}{t},\outcomeseq{1}{t})
     +
     T\cdot\log(1+|\outcomespace|\threshold) \\
     &=
     \sup_{\contexttree}
     \log\prefixshtarkov{T}{\trunclass{\threshold}}{\contexttree}{\contextseq{1}{t},\outcomeseq{1}{t}}
     +
     T\cdot\log(1+|\outcomespace|\threshold) \\
     &=
     \sup_{\contexttree}
     \log\LR{
     \sum_{\absoutcomeseq\in\outcomespace^{T-t}}
     \sup_{f^{\threshold}\in\trunclass{\threshold}}
     \likelihood{f^{\threshold}}{\outcomeseq{1}{t},\absoutcomeseq}{\contextseq{1}{t},\contexttree(\absoutcomeseq)}
     }
     +
     T\cdot\log(1+|\outcomespace|\threshold) \\
     &\leq
     \sup_{\contexttree}
     \log\LR{
     \sum_{\absoutcomeseq\in\outcomespace^{T-t}}
     \sup_{f\in\hpclass}
     \likelihood{f}{\outcomeseq{1}{t},\absoutcomeseq}{\contextseq{1}{t},\contexttree(\absoutcomeseq)}
     +
     \threshold\cdot M(T)\cdot |\outcomespace|^T
     }
     +
     T\cdot\log(1+|\outcomespace|\threshold) \\
     &=
     \log\LR{
     \sup_{\contexttree}
     \sum_{\absoutcomeseq\in\outcomespace^{T-t}}
     \sup_{f\in\hpclass}
     \likelihood{f}{\outcomeseq{1}{t},\absoutcomeseq}{\contextseq{1}{t},\contexttree(\absoutcomeseq)}
     +
     \threshold\cdot M(T)\cdot |\outcomespace|^T
     }
     +
     T\cdot\log(1+|\outcomespace|\threshold),
     \]
     where we have applied \cref{lemma:effect of truncation on log loss}, \cref{lemma:G function is always lower bounded by the max min value and equality holds conditionally}, \cref{lemma:max min value of G function is always characterized by the contextual shtarkov sum with prefix}, and \cref{lemma:likelihood of truncated hypothesis} accordingly.
     Similarly, we send $\threshold\to 0^{+}$ on the RHS of the last inequality and get
     \*[
     G(\hpclass,\contextseq{1}{t},\outcomeseq{1}{t})
     \leq
     \sup_{\contexttree}
     \log\LR{
     \sum_{\absoutcomeseq\in\outcomespace^{T-t}}
     \sup_{f\in\hpclass}
     \likelihood{f}{\outcomeseq{1}{t},\absoutcomeseq}{\contextseq{1}{t},\contexttree(\absoutcomeseq)}
     }
     =
     \sup_{\contexttree}
     \log\prefixshtarkov{T}{\hpclass}{\contexttree}{\contextseq{1}{t},\outcomeseq{1}{t}},
     \]
     which concludes the proof.  
\end{proof}
\section{Additional discussions}\label{appendix sec:additional discussions}
\subsection{On the time-variant context space}
In this section we generalize our analysis to the setting where the context space can evolve over time. We model time-varying context sets by a sequence of maps $\contextspace_t: \contextspace^{t-1}\times\outcomespace^{t-1}
\to 2^{\contextspace}, t\in[T]$ as in \citep{rakhlin2015sequential, bilodeau2020tight}. In each round $t$, instead of picking any context from $\contextspace$, the nature is now required to only choose $\context_t$ from $\contextspace_t(\contextseq{1}{t-1}, \outcomeseq{1}{t-1})\subseteq\contextspace$. Then the minimax regret with respect to $(\contextspace_t)_{t\in[T]}$ is rewritten as
\*[
\regret_T(\hpclass)
=
\seqopt{\sup_{\context_t\in\contextspace_t(\contextseq{1}{t-1}, \outcomeseq{1}{t-1})}\inf_{\prediction_t}\sup_{\outcome_t}}_{t=1}^T
\regret_T(\hpclass;\predictionseq{1}{T}, \contextseq{1}{T}, 
\outcomeseq{1}{T}).
\]
A context tree $\contexttree$ is \emph{consistent} with respect to $(\contextspace_t)_{t\in[T]}$ if for all $t\in[T]$ and $\absoutcomeseq\in\outcomespace^T$, $\contexttree_t(\absoutcomeseq)\in\contextspace_t(\contextseq{1}{t-1},\outcomeseq{1}{t-1})$. Then our results in \cref{sec:minimax regret via contextual shtarkov} and \cref{sec:minimax optimal algorithm contextual NML} can be generalized simply by replacing the supremum over all context trees (of depth $T$) by the supremum over all consistent context trees. For example, we will have 
\*[
\regret_T(\hpclass)
=
\sup_{\contexttree: \contexttree\text{ is consistent}} \log\shtarkov{T}{\hpclass}{\contexttree}.
\]

\subsection{On the global and non-global sequential cover}
Now we go back to consider the usual setting of binary label and constant experts, i.e., $\outcomespace=\binaryspace$ and $\hpclass\subseteq [0,1]^{\contextspace}$.
As mentioned in \cref{sec:minimax regret via contextual shtarkov}, previous works \citep{bilodeau2020tight, wu2023regret} provided regret upper bounds based on $\ell_{\infty}$ sequential entropy. More specifically, both of their bounds are in the form of $O(\inf_{\scale>0}\{\scale T+\absentropy{\hpclass}{\scale}{T} \})$, with $\absentropy{\hpclass}{\scale}{T}$ being either the non-global entropy $\seqentropy{\hpclass}{\scale}{T}$ or the global entropy $\glbseqentropy{\hpclass}{\scale}{T}$. It is then natural to ask which one of these two bounds is tighter. It is straightforward to prove that $\seqentropy{\hpclass}{\scale}{T}$ is no larger than $\glbseqentropy{\hpclass}{\scale}{T}$. In fact,  the gap between them is at most a polylog factor, as we state and prove below.\footnote{This result and its detailed proof sketch were communicated to the first author by Changlong Wu. The argument here differs only in minor details that we introduced, perhaps unnecessarily, to arrive at a rigorous proof.} 
The proof of $\seqentropy{\hpclass}{\scale}{T}\leq \glbseqentropy{\hpclass}{\scale}{T}$ is also included for completeness. Before stating the results, we introduce the definition of sequential fat-shattering dimension.
\begin{definition}\label{def:sequential fat shattering dimension}
    We say an $\contextspace$-valued binary tree $\contexttree$ of depth $\depth$ is $\scale$-shattered by a class $\hpclass\subseteq [0,1]^{\contextspace}$ for some $\scale>0$, if there exists a $[0,1]$-valued binary tree $\witnesstree$ of depth $\depth$ such that
    \*[
    \forall \absoutcomeseq\in\binaryspace^\depth,
    \exists f\in\hpclass,
    \text{ s.t. }
    (2\outcome_t-1)\cdot(f(\contexttree_t(\absoutcomeseq))
    -
    \witnesstree_t(\absoutcomeseq))
    \geq
    \frac{\scale}{2}, \forall t\in[\depth].
    \]
    In this case, $\witnesstree$ is called the witness of the shattering. The sequential fat-shattering dimension of $\hpclass$ at scale $\scale$, denoted by $\seqfat{\scale}{\hpclass}$, is the largest $\depth$ such that some depth-$\depth$ context tree is $\scale$-shattered by $\hpclass$.
\end{definition}
\begin{proposition}\label{prop:global and non-global covers are essentially the same}
    For any scale $\scale>0$, we have 
    \*[
    \seqentropy{\hpclass}{\scale}{T}\geq\min\{T, \sup_{\scale'>\scale}\seqfat{2\scale'}{\hpclass}\}\cdot\log(2).
    \]
    Therefore, together with $\seqentropy{\hpclass}{\scale}{T}\leq \glbseqentropy{\hpclass}{\scale}{T}$ and the folklore $\glbseqentropy{\hpclass}{\scale}{T}\leq O(\seqfat{\scale}{\hpclass}\log(T/\scale))$, we conclude that the regret upper bounds $O(\inf_{\scale>0}\{\scale T+\absentropy{\hpclass}{\scale}{T} \}), \mathcal H\in\{\mathcal H_{\infty}, \mathcal H_G\}$, differ by at most a polylog factor.
\end{proposition}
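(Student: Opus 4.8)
The claim breaks into three pieces, which I would prove in order: (a) the displayed inequality $\seqentropy{\hpclass}{\scale}{T}\ge\min\{T,\sup_{\scale'>\scale}\seqfat{2\scale'}{\hpclass}\}\cdot\log(2)$; (b) the routine bound $\seqentropy{\hpclass}{\scale}{T}\le\glbseqentropy{\hpclass}{\scale}{T}$; and (c) the assembly of (a), (b), and the cited folklore bound $\glbseqentropy{\hpclass}{\scale}{T}\le O(\seqfat{\scale}{\hpclass}\log(T/\scale))$ into the claimed $\mathrm{polylog}$ comparison of the two regret bounds. For (a), fix $\scale>0$ and put $\depth:=\min\{T,\sup_{\scale'>\scale}\seqfat{2\scale'}{\hpclass}\}$ (with $\depth=T$ if the supremum is infinite). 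Since each $\seqfat{2\scale'}{\hpclass}$ is a nonnegative integer or $+\infty$ and integer-valued suprema are realized, there is $\scale'>\scale$ with $\seqfat{2\scale'}{\hpclass}\ge\depth$; fix such a $\scale'$ together with a depth-$\depth$ $\contextspace$-valued binary tree $\contexttree$ that is $2\scale'$-shattered by $\hpclass$ with witness tree $\witnesstree$, i.e.\ for every $\boldsymbol\sigma\in\binaryspace^{\depth}$ there is $f^{\boldsymbol\sigma}\in\hpclass$ with $(2\sigma_t-1)\bigl(f^{\boldsymbol\sigma}(\contexttree_t(\boldsymbol\sigma))-\witnesstree_t(\boldsymbol\sigma)\bigr)\ge\scale'$ for all $t\in[\depth]$. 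I extend $\contexttree$ to a depth-$T$ binary tree $\wt\contexttree$ by choosing the level-$t$ mappings for $t>\depth$ arbitrarily (say constant), and for $\boldsymbol\sigma\in\binaryspace^{\depth}$ I write $\wt{\boldsymbol\sigma}:=(\boldsymbol\sigma,0,\dots,0)\in\binaryspace^{T}$, so that $\wt\contexttree_t(\wt{\boldsymbol\sigma})=\contexttree_t(\boldsymbol\sigma)$ for all $t\le\depth$.

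Now let $V$ be an arbitrary sequential cover of $\hpclass\circ\wt\contexttree$ at scale $\scale$ and depth $T$ and, for each $\boldsymbol\sigma\in\binaryspace^{\depth}$, pick $v^{\boldsymbol\sigma}\in V$ that $\scale$-tracks $f^{\boldsymbol\sigma}$ along the path $\wt{\boldsymbol\sigma}$. The plan is to show $\boldsymbol\sigma\mapsto v^{\boldsymbol\sigma}$ is injective, which gives $|V|\ge 2^{\depth}$ and hence $\seqcoveringnum{\hpclass\circ\wt\contexttree}{\scale}{T}\ge 2^{\depth}$. Suppose $v^{\boldsymbol\sigma}=v^{\boldsymbol\tau}$ for distinct $\boldsymbol\sigma,\boldsymbol\tau\in\binaryspace^{\depth}$ and let $s\le\depth$ be the first coordinate in which they differ. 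Since $\boldsymbol\sigma,\boldsymbol\tau$ (hence $\wt{\boldsymbol\sigma},\wt{\boldsymbol\tau}$) agree on coordinates $1,\dots,s-1$, and the value of any of our trees at level $s$ depends only on the first $s-1$ path coordinates, the quantities $x:=\contexttree_s(\boldsymbol\sigma)=\contexttree_s(\boldsymbol\tau)$, $w:=\witnesstree_s(\boldsymbol\sigma)=\witnesstree_s(\boldsymbol\tau)$ and $v^{*}:=v^{\boldsymbol\sigma}_s(\wt{\boldsymbol\sigma})=v^{\boldsymbol\tau}_s(\wt{\boldsymbol\tau})$ are unambiguous. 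Taking $\sigma_s=1$ and $\tau_s=0$ without loss of generality, the shattering inequality at level $s$ forces $f^{\boldsymbol\sigma}(x)\ge w+\scale'$ and $f^{\boldsymbol\tau}(x)\le w-\scale'$, while $|f^{\boldsymbol\sigma}(x)-v^{*}|\le\scale$ and $|f^{\boldsymbol\tau}(x)-v^{*}|\le\scale$ by the covering guarantee; combining, $w+\scale'-\scale\le v^{*}\le w-\scale'+\scale$, i.e.\ $\scale'\le\scale$, contradicting $\scale'>\scale$. Taking the supremum over context trees in the definition of $\seqentropy{\hpclass}{\scale}{T}$ then proves (a).

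For (b): given a smallest global sequential cover $\glbcoverset{\scale}$ of $\hpclass$ at scale $\scale$ and depth $T$, and any depth-$T$ context tree $\contexttree$, associate to each $g\in\glbcoverset{\scale}$ the $\mathbb R$-valued tree $v^{g}$ with $v^{g}_t(\absoutcomeseq):=g\bigl(\contexttree_1(\absoutcomeseq),\dots,\contexttree_t(\absoutcomeseq)\bigr)$; applying the global-cover property to $f\in\hpclass$ and the context sequence $\bigl(\contexttree_1(\absoutcomeseq),\dots,\contexttree_T(\absoutcomeseq)\bigr)$ shows $\{v^{g}:g\in\glbcoverset{\scale}\}$ is a sequential cover of $\hpclass\circ\contexttree$ at scale $\scale$, so $\seqcoveringnum{\hpclass\circ\contexttree}{\scale}{T}\le\glbcoveringnum{\hpclass}{\scale}{T}$; taking $\sup_{\contexttree}$ and logarithms gives $\seqentropy{\hpclass}{\scale}{T}\le\glbseqentropy{\hpclass}{\scale}{T}$. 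For (c), alongside the folklore bound I would also record the elementary estimate $\glbseqentropy{\hpclass}{\beta}{T}\le T\log(1+1/(2\beta))$, witnessed by the history-independent global cover whose level-$t$ value ranges over a $2\beta$-net of $[0,1]$; combining with the folklore bound yields $\glbseqentropy{\hpclass}{3\scale}{T}\le O\bigl(\min\{T,\seqfat{3\scale}{\hpclass}\}\log(T/\scale)\bigr)$. On the other hand, (a) applied at scale $\scale$ with the choice $\scale'=\tfrac32\scale$ (so $2\scale'=3\scale$) gives $\seqentropy{\hpclass}{\scale}{T}\ge\min\{T,\seqfat{3\scale}{\hpclass}\}\log(2)$, and chaining with (b) produces $\seqentropy{\hpclass}{3\scale}{T}\le\glbseqentropy{\hpclass}{3\scale}{T}\le O(\log(T/\scale))\cdot\seqentropy{\hpclass}{\scale}{T}$. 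Substituting into $\inf_{\scale>0}\{\scale T+\mathcal H(\hpclass,\scale,T)\}$ for $\mathcal H\in\{\mathcal H_{\infty},\mathcal H_{G}\}$ and absorbing the constant rescaling of $\scale$ (which only changes the $\scale T$ term by a constant), one concludes that the two regret upper bounds coincide up to a $\mathrm{polylog}(T)$ factor, since $\log(T/\scale)=O(\log T)$ at the near-optimal scale.

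The main obstacle is carrying out (a) carefully: the injectivity argument relies on the extension of the shattered tree to depth $T$, on the structural fact that a tree's level-$s$ value depends only on the first $s-1$ path coordinates, and --- crucially --- on using the strict inequality $\scale'>\scale$ to turn ``$2\scale'$-shattering'' into a separation of size $2\scale'>2\scale$ that no single scale-$\scale$ cover element can be $\scale$-close to along two diverging branches simultaneously. Step (c) is essentially bookkeeping about matching the shattering scale of (a) to the cover scale on the global side (hence the harmless factor $3$) and about the $\min\{T,\cdot\}$ truncation, and step (b) is routine.
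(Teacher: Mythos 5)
Your proposal is correct, but the key step (a) is proved by a genuinely different route than the paper's. The paper establishes $|\coverset{\contexttree}{\scale}|\geq 2^{\depth_{\scale'}}$ by a greedy halving argument: it builds a single path adaptively, at each level choosing the label whose sign (relative to the witness $\witnesstree$) is the \emph{minority} among the surviving cover elements, shows the surviving subset at least halves each round, and uses the covering property together with the shattering guarantee to argue it can never become empty; it then passes from depth $\depth_{\scale'}$ to horizon $T$ by an implicit monotonicity of $\seqentropy{\hpclass}{\scale}{\cdot}$ in the depth. You instead extend the $2\scale'$-shattered tree to depth $T$ and exhibit an explicit injection from the $2^{\depth}$ shattering patterns into the cover, using the first coordinate of divergence and the fact that level-$s$ values of a tree depend only on the first $s-1$ path coordinates; the numerical core is the same in both proofs (no single cover element can be $\scale$-close, at a common node, to two functions separated by $2\scale'>2\scale$ on opposite sides of the witness), but your injectivity argument is more direct, avoids the recursive minority bookkeeping, and sidesteps the depth-monotonicity step by working at depth $T$ from the start. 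Your part (b) coincides with the paper's separate proposition on global versus non-global covers, and part (c) fleshes out bookkeeping the paper leaves implicit; your added ingredients there (the trivial cap $\glbseqentropy{\hpclass}{\beta}{T}\leq T\log(1+1/(2\beta))$ and the restriction to scales $\scale\gtrsim 1/T$ so that $\log(T/\scale)=O(\log T)$ at a near-optimal scale, at the cost of an additive constant) are exactly what is needed to make the polylog comparison rigorous.
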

\begin{proof}[of \cref{prop:global and non-global covers are essentially the same}]
    Fix any $\scale'>\scale>0$ and let $\depth_{\scale'}$ denote $\min\{T,\seqfat{2\scale'}{\hpclass}\}$. Then there exists a context tree $\contexttree$ and a witness tree $\witnesstree$, both of depth $\depth_{\scale'}$, such that for any path $\absoutcomeseq\in\binaryspace^{\depth_{\scale'}}$, there exists an $f\in\hpclass$ such that
    \[\label{eqn:fat shattering}
    \forall t\in[\depth_{\scale'}],
    (2\outcome_t-1)
    \cdot
    (f(\context_t(\absoutcomeseq))-\witness_t(\absoutcomeseq))
    \geq
    \scale'
    >
    \scale.
    \]
    Let $\coverset{\contexttree}{\scale}$ be an arbitrary sequential $\ell_{\infty}$ covering of $\hpclass$ on $\contexttree$. Now we select a path $\absoutcomeseq$ and a sequence of subsets $\coverset{\contexttree}{\scale}^{(t)}\subseteq\coverset{\contexttree}{\scale}, t\in[\depth_{\scale'}]$ in the following recursive way. Define $\coverset{\contexttree}{\scale}^{(0)}=\coverset{\contexttree}{\scale}$. For each $t\in[\depth_{\scale'}]$, choose $\outcome_t\in\binaryspace$ such that $2\outcome_t-1\in\{-1, +1\}$ is the minority among all $\sign(v_t(\outcomeseq{1}{t-1})- \witness_t(\outcomeseq{1}{t-1})), v\in\coverset{\contexttree}{\scale}^{(t-1)}$ (ignoring those of $0$'s). Finally update $\coverset{\contexttree}{\scale}^{(t)}=\{v\in\coverset{\contexttree}{\scale}^{(t-1)}: \sign(v_t(\outcomeseq{1}{t-1})- \witness_t(\outcomeseq{1}{t-1})) = 2\outcome_t-1 \}$. 
    
    First we argue that, if there is any time $t'\in[\depth_{\scale'}]$ such that $\coverset{\contexttree}{\scale}^{(t'-1)}\neq\emptyset, \coverset{\contexttree}{\scale}^{(t')}=\emptyset$, then $\coverset{\contexttree}{\scale}$ is not a valid cover of $\hpclass$ on $\contexttree$. Otherwise, recall we have selected $\outcome_1,\dots,\outcome_{t'-1}$. Now pick an arbitrary $\outcome_{t'}\in\binaryspace$. By \cref{eqn:fat shattering} we can find some $f\in\hpclass$ such that $(2\outcome_t-1)\cdot(f(\context_t(\outcomeseq{1}{t-1}))-\witness_t(\outcomeseq{1}{t-1}))>\scale, \forall t\in[t']$. Since $\coverset{\contexttree}{\scale}$ is a covering at scale $\scale$, there is $v\in\coverset{\contexttree}{\scale}$ such that $|v_t(\absoutcomeseq)-f(\context_t(\absoutcomeseq))|\leq\scale, \forall t\in[t']$. This implies that
    $\sign(f(\context_t(\absoutcomeseq))-\witness_t(\absoutcomeseq))=\sign(v_t(\absoutcomeseq)-\witness_t(\absoutcomeseq))=2\outcome_t-1, \forall t\in[t']$. So we can always find some member of $\coverset{\contexttree}{\scale}^{(t'-1)}$ to match the minority sign of $v_{t'}(\outcomeseq{1}{t'-1})-s_{t'}(\outcomeseq{1}{t'-1}), v\in\coverset{\contexttree}{\scale}^{(t'-1)}$, which means that $\coverset{\contexttree}{\scale}^{(t')}\neq\emptyset$ and yields a contradiction.

    Now we know that $|\coverset{\contexttree}{\scale}^{(t)}|\geq 1, \forall t\in[\depth_{\scale'}]$. By design $|\coverset{\contexttree}{\scale}^{(t)}|\leq |\coverset{\contexttree}{\scale}^{(t-1)}|/2, \forall t\in[\depth_{\scale'}]$, so we must have $|\coverset{\contexttree}{\scale}|=|\coverset{\contexttree}{\scale}^{(0)}|\geq 2^{\depth_{\scale'}}$. As the choice of covering is arbitrary, the covering number $\seqcoveringnum{\hpclass\circ\contexttree}{\scale}{\depth_{\scale'}}$ is also lower bounded by $2^{\depth_{\scale'}}$ and hence $\seqentropy{\hpclass}{\scale}{\depth_{\scale'}}\geq\depth_{\scale'}\cdot\log(2)$. If $\sup_{\scale'>\scale}\seqfat{2\scale'}{\hpclass}\leq T$, then we get that
    \*[
    \seqentropy{\hpclass}{\scale}{T}
    \geq
    \sup_{\scale'>\scale}\seqentropy{\hpclass}{\scale}{\seqfat{2\scale'}{\hpclass}}
    \geq
    \sup_{\scale'>\scale} \seqfat{2\scale'}{\hpclass}\cdot\log(2).
    \]
    If there is some $\scale'>\scale$ such that $\seqfat{2\scale'}{\hpclass}\geq T$, then
    \*[
    \seqentropy{\hpclass}{\scale}{T}
    =
    \seqentropy{\hpclass}{\scale}{\depth_{\scale'}}
    \geq
    T\cdot\log(2).
    \]
    Combining these two cases together, we have
    \*[
    \seqentropy{\hpclass}{\scale}{T}\geq\min\{T, \sup_{\scale'>\scale}\seqfat{2\scale'}{\hpclass}\}\cdot\log(2). 
    \]
\end{proof}
\begin{proposition}\label{prop:global seq cover larger than non-global seq cover}
    Let $\glbcoverset{\scale}$ be a global sequential $\scale$-covering of $\hpclass$ as defined in \citep{wu2023regret}. Then for any context tree $\contexttree$, there exists a sequential cover $\coverset{\contexttree}{\scale}$ of $\hpclass\circ\contexttree$ at scale $\scale$ with $|\coverset{\contexttree}{\scale
    }|\leq|\glbcoverset{\scale}|$. This implies that $\seqentropy{\hpclass}{\scale}{T}\leq\log|\glbcoverset{\scale}|$.
\end{proposition}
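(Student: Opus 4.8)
The plan is to ``read the global cover off along the context tree.'' Fix a scale $\scale>0$, a depth-$T$ global sequential $\scale$-cover $\glbcoverset{\scale}\subseteq[0,1]^{\contextspace^*}$ of $\hpclass$, and an arbitrary $\contextspace$-valued $\outcomespace$-ary context tree $\contexttree$ of depth $T$. To each mapping $g\in\glbcoverset{\scale}$ I would associate the $[0,1]$-valued tree $v^{g}$ of depth $T$ defined by $v^{g}_t(\absoutcomeseq):=g(\contexttree_1(\absoutcomeseq),\dots,\contexttree_t(\absoutcomeseq))$ for $t\in[T]$ and $\absoutcomeseq\in\outcomespace^T$, and then set $\coverset{\contexttree}{\scale}:=\{v^{g}:g\in\glbcoverset{\scale}\}$, so that $|\coverset{\contexttree}{\scale}|\le|\glbcoverset{\scale}|$. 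The first thing to verify is that each $v^{g}$ is a bona fide tree: since $\contexttree_s(\absoutcomeseq)$ depends only on $\outcomeseq{1}{s-1}$, the tuple $(\contexttree_1(\absoutcomeseq),\dots,\contexttree_t(\absoutcomeseq))$ depends only on $\outcomeseq{1}{t-1}$, so $v^{g}_t$ is a well-defined map $\outcomespace^{t-1}\to[0,1]$; here it is essential that the domain of $g$ is the set of \emph{all} finite context sequences, so that the length-$t$ prefix $(\contexttree_1(\absoutcomeseq),\dots,\contexttree_t(\absoutcomeseq))$ is a legal argument for $g$ at level $t$.

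Next I would check that $\coverset{\contexttree}{\scale}$ covers $\hpclass\circ\contexttree$ at scale $\scale$. Fix $f\in\hpclass$ and a path $\absoutcomeseq\in\outcomespace^T$, and let $\contextseq{1}{T}$ be the context sequence traced by this path, i.e.\ $\context_t:=\contexttree_t(\absoutcomeseq)$ for $t\in[T]$. This is a genuine element of $\contextspace^T$, so the defining property of the global cover supplies some $g\in\glbcoverset{\scale}$ with $|f(\context_t)-g(\contextseq{1}{t})|\le\scale$ for every $t\in[T]$. Unwinding the definitions, this is exactly $|f(\contexttree_t(\absoutcomeseq))-v^{g}_t(\absoutcomeseq)|\le\scale$ for all $t\in[T]$, i.e.\ $v^{g}\in\coverset{\contexttree}{\scale}$ $\scale$-covers $f\circ\contexttree$ along $\absoutcomeseq$. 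As $f$ and $\absoutcomeseq$ were arbitrary, $\coverset{\contexttree}{\scale}$ is a sequential $\ell_\infty$ cover of $\hpclass\circ\contexttree$ at scale $\scale$, whence $\seqcoveringnum{\hpclass\circ\contexttree}{\scale}{T}\le|\coverset{\contexttree}{\scale}|\le|\glbcoverset{\scale}|$. Taking the supremum over all depth-$T$ context trees $\contexttree$ and then logarithms gives $\seqentropy{\hpclass}{\scale}{T}\le\log|\glbcoverset{\scale}|$, and applying this with a minimal global cover yields $\seqentropy{\hpclass}{\scale}{T}\le\glbseqentropy{\hpclass}{\scale}{T}$.

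There is no substantive obstacle here: the whole content is the observation that a global cover must, by definition, simultaneously track an arbitrary context sequence across all time steps, and the sequence obtained by tracing a fixed path through a context tree is one such sequence, so the single $g$ delivered by the global cover --- composed with the tree --- is already a valid per-path witness. The only care needed is bookkeeping: matching length-$t$ prefixes $\contextseq{1}{t}$ to level $t$ when composing, confirming the composed objects respect the ``prediction depends only on the past'' structure required of trees, and, when $\glbcoverset{\scale}$ is infinite, reading $|\coverset{\contexttree}{\scale}|\le|\glbcoverset{\scale}|$ as an inequality of cardinals (equivalently, just noting the minimal sequential cover on $\contexttree$ is no larger than the image of $g\mapsto v^{g}$). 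This proposition is the ``easy half'' of \cref{prop:global and non-global covers are essentially the same}, which additionally controls $\glbseqentropy{\hpclass}{\scale}{T}$ from above via the sequential fat-shattering dimension.
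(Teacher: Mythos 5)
Your proposal is correct and follows essentially the same route as the paper's proof: compose each $g\in\glbcoverset{\scale}$ with the context tree to form the tree $v^g_t(\absoutcomeseq)=g(\contexttree_{1}(\absoutcomeseq),\dots,\contexttree_t(\absoutcomeseq))$, and use the traced path as the context sequence to which the global covering property applies. Your additional remarks on well-definedness of $v^g$ as a tree and on cardinality are fine bookkeeping but do not change the argument.
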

\begin{proof}[of \cref{prop:global seq cover larger than non-global seq cover}]
    Fix arbitrary context tree $\contexttree$. For any $g\in\glbcoverset{\scale}$, define the $[0,1]$-valued tree $v^g$ by $v^g_t(\absoutcomeseq)=g(\context_{1:t}(\absoutcomeseq)), \forall t\in[T], \absoutcomeseq\in\outcomespace^T$. Now let $\coverset{\contexttree}{\scale}=\lrset{v^g: g\in\glbcoverset{\scale}}$ and we will show that $\coverset{\contexttree}{\scale}$ is indeed a sequential cover of $\hpclass\circ\contexttree$ at scale $\scale$.
    
    For any $f\in\hpclass$ and $\absoutcomeseq\in\outcomespace^T$, tree $\contexttree$ yields a length$-T$ sequence $\context_{1:T}(\absoutcomeseq)$ and by definition of the global sequential covering, there exists $g\in\glbcoverset{\scale}$ such that 
    \*[
    |f(\context_t(\absoutcomeseq))-g(\context_{1:t}(\absoutcomeseq))|
    \leq
    \scale, \forall t\in[T].
    \]
    So by our construction of $\coverset{\contexttree}{\scale}$, $v^g\in\coverset{\contexttree}{\scale}$ holds
    \*[
    |f(\context_t(\absoutcomeseq))-v^g_t(\absoutcomeseq)|
    =
    |f(\context_t(\absoutcomeseq))-g(\context_{1:t}(\absoutcomeseq))|
    \leq
    \scale, \forall t\in[T],
    \]
    which yields our claim after observing $|\coverset{\contexttree}{\scale
    }|\leq|\glbcoverset{\scale}|$.
\end{proof}

\section{Additional proofs}\label{appendix sec:additional proofs}
\begin{lemma}\label{lemma:likelihoods sum up to one}
    For any $\contextspace$-valued $\outcomespace$-ary context tree $\contexttree$ of depth $T$, and $f:(\contextspace\times\outcomespace)^{*}\times\contextspace\to\probsimplex{\outcomespace}$, we have
    \[\label{eqn:likelihoods sum up to one}
    \sum_{\absoutcomeseq\in\outcomespace^T}
    \likelihood{f}{\absoutcomeseq}{\contexttree(\absoutcomeseq)}
    =
    1,
    \]
    where we recall that $\contexttree(\absoutcomeseq)$ denotes the context sequence $(\contexttree_1(\absoutcomeseq),\dots,\contexttree_T(\absoutcomeseq))$.
\end{lemma}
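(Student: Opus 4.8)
The plan is to prove this by induction on the depth $T$, peeling off the last coordinate of the path. The one structural observation that makes everything go through is that in the product
$$\likelihood{f}{\absoutcomeseq}{\contexttree(\absoutcomeseq)} = \prod_{t=1}^{T} f\bigl(\contexttree_1(\absoutcomeseq),\dots,\contexttree_t(\absoutcomeseq),\ \outcomeseq{1}{t-1}\bigr)(\outcome_t),$$
the $t$-th factor depends on the path $\absoutcomeseq=(\outcome_1,\dots,\outcome_T)$ only through its first $t$ coordinates $\outcomeseq{1}{t}$: indeed $\contexttree_s(\absoutcomeseq)=\contexttree_s(\outcomeseq{1}{s-1})$ for $s\le t$, the argument $\outcomeseq{1}{t-1}$ uses only $\outcome_1,\dots,\outcome_{t-1}$, and finally we read off the $\outcome_t$-th coordinate. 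In particular, once $\outcomeseq{1}{T-1}$ is fixed, the last factor is the $\outcome_T$-th entry of a single fixed distribution in $\probsimplex{\outcomespace}$.

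For the base case $T=0$ the product is empty (equal to $1$) and $\outcomespace^0$ consists of the single empty path, so the sum is $1$; equivalently, for $T=1$, $\contexttree_1$ is a constant element of $\contextspace$ and $\sum_{\outcome_1\in\outcomespace} f(\contexttree_1)(\outcome_1)=1$ because $f(\contexttree_1)\in\probsimplex{\outcomespace}$. For the inductive step, assume the identity for depth $T-1$ and let $\contexttree'$ denote the depth-$(T-1)$ tree obtained by keeping only the first $T-1$ levels of $\contexttree$; this is well-defined precisely because $\contexttree_t$ for $t\le T-1$ ignores the last coordinate, and tracing $\contexttree'$ along a length-$(T-1)$ path $\outcomeseq{1}{T-1}$ yields the same context sequence as tracing $\contexttree$. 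Splitting $\sum_{\absoutcomeseq\in\outcomespace^T}$ into $\sum_{\outcomeseq{1}{T-1}\in\outcomespace^{T-1}}\sum_{\outcome_T\in\outcomespace}$ and invoking the structural observation, the first $T-1$ factors pull out of the inner sum over $\outcome_T$, which collapses to $1$; what remains is exactly $\sum_{\outcomeseq{1}{T-1}\in\outcomespace^{T-1}}\likelihood{f}{\outcomeseq{1}{T-1}}{\contexttree'(\outcomeseq{1}{T-1})}$, which equals $1$ by the inductive hypothesis.

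There is no real analytic difficulty here; the only thing to take care with is the bookkeeping — checking that the $t$-th factor sees only $\outcome_1,\dots,\outcome_t$ (so that summing out $\outcome_T$ touches just the last factor) and that truncating the tree to depth $T-1$ is compatible with path-tracing. Both are immediate from the definition of a $\mathcal K$-ary tree ($\contexttree_t\colon\outcomespace^{t-1}\to\contextspace$) and \cref{def:likelihood}. Equivalently, one can phrase the same argument as a telescoping computation: summing over $\outcome_T$, then $\outcome_{T-1}$, and so on down to $\outcome_1$, collapses one factor to $1$ at each stage.
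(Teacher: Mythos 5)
Your proof is correct and follows essentially the same route as the paper: induction on the depth, using the observation that the $t$-th factor of the likelihood depends only on $\outcomeseq{1}{t}$ (since $\contexttree_t(\absoutcomeseq)=\contexttree_t(\outcomeseq{1}{t-1})$), so summing over the last label collapses the final factor to $1$ and reduces to the depth-$(T-1)$ subtree. The paper phrases the induction upward (from depth $\depth$ to $\depth+1$) rather than by truncation, but the argument is the same.
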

\begin{proof}[of \cref{lemma:likelihoods sum up to one}]
   This is done by induction on the depth $T$. The key observation is that for any label sequence $\absoutcomeseq$, $\contexttree_t(\absoutcomeseq)=\contexttree_t(\outcome_1,\dots,\outcome_{t-1})$ only depends on the first $t-1$ labels. For $T=1$, any context tree $\contexttree$ is represented by its root node $\contexttree_1(\cdot)=\contexttree_1\in\contextspace$ and hence
   \*[
   \sum_{\outcome_1}
   \likelihood{f}{\outcome_1}{\contexttree_1}
   =
   \sum_{\outcome_1}
   f(\contexttree_1)(\outcome_1)
   =1.
   \]
   Suppose \cref{eqn:likelihoods sum up to one} holds for all context trees $\contexttree$ of depth $T\leq \depth$ and all sequential functions $f$. Now given any context tree $\contexttree=(\contexttree_1,\dots,\contexttree_{\depth+1})$ of depth $T=\depth+1$, we denote its depth $\depth$ subtree $(\contexttree_1,\dots,\contexttree_{\depth})$ by $\contexttree_{[\depth]}$. Then
   \*[
   \sum_{\absoutcomeseq\in\outcomespace^{\depth+1}}
   \likelihood{f}{\absoutcomeseq}{\contexttree(\absoutcomeseq)}
   &=
   \sum_{\outcomeseq{1}{\depth}}
   \sum_{\outcome_{\depth+1}}
   \likelihood{f}{\outcomeseq{1}{\depth+1}}{\contexttree_1,\contexttree_2(\outcome_1),\dots,\contexttree_{\depth+1}(\outcomeseq{1}{\depth})} \\
   &=
   \sum_{\outcomeseq{1}{\depth}}
   \sum_{\outcome_{\depth+1}}
   \likelihood{f}{\outcomeseq{1}{\depth}}{\contexttree_1,\dots,\contexttree_{\depth}(\outcomeseq{1}{\depth-1})}
   \cdot 
   f(\contexttree_1,\dots,\contexttree_{\depth+1}(\outcomeseq{1}{\depth}), \outcomeseq{1}{\depth})(\outcome_{\depth+1}) \\
   &=
   \sum_{\outcomeseq{1}{\depth}}
   \likelihood{f}{\outcomeseq{1}{\depth}}{\contexttree_1,\dots,\contexttree_{\depth}(\outcomeseq{1}{\depth-1})}
   \sum_{\outcome_{\depth+1}}
   f(\contexttree_1,\dots,\contexttree_{\depth+1}(\outcomeseq{1}{\depth}), \outcomeseq{1}{\depth})(\outcome_{\depth+1}) \\
   &=
   \sum_{\outcomeseq{1}{\depth}}
   \likelihood{f}{\outcomeseq{1}{\depth}}{\contexttree_1,\dots,\contexttree_{\depth}(\outcomeseq{1}{\depth-1})} \\
   &=
   \sum_{\absoutcomeseq\in\outcomespace^{\depth}}
   \likelihood{f}{\absoutcomeseq}{\contexttree_{[\depth]}(\absoutcomeseq)}
   =1,
   \]
   where the last step is due to induction. We are done.
\end{proof}

\end{document}